\newtheorem{theorem}{Theorem}
\newtheorem{prop}{Proposition}
\newtheorem{lem}{Lemma}
\newtheorem{defn}{Definition}
\newtheorem{cor}{Corollary}
\newtheorem{fact}{Fact}
\newtheorem{example}{Example}
\newtheoremstyle{PropositionNum}
{\topsep}{\topsep}              %%% space between body and thm
{\itshape}                      %%% Thm body font
{}                              %%% Indent amount (empty = no indent)
{\bfseries}                     %%% Thm head font
{.}                             %%% Punctuation after thm head
{ }                             %%% Space after thm head
{\thmname{#1}\thmnote{ \bfseries #3}}%%% Thm head spec
\theoremstyle{PropositionNum}
\newtheorem{propn}{Proposition}
\begin{document}
%
% paper title
% Titles are generally capitalized except for words such as a, an, and, as,
% at, but, by, for, in, nor, of, on, or, the, to and up, which are usually
% not capitalized unless they are the first or last word of the title.
% Linebreaks \\ can be used within to get better formatting as desired.
% Do not put math or special symbols in the title.
\title{How much does your data exploration overfit? \\Controlling bias via information usage.}
%
%
% author names and IEEE memberships
% note positions of commas and nonbreaking spaces ( ~ ) LaTeX will not break
% a structure at a ~ so this keeps an author's name from being broken across
% two lines.
% use \thanks{} to gain access to the first footnote area
% a separate \thanks must be used for each paragraph as LaTeX2e's \thanks
% was not built to handle multiple paragraphs
%

\author{Daniel Russo and James Zou
\thanks{Daniel Russo is with the division of Decision, Risk and Operations at Columbia University.}% <-this % stops a space
\thanks{James Zou is with Biomedical Data Science and, by courtesy, of Computer Science and Electrical Engineering at Stanford University}% <-this % stops a space
\thanks{Manuscript received January 30, 2017; revised May 30, 2018; accepted September 12, 2019}
\thanks{Part of this work was presented at AISTATS 2016.}}

\newcommand{\E}{\mathbf{E}}
\newcommand{\N}{\mathcal{N}}
\newcommand{\Prob}{\mathbf{P}}
\newcommand{\phibf}{\bm \phi}
\newcommand{\mubf}{\bm \mu}
\newcommand{\Ybf}{\bm Y}
\newcommand{\Hc}{\mathcal{H}}

% make the title area
\maketitle

% As a general rule, do not put math, special symbols or citations
% in the abstract or keywords.
\begin{abstract}
	Modern data is messy and high-dimensional, and it is often not clear a priori what are the right questions to ask. Instead, the analyst typically needs to use the data to search for interesting analyses to perform and hypotheses to test. This is an adaptive process, where the choice of analysis to be performed next depends on the results of the previous analyses on the same data. Ultimately, which results are reported can be heavily influenced by the data. It is widely recognized that this process, even if well-intentioned, can lead to biases and false discoveries, contributing to the crisis of reproducibility in science.  But while any data-exploration renders standard statistical theory invalid,  experience suggests that different types of exploratory analysis can lead to disparate levels of bias, and the degree of bias also depends on the particulars of the data set.  In this paper, we propose a general information usage framework to quantify and provably bound the bias and other error metrics of an arbitrary exploratory analysis. We prove that our mutual information based bound is tight in natural settings, and then use it to give rigorous insights into when commonly used procedures do or do not lead to substantially biased estimation. Through the lens of information usage, we analyze the bias of  specific exploration procedures such as filtering, rank selection and clustering. Our general framework also naturally motivates randomization techniques that provably reduce exploration bias while preserving the utility of the data analysis. We discuss the connections between our approach and related ideas from differential privacy and blinded data analysis, and supplement our results with illustrative simulations. 
\end{abstract}

% Note that keywords are not normally used for peerreview papers.
\begin{IEEEkeywords}
Adaptive data analysis; Data snooping; Mutual information; Over-fitting; 
\end{IEEEkeywords}

% For peer review papers, you can put extra information on the cover
% page as needed:
% \ifCLASSOPTIONpeerreview
% \begin{center} \bfseries EDICS Category: 3-BBND \end{center}
% \fi
%
% For peerreview papers, this IEEEtran command inserts a page break and
% creates the second title. It will be ignored for other modes.
\IEEEpeerreviewmaketitle

\section{Introduction}
% The very first letter is a 2 line initial drop letter followed
% by the rest of the first word in caps.
% 
% form to use if the first word consists of a single letter:
% \IEEEPARstart{A}{demo} file is ....
% 
% form to use if you need the single drop letter followed by
% normal text (unknown if ever used by the IEEE):
% \IEEEPARstart{A}{}demo file is ....
% 
% Some journals put the first two words in caps:
% \IEEEPARstart{T}{his demo} file is ....
% 
% Here we have the typical use of a "T" for an initial drop letter
% and "HIS" in caps to complete the first word.
\IEEEPARstart{M}{odern} data is messy and high dimensional, and it is often not clear a priori what is the right analysis to perform. To extract the most insight, the analyst typically needs to perform exploratory analysis to make sense of the data and identify interesting hypotheses. This is invariably an adaptive process; patterns in the data observed in the first stages of analysis inform which tests are run next and the process iterates. Ultimately, the data itself may influence which results the analyst chooses to report,  introducing \emph{researcher degrees of freedom}: an additional source of over-fitting that isn't accounted for in reported statistical estimates \cite{simmons2011false}. Even if the analyst is well-intentioned, this exploration can lead to false discovery or large bias in reported estimates.

The practice of data-exploration is largely outside the domain of classical statistical theory. Standard tools of multiple hypothesis testing and false discovery rate (FDR) control assume that all the hypotheses to be tested, and the procedure for testing them, are chosen independently of the dataset. Any ``peeking'' at the data before committing to an analysis procedure renders classical statistical theory invalid. 
Nevertheless, data exploration is ubiquitous, and folklore and experience 
suggest the risk of false discoveries differs substantially depending on how the analyst explores the data. This creates a glaring gap between the messy practice of data analysis, and the standard theoretical frameworks used to understand statistical procedures. In this paper, we aim to narrow this gap.  We develop a general framework based on the concept of information usage and systematically study the degree of bias introduced by different forms of exploratory analysis, in which the choice of which function of the data to report is made {\it after} observing and analyzing the dataset.

To concretely illustrate the challenges of data exploration, consider two data scientists Alice and Bob.
 
{\bf Example 1.} \emph{Alice has a dataset of 1000 individuals for a weight-loss biomarker study. For each individual, she has their weight measured at 3 time points and the current expression values of 2000 genes assayed from blood samples. There are three possible weight changes that Alice could have looked at---the difference between time points 1 and 2, 2 and 3 or 1 and 3---but Alice decides ahead of time to only analyze the weight change between 1 and 3. She computes the correlation across individuals between the expression of each gene and the weight change, and reports the gene with the highest correlations along with its $r^2$ value. This is a canonical setting where we have tools for controlling error in multiple-hypothesis testing and the false-discovery rate (FDR). It is well-recognized that even if the reported gene passes the multiple-testing threshold, its correlation in independent replication studies tend to be smaller than the reported correlation in the current study. This phenomenon is also called the Winner's Curse selection bias.}

{\bf Example 2.} \emph{Bob has the same data, and he performs some simple data exploration. He first uses data visualization to investigate the average expression of all the genes across all the individuals at each of the time points, and observes that there is very little difference between time 1 and 2 and there is a large jump between time 2 and 3 in the average expression. So he decides to focus on these later two time points. Next, he realizes that half of the genes always have low expression values and decides to simply filter them out. Finally, he computes the correlations between the expression of the 1000 post-filtered genes and the weight change between time 2 and 3. He selects the gene with the largest correlation and reports its value. Bob's analysis consists of three steps and the results of each step depend on the results and decisions made in the previous steps. This adaptivity in Bob's exploration makes it difficult to apply standard statistical frameworks. We suspect there is also a selection bias here leading to the reported correlation being systematically larger than the real correlations if those genes are tested again. How do we think about and quantify the selection bias and overfitting due to this more complex data exploration? When is it larger or smaller than Alice's selection bias?}

The toy examples of Alice and Bob illustrate several subtleties of bias due to data exploration. First, the adaptivity of Bob's analysis makes it more difficult to quantify its bias compared to Alice's analysis. Second, for the same analysis procedure, the amount of selection bias depends on the dataset. Take Alice for example, if across the population one gene  is substantially more correlated with weight change than all other genes, then we expect the magnitude of Winner's Curse decreases. Third, different steps of data exploration introduce different amounts of selection bias. Intuitively, Bob's visualizing of aggregate expression values in the beginning should not introduce as much selection bias as his selection of the top gene at the last step. 

This paper introduces a mathematical framework to formalize these intuitions and to study selection bias from data exploration. The main tool we develop is a metric of the \emph{bad} information usage in the data exploration. The true signal in a dataset is the signal that is preserved in a replication dataset, and the noise is what changes across different replications. Using Shannon's mutual information, we quantify the degree of dependence between the noise in the data and the choice of which result is reported. We then prove that the bias of an arbitrary data-exploration process is bounded by this measure of its bad information usage. 
This bound provides a quantitative measure of researcher degrees of freedom, and offers a single lens through which we investigate different forms of exploration. 

In Section~\ref{sec:model}, we present a general model of exploratory data-analysis that encompasses the procedures used by Alice and Bob. Then we define information usage and show how it upper and lower bounds various measures of bias and estimation error due to data exploration in Section~\ref{sec:bounds}. In Section~\ref{selective}, we study specific examples of data exploration through the lens of information usage, which gives insight into Bob's practices of filtering, visualization, and maximum selection. Information usage naturally motivates randomization approaches to reduce bias and we explore this in Section~\ref{sec:randomization}. In Section ~\ref{sec:randomization}, we also study a model of a data analyst who--like Bob--interacts adaptively with the data many times before selecting values to report. 

\section{A Model of Data Exploration} \label{sec:model}
We consider a general framework in which a dataset $D$ is drawn from a probability distribution $\mathcal{P}$ over a set of possible datasets $\mathcal{D}$. The analyst is considering a large number $m$ of possible analyses on the data, but wants to report only the most interesting results. She  decides to report the result of a single analysis, and chooses which one \emph{after} observing the realized dataset, $D$, or some summary statistics of $D$. More formally, the data analyst considers $m$ functions $\phi_1,...,\phi_m : \mathcal{D} \rightarrow \mathbb{R}$ of the data, where $\phi_i(D)$ denotes the output of the $i$th analysis on the realization $D$.  Each function $\phi_i$ is typically called an estimator; each $\phi_i(D)$ is an estimate or statistic calculated from the  sampled data, and is a random variable due to the randomness in the realization of $D$. After observing the sampled-data, the analyst chooses to report the value $\phi_{T(D)}(D)$ for $T(D) \in \{1,...,m\}$. The selection rule $T: \mathcal{D} \rightarrow \{1,...,m\}$ captures how the analyst uses the data and chooses which result to report. Because the choice made by $T$ is itself a function of the sampled-data, the reported value $\phi_{T(D)}(D)$ may be significantly biased. For example, $\E[\phi_{T(D)}(D)]$  could be very far from zero even if each fixed function $\phi_{i}(D)$ has zero mean.

Note that although the number of estimators is assumed to be finite, it could be arbitrarily large; in particular $m$ can be exponential in the number of samples in the dataset. The $\phi_i$'s represent the set of all estimators that the analyst \emph{potentially} could have considered during the course of exploration. Also, while for simplicity we focus on the case where exactly one estimate is selected and reported, our results apply in settings where the analyst selects and reports many estimates.\footnote{For example, if the analyst chooses to report $m_0 \leq m$ results,  our framework can be used to bound the average bias of the reported values by letting $T$ be a random draw from the $m_0$ selected analyses.}
%\jzcomment{Add schematic illustration of selection model. }

{\bf Example 1.}  \emph{For Alice, $D$ is a 1000-by-2003 matrix, where the rows are the individuals and the columns are the 2000 genes plus the three possible weight changes. Here there are $m = 2000$ potential estimators and $\phi_i$ is the correlation between the $i$th gene and the weight change between times 1 and 3. Alice's analysis corresponds to the selection procedure $T = \arg\max_i \phi_i$.}

{\bf Example 2.} \emph{Bob has the same dataset $D$. Because his exploration could have led him to use any of the three possible weight-change measures, the set of potential estimators are the correlations between the expression of one gene and one of the three weight changes and there are $2000 \times 3$ such $\phi_i$'s. Bob's adaptive exploration also corresponds to a selection procedure $T$ that takes the dataset and picks out a particular correlation value $\phi_T$ to report.}

{\bf Selection Bias.} Denote the \emph{true} value of estimator $\phi_i$ as $\mu_i \equiv \E[\phi_i(D)]$; this is the value that we expect if we apply $\phi_i$ on multiple independent replication datasets. On a particular dataset $D$, if $T(D)=i$ is the selected test, the output of data exploration is the value $\phi_i(D)$. The output and true-value can be written more concisely as $\phi_T$ and $\mu_T$. The difference $\phi_T - \mu_T$ captures the error in the reported value. We are interested in quantifying the \emph{bias} due to data-exploration, which is defined as the average error $\E[\phi_T - \mu_T]$. We will quantify other metrics of error, such as the expected absolute-error
$\E[|\phi_T - \mu_T|]$ or the squared-error $\E[(\phi_T - \mu_T)^2]$. In each case, the expectation is over all the randomness in the dataset $D$ and any intrinsic randomness in $T$.

\section{Related Work}
There is a large body of work on methods for providing meaningful statistical inference and preventing false discovery. Much of this literature has focused on controlling the false discovery rate in multiple-hypothesis testing where the hypotheses are not adaptively chosen \cite{benjamini1995controlling, benjamini2001control}. Another line of work studies confidence intervals and significance tests for parameter estimates in sparse high dimensional linear regression (see \cite{belloni2014inference, van2014asymptotically, javanmard2014confidence, lockhart2014significance} and the references therein). %Due to an increasing awareness of the prevalence of false discovery in scientific publications, multiple novel lines of research on this topic have emerged in the past
% needed in second column of first page if using \IEEEpubid
%\IEEEpubidadjcol

One recent line of work \cite{fithian2014optimal, Taylor2015} proposes a framework for assigning significance and confidence intervals in selective inference, where model selection and significance testing are performed on the same dataset.  These papers correct for selection bias by explicitly conditioning on the event that a particular model was chosen. While some powerful results can be derived in the selective inference framework (e.g. \cite{taylor2014exact, lee2016exact}),
it requires that the conditional distribution
$
\Prob(\phi_i = \cdot | T=i)
$
is known and can be directly analyzed. This requires that the candidate models and the selection procedure $T$ are mathematically tractable and specified by the analyst before looking at the data.
Our approach does not explicitly adjust for selection bias, but it enables us to formalize insights that apply to very general selection procedures. For example, the selection rule $T$ could represent the choice made by a data-analyst, like Bob,  after performing several rounds of exploratory analysis.

%Our results are largely complimentary, as we study settings where outputted estimates are \emph{not} optimally adjusted for selection effects, and we aim to understand and quantify the resulting bias. \jzcomment{not clear}.

A powerful line of work in computer science and learning theory \cite{bousquet2002stability, poggio2004general, shalev2010learnability} has explored the role of algorithmic stability in preventing overfitting. Related to stability is PAC-Bayes analysis, which provides powerful generalization bounds in terms of KL-divergence \cite{mcallester2013pac}. There are two key differences between stability and our framework of information usage. First, stability is typically defined in the worst case setting and is agnostic of the data distribution. An algorithm is stable if, no matter the data distribution, changing one training point does not affect the predictions too much. Information usage gives more fine-grained bias bounds that depend on the data distribution. For example, in Section \ref{subsec: rank selection} we show the same learning algorithm has lower bias and lower information usage as the signal in the data increases. The second difference is that stability analysis has been traditionally applied to prediction problems---i.e. to bounding generalization loss in prediction tasks. Information usage applies to prediction---e.g. $\phi_i$ could be the squared loss of a classifier---but it also applies to model estimation where $\phi_i$ could be the value of the $i$th parameter. 

Exciting recent work in computer science \cite{blum2015ladder, hardt2014preventing, dwork2015generalization, dwork2015reusable} has leveraged the connection between algorithmic stability and differential privacy to design specific differentially private mechanisms that reduce bias in adaptive data analysis. 
In this framework, the data analyst interacts with a dataset indirectly, and sees only the noisy output of a differentially private mechanism. In Section \ref{sec:randomization}, we discuss how information usage also motivates using various forms of randomization to reduce bias. In the Appendix, we discuss the connections between mutual information and a recently introduced measure called max-information \cite{dwork2015reusable}.
%we discuss these models in more detail and show how a differentially private mechanism effectively controls the mutual information $I(T; \phibf)$. We also discuss the connections between mutual information and the recently introduced max-information \cite{dwork2015reusable}. 
The results from this privacy literature are designed for worst-case, adversarial data analysts. We provide guarantees that vary with the selection rule, but apply to all possible selection procedures, including ones that are not differentially private. 
The results in algorithmic stability and differential privacy are complementary to our framework: these approaches are specific techniques that guarantee low bias for worst-case analysts, while our framework quantifies the bias of any general data-analyst. 

Finally it is also important to note the various practical approaches used in specific settings to quantify or reduce bias from exploration. Using random subsets of data for validation is a common prescription against overfitting. This is feasible if the data points are independent and identically distributed samples. However, for structured data---e.g. time-series or network data---it is not clear how to create a validation set. The bounds on overfitting we derive based on information usage do not assume independence and apply to structured data. Special cases of selection procedures $T$ corresponding to filtering by summary statistics of biomarkers \cite{bourgon2010independent} and selection matrix factorization based on a stability criterion \cite{wu2016stability} have been studied. The insights from these specific settings agree with our general result that low information usage limits selection bias.

\section{Controlling Exploration Bias via Information Usage} \label{sec:bounds}

\textbf{Information usage upper bounds bias.} In this paper, we bound the degree of bias in terms of an information--theoretic quantity: the mutual information between the choice $T(D)$ of which estimate to report, and the actual realized value of the estimates $(\phi_{1}(D),...,\phi_{m}(D))$. We state this result in a general framework, where $\phibf=(\phi_1,...,\phi_m): \Omega \rightarrow \mathbb{R}^m$ and $T : \Omega \rightarrow \{1,..,m\}$ are any random variables defined on a common probability space. Let $\mubf=(\mu_1,...,\mu_m) \triangleq \E[ \phibf]$ denote the mean of $\phibf$. Recall that a real-valued random variable $X$ is $\sigma$--sub-Gaussian if for all $\lambda \in \mathbb{R}$,
$\E[e^{\lambda X}] \leq e^{\lambda^2 \sigma^2 /2} $  so that the moment generating function of $X$ is dominated by that of a normal random variable. Zero--mean Gaussian random variables are sub-Gaussian, as are bounded random variables.

\begin{prop}\label{prop: main result}
	If $\phi_i - \mu_i$ is $\sigma$--sub-Gaussian for each $i \in \{1,...,m\}$, then,
	\[
	|\E \left[\phi_T - \mu_T \right]| \leq \sigma\sqrt{2 I(T; \phibf)},
	\]
	where $I$ denotes mutual information\footnote{The mutual information between two random variables $X, Y$ is defined as $I(X;Y) = \sum_{x, y} \Prob(x,y) \log \left(\frac{\Prob(x,y)}{\Prob(x)\Prob(y)} \right)$.}.
\end{prop}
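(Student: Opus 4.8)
The plan is to exploit a change-of-measure argument anchored on the Donsker--Varadhan variational representation of Kullback--Leibler divergence. The intuition is that if the selection $T$ were statistically independent of the estimates $\phibf$, then $\E[\phi_T - \mu_T] = \sum_i \Prob(T=i)\,\E[\phi_i - \mu_i] = 0$, since each $\phi_i$ is centered at $\mu_i$. Thus any bias must be ``paid for'' by the statistical dependence between $T$ and $\phibf$, and mutual information is precisely the KL divergence between the joint law $P$ of $(T,\phibf)$ and the product of its marginals $Q \triangleq P_T \otimes P_{\phibf}$.

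First I would invoke the Donsker--Varadhan inequality: for any measurable $f$,
\[
\E_P[f] \;\le\; D(P\,\|\,Q) + \log \E_Q[e^{f}] \;=\; I(T;\phibf) + \log \E_Q[e^{f}].
\]
I would apply this with $f(T,\phibf) = \lambda(\phi_T - \mu_T)$ for a free parameter $\lambda > 0$, so that the left-hand side becomes $\lambda\,\E[\phi_T - \mu_T]$, the quantity of interest.

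Next, I would evaluate the log-moment-generating term under $Q$, where $T$ and $\phibf$ are independent by construction. This decoupling lets the moment generating function factor as $\E_Q[e^{\lambda(\phi_T - \mu_T)}] = \sum_i \Prob(T=i)\,\E[e^{\lambda(\phi_i - \mu_i)}]$, and the $\sigma$--sub-Gaussian hypothesis bounds each inner term by $e^{\lambda^2\sigma^2/2}$; since the weights $\Prob(T=i)$ sum to one, the whole expression is at most $e^{\lambda^2\sigma^2/2}$. Substituting back yields the clean inequality $\lambda\,\E[\phi_T - \mu_T] \le I(T;\phibf) + \lambda^2\sigma^2/2$, valid for every $\lambda > 0$.

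Finally I would optimize the resulting bound $\E[\phi_T - \mu_T] \le I(T;\phibf)/\lambda + \lambda\sigma^2/2$ over $\lambda$; the minimizer $\lambda = \sqrt{2 I(T;\phibf)}\,/\,\sigma$ delivers the one-sided bound $\E[\phi_T - \mu_T] \le \sigma\sqrt{2 I(T;\phibf)}$. Repeating the argument with $f = -\lambda(\phi_T - \mu_T)$ gives the matching lower bound, and together they yield the absolute-value statement. The main conceptual obstacle is identifying the correct reference measure $Q$ and recognizing that the sub-Gaussian bound must be applied under $Q$ --- where independence makes the moment generating function factorize --- rather than under the true joint law; once the Donsker--Varadhan step is set up against the product of marginals, the remaining optimization over $\lambda$ is routine.
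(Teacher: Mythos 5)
Your proof is correct. It rests on the same variational representation of KL divergence that the paper uses (Donsker--Varadhan, cited there as Theorem 5.2.1 of Gray), but you deploy it differently: you apply it once, globally, to the joint law of $(T,\phibf)$ against the product of marginals $P_T \otimes P_{\phibf}$, with a single test function $\lambda(\phi_T - \mu_T)$ whose $\lambda$ is optimized only at the end. The paper instead first writes $I(T;\phibf)$ as the $\Prob(T=i)$-weighted mixture of the conditional divergences $D\left(\Prob(\phibf = \cdot \mid T=i)\,\|\,\Prob(\phibf=\cdot)\right)$, projects each one down to the single coordinate $\phi_i$ via the data-processing inequality, and then applies the variational bound separately for each $i$ with its own optimizer $\lambda_i = \Delta_i/\sigma^2$, where $\Delta_i = \E[\phi_i \mid T=i] - \mu_i$; this yields the intermediate inequality $\E[\Delta_T^2] \leq 2\sigma^2 I(T;\phibf)$, from which the bias bound follows by Jensen. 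Your route is the more streamlined one --- it needs neither the data-processing step nor Jensen, the decoupled moment generating function bounds itself immediately by $e^{\lambda^2\sigma^2/2}$ since the weights $\Prob(T=i)$ sum to one, and the two-sidedness of sub-Gaussianity gives the absolute-value statement exactly as you say (the degenerate case $I(T;\phibf)=0$ is handled by letting $\lambda \to 0^+$). What the paper's per-index optimization buys is the strictly stronger conclusion $\E[\Delta_T^2] \leq 2\sigma^2 I(T;\phibf)$, a bound on the mean-squared conditional bias rather than just its mean, and this extra strength is not cosmetic: it is precisely what powers the paper's extension to unequal variances (Proposition~\ref{prop: main result 2}), where the optimal $\lambda_i = \Delta_i/\sigma_i^2$ genuinely varies with $i$ and Cauchy--Schwarz delivers the factor $\sqrt{\E[\sigma_T^2]}$; with your single global $\lambda$, the mixture of moment generating functions $\sum_i \Prob(T=i) e^{\lambda^2 \sigma_i^2/2}$ no longer collapses, and one would be forced to the cruder parameter $\max_i \sigma_i$. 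So for the equal-variance statement as posed, your argument is a clean and arguably preferable alternative; the paper's decomposition is the version that generalizes.
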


%The expectation $\E[\phi_T]$ is taken jointly over the randomness in the realized values of $\phi_1,..,\phi_m$ and the selection procedure $T$. It is the expected value of the reported test statistic when selection and estimation are performed on the \emph{same} dataset. The expectation $\E[\mu_T]$ is taken only over the selection procedure $T$, since $\mu$ is a property of the distribution and not of the realized data $D$. This is the expected value of the reported test statistic when, after selection, a \emph{fresh} dataset is used for estimation. The difference $|\E [\phi_T] - \E[\mu_T]| $ quantifies the bias due to the analyst's selection process $T$. In Section~\ref{unpacking}, we extend Proposition~\ref{prop: main result} to the more general sub-exponential random variables and also derive analogous bounds on $\E[| \phi_T - \mu_T|]$ and $\E[(\phi_T - \mu_T)^2]$ based on $I(T; \phibf)$.

The randomness of $\phibf$ is due to the randomness in the realization of the data $D \sim \mathcal{P}$. This captures how each estimate $\phi_i$ varies if a replication dataset is collected, and hence captures the \emph{noise} in the statistics. The mutual information $I(T; \phibf)$, which we call \textbf{information usage}, then quantifies the dependence of the selection process on the noise in the estimates. Intuitively, a selection process that is more sensitive to the noise (high $I$) is at a greater risk for bias. We will also refer to $I(T; \phibf)$ as  bad information usage to highlight the intuition that it really captures how much information about the noise in the data goes into selecting which estimate to report. We normally think of data analysis as trying to extract the \emph{good} information, i.e. the true signal, from data. The more bad information is used, the more likely the analysis procedure is to overfit.  

When $T$ is determined entirely from the values $\{\phi_1, ..., \phi_m\}$,  mutual information $I(T; \phibf)$ is equal to entropy $H(T)$. This quantifies how much $T$ varies over different independent replications of the data. 

The parameter $\sigma$ provides the natural scaling for the values of $\phi_i$. The condition that $\phi_i$ is $\sigma$-sub-Gaussian ensures that its tail is not too 
heavy\footnote{A random variable $X$ is said to be $\sigma$-sub-Gaussian if $\E\left[  e^{\lambda(X - \E[X])}\right] \leq e^{\sigma^2\lambda^2/2}$ for all $\lambda$.}. In the Appendix, we show how this condition can be relaxed to treat cases where $\phi_i$ is a sub-Exponential random variables (Proposition~\ref{prop: subexponential}) as well as settings where the $\phi_i$'s have different scaling $\sigma_i$'s (Proposition~\ref{prop: main result 2}). 

Proposition~\ref{prop: main result} applies in a very general setting. The magnitude of overfitting depends on the generating distribution of the data set, and on the size of the data, and this is all implicitly captured in by the mutual-information $I(T; \phibf)$.  For example, a common type of estimate of interest is
$
\phi_i = n^{-1} \sum_{j=1}^{n} f_{i}(X_j),
$
the sample average of some function $f_i$ based on an iid sequence $X_1,...,X_n$. Note that if $f_i(X_j) - \E[f_i(X_j)]$ is sub-Gaussian with parameter $\sigma$, then $\phi_i - \mu_i$ is sub-Gaussian with parameter $\sigma/\sqrt{n}$ and therefore
\[
|\E[\phi_{T}] - \E[\mu_T] | \leq \sigma \sqrt{\frac{2I(T; \phibf)}{n}}.
\]

To illustrate Proposition~\ref{prop: main result}, we consider two extreme settings: one where $T$ is chosen independently of the data and one where $T$ heavily depends on the values of all the $\phi_i$'s. The subsequent sections will investigate the applications of information usage in depth in settings that interpolate between these two extremes. 

\textbf{Example: data-agnostic exploration.} 
Suppose $T$ is independent of $\phibf$. This may happen if the choice of which estimate to report is decided ahead of time and cannot change based on the actual data. It may also occur when the dataset can be split into two statistically independent parts, and separate parts are reserved for data-exploration and estimation.  In such cases, one expects there is no bias because the selection does not depend on the actual values of the estimates. This is reflected in our bound: since $T$ is independent of $\phibf$, $I(T; \phibf) =0$ and therefore $\E [\phi_{T}]= \E[\mu_{T}]$. 

\textbf{Example: maximum of Gaussians.}
Suppose each $\phi_i$ is an independent sample from the zero-mean normal $\mathcal{N}(0,\sigma^2)$. If $T= \underset{1\leq i \leq m}{\arg\max} \phi_{i}$, then $I(T; \phibf)= H(T) = \log(m)$ because all $m$ $\phi_i$'s are symmetric and have equal chance of being selected by $T$. Applying Proposition~\ref{prop: main result} gives
$
\E [\phi_T - \mu_T] = \E[\phi_T] \leq \sigma\sqrt{2 \log(m)}.
$
This is the well known inequality for the maximum of Gaussian random variables. Moreover, it is also known that this equation approaches equality as the number of Gaussians, $m$, increases, implying that the information usage $I(T; \phibf)$ precisely measures the bias of max-selection in this setting. 
It is illustrative to also consider a more general selection $T$ which first ranks the $\phi_i$'s from the largest to the smallest and then uniformly randomly selects one of the $m_0$ largest $\phi_i$'s to report. Here $I(T; \phibf) = H(T) - H(T|\phibf) $, where $H(T) = \log m$ (by the symmetry of $\phi_i$ as before) and $H(T | \phibf) = \log m_0$ (since given the values of $\phi_i$'s there is still uniform randomness over which of the top $m_0$ is selected). We immediately have the following corollary. 
\begin{cor}\label{cor: orderstat}
	Suppose for each $i\in \{1, ..., m\}$, $\phi_i$ is a zero-centered sub-Gaussian random variable with parameter $\sigma$. Let $\phi_{(1)} \geq \phi_{(2)} \geq ... \geq \phi_{(m)}$ denote the values of $\phi_i$ sorted from the largest to the smallest. Then
	\[
	\E \left[ \frac{1}{m_0} \sum_{i = 1}^{m_0} \phi_{(i)} \right] \leq \sigma \sqrt{2 \log \frac{m}{m_0}}.
	\]
\end{cor}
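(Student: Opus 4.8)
My plan is to recognize the left-hand side as the expected reported value of a specific randomized selection rule and then invoke Proposition~\ref{prop: main result}, exactly as foreshadowed in the paragraph preceding the statement. Concretely, I would define $T$ to be the rule described there: given the realized vector $\phibf$, rank its coordinates and let $T$ be drawn uniformly at random from the indices of the $m_0$ largest values (breaking ties arbitrarily, an event of probability zero when the $\phi_i$ admit densities). With this $T$, the law of total expectation gives $\E[\phi_T \mid \phibf] = \tfrac{1}{m_0}\sum_{i=1}^{m_0}\phi_{(i)}$, so that $\E[\phi_T] = \E\big[\tfrac{1}{m_0}\sum_{i=1}^{m_0}\phi_{(i)}\big]$ is precisely the quantity to be bounded. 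Since each $\phi_i$ is zero-centered, $\mubf = 0$ and hence $\mu_T = 0$, so $\E[\phi_T - \mu_T] = \E[\phi_T]$.

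The second step is to control the information usage $I(T;\phibf) = H(T) - H(T \mid \phibf)$. The first term satisfies $H(T) \leq \log m$ simply because $T$ takes at most $m$ values; this replaces the symmetry-based equality $H(T)=\log m$ used in the Gaussian example, and an upper bound is all that is needed here. For the second term, conditioning on $\phibf$ fixes the set of top-$m_0$ indices, and on that event $T$ is uniform over those $m_0$ indices by construction, so $H(T \mid \phibf) = \log m_0$. Subtracting yields $I(T;\phibf) \leq \log m - \log m_0 = \log(m/m_0)$.

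Finally, I would apply Proposition~\ref{prop: main result}, whose hypotheses hold because each $\phi_i - \mu_i = \phi_i$ is $\sigma$-sub-Gaussian and the proposition explicitly permits $T$ to carry its own intrinsic randomization. This gives
\[
\E\Big[\tfrac{1}{m_0}\sum_{i=1}^{m_0}\phi_{(i)}\Big] = \E[\phi_T - \mu_T] \leq \big|\E[\phi_T - \mu_T]\big| \leq \sigma\sqrt{2 I(T;\phibf)} \leq \sigma\sqrt{2\log(m/m_0)},
\]
which is the claim. There is no deep obstacle — the corollary is essentially immediate once the right $T$ is identified — so the only point warranting care is the handling of ties in the ranking together with the conditional-entropy computation: I need the top-$m_0$ set to be a deterministic function of $\phibf$ (true almost surely absent ties) so that, conditioned on $\phibf$, the distribution of $T$ is genuinely uniform on exactly $m_0$ atoms, yielding the clean value $H(T\mid\phibf)=\log m_0$. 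Any ambiguity there is what would need attention, though it does not affect the stated inequality.
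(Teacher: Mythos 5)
Your proof is correct and is essentially the paper's own argument: the paragraph immediately preceding the corollary defines exactly this randomized top-$m_0$ selection rule, computes $H(T \mid \phibf) = \log m_0$, and invokes Proposition~\ref{prop: main result} with $I(T;\phibf) = H(T) - H(T\mid\phibf) \leq \log(m/m_0)$. If anything you are slightly more careful than the paper, which asserts $H(T) = \log m$ by symmetry (valid in its i.i.d.\ Gaussian example but not under the corollary's general hypotheses, which allow non-identically-distributed, dependent $\phi_i$), whereas your inequality $H(T) \leq \log m$ is exactly what the general statement requires.
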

In Appendix \ref{sec_app:lowerbound}, we show that this bound is also tight as $m$ and $m_0$ increase.

\textbf{Information usage bounds other metrics of exploration error.} So far we have discussed how mutual information upper bounds the bias $| \E\left[ \phi_T - \mu_T  \right]|$. In different application settings, it might be useful to control other measures of exploration error, such as the absolute error deviation $\E\left[ | \phi_T - \mu_T | \right]$ and the squared error $\E\left[ (\phi_T - \mu_T)^2 \right] $. 

Here we extend Proposition~\ref{prop: main result} and show how $\sqrt{I(T; \phibf)}$ and $I(T; \phibf)$ can be used to bound  absolute error deviation and squared error. Note that due to inherent noise even in the absence of selection bias,  the absolute or squared error can be of order $\sigma$ or $\sigma^2$,  respectively. The next result effectively bounds the additional error introduced by data-exploration in terms of information-usage.  
\begin{prop}\label{prop: absolute and squared error}
	Suppose for each $i \in \{1,...,m\}$, $\phi_i - \mu_i$ is $\sigma$ sub-Gaussian. Then
	\[
	\E[| \phi_T - \mu_T |] \leq \sigma + c_1 \sigma \sqrt{2 I(T; \phibf)}
	\]
	and
	\[
	\E[ (\phi_T - \mu_T)^2]  \leq 1.25\sigma^2 + c_2\sigma^2 I(T; \phibf) .
	\]
	where $c_1<36$ and $c_2\leq 10$ are universal constants. 
\end{prop}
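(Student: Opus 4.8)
The plan is to derive both inequalities from the same change-of-measure (Donsker--Varadhan) argument that underlies Proposition~\ref{prop: main result}, applied not to $\phi_T-\mu_T$ itself but to suitable transformations of the deviations. The variational inequality I would use is this: for the joint law $P$ of $(T,\phibf)$ with product-of-marginals reference $Q$ (under which $T$ is independent of $\phibf$), every function $g$ satisfies $\E[g(T,\phibf)] \le I(T;\phibf) + \log \E_Q[e^{g}]$, with $I(T;\phibf)$ arising as the relative entropy $D_{KL}(P\|Q)$.

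For the squared error I would take $g(t,\phibf)=\lambda(\phi_t-\mu_t)^2$ with $\lambda\in[0,1/(2\sigma^2))$. The one new ingredient is an exponential-moment bound for the square of a sub-Gaussian variable: writing $e^{\lambda x^2}=\E_{Z\sim\mathcal N(0,1)}[e^{\sqrt{2\lambda}\,xZ}]$ and applying $\sigma$-sub-Gaussianity of $\phi_i-\mu_i$ inside the Gaussian average gives $\E[e^{\lambda(\phi_i-\mu_i)^2}]\le(1-2\lambda\sigma^2)^{-1/2}$. Since under $Q$ the selected index is independent of the estimates, $\E_Q[e^{\lambda(\phi_T-\mu_T)^2}]$ inherits the same bound, so the variational inequality yields $\lambda\,\E[(\phi_T-\mu_T)^2]\le I(T;\phibf)-\tfrac12\log(1-2\lambda\sigma^2)$. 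Choosing the fixed value $\lambda=0.1/\sigma^2$ (so that $2\lambda\sigma^2=0.2$) makes the constant term $-\tfrac{1}{2\lambda}\log(0.8)\approx 1.12\,\sigma^2\le 1.25\,\sigma^2$ and the coefficient of $I(T;\phibf)$ equal to $1/\lambda=10\sigma^2$, giving the claim with $c_2\le 10$; optimizing $\lambda$ in $I(T;\phibf)$ would only improve constants.

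For the absolute error I would instead reduce directly to Proposition~\ref{prop: main result}. Set $a_i\triangleq\E\,|\phi_i-\mu_i|$ and note $a_i\le\sqrt{\E[(\phi_i-\mu_i)^2]}\le\sigma$, since a $\sigma$-sub-Gaussian variable has variance at most $\sigma^2$. Decompose $\E\,|\phi_T-\mu_T|=\E[a_T]+\E[\psi_T]$, where $\psi_i\triangleq|\phi_i-\mu_i|-a_i$ is mean-zero. The first term is at most $\sigma$, which supplies the clean leading constant of $1$ in the statement. For the second term, $\bm{\psi}=(\psi_1,\dots,\psi_m)$ is a deterministic function of $\phibf$, so the data-processing inequality gives $I(T;\bm{\psi})\le I(T;\phibf)$; applying Proposition~\ref{prop: main result} to the mean-zero vector $\bm{\psi}$ then bounds $|\E[\psi_T]|$ by $c_1\sigma\sqrt{2I(T;\phibf)}$, where $c_1\sigma$ is a sub-Gaussian parameter for each $\psi_i$.

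The main obstacle is precisely this last point: showing that the centered absolute deviation $\psi_i=|\phi_i-\mu_i|-a_i$ is sub-Gaussian with parameter $c_1\sigma$ for an explicit universal $c_1<36$. Its upper tail is controlled by $\Prob(|\phi_i-\mu_i|>t)\le 2e^{-t^2/2\sigma^2}$, and centering merely shifts by $a_i\le\sigma$ while the lower tail is bounded below by $-a_i$; the delicate issue is that the naive bound $\E[e^{\lambda|\phi_i-\mu_i|}]\le 2e^{\lambda^2\sigma^2/2}$ cannot absorb the factor $2$ near $\lambda=0$, so one must instead pass through the tail bound and a tail-to-MGF conversion. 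That conversion is exactly what inflates the constant to the stated $c_1<36$, whereas everything else is routine optimization over $\lambda$ and bookkeeping of constants.
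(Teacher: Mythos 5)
Your proposal is correct, and it splits into one part that mirrors the paper and one that takes a genuinely more direct route. For the absolute error, your argument is essentially the paper's own: the same decomposition $|\phi_i-\mu_i| = a_i + \psi_i$ with $a_i = \E|\phi_i-\mu_i| \le \sigma$, the same data-processing step $I(T;\bm\psi)\le I(T;\phibf)$, and the same crux lemma that the centered absolute deviation $\psi_i$ is sub-Gaussian with parameter $c_1\sigma$, $c_1<36$. You correctly diagnose why the naive MGF bound $\E[e^{\lambda|\phi_i-\mu_i|}]\le 2e^{\lambda^2\sigma^2/2}$ fails (the factor $2$ near $\lambda=0$), but you leave the tail-to-MGF conversion as an invoked black box; the paper executes it via two quotable facts (tail domination by $c\,\Prob(|Z|\ge s)$ with $Z\sim\N(0,\tau^2)$ implies sub-Gaussianity with parameter $\sqrt{2}\,c\,\tau$, and every $\sigma$-sub-Gaussian variable satisfies $\Prob(|Y|\ge s)\le\sqrt{8}e\,\Prob(|Z|\ge s)$ for $Z\sim\N(0,2\sigma^2)$), together with one trick you did not mention: the lower-deviation event is dominated via $\Prob(|U|\le\gamma-s)\le\Prob(|Z|\ge s)/\Prob(|Z|\ge\gamma)$, and $\gamma\le\sigma$ gives $\Prob(|Z|\ge\gamma)>0.1$, which is exactly where the constant $2(\sqrt{8}e+10)<36$ comes from. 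This is a gap in execution rather than in the idea, since the route you name is the one that works.

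For the squared error your route genuinely differs from the paper's and is arguably cleaner. The paper centers $Y_i^2 = (\phi_i-\mu_i)^2$ around $\gamma_i$, shows via the bound $\E[e^{\lambda Y^2/2\sigma^2}]\le(1-\lambda)^{-1/2}\le e^{10\lambda^2}$ on $[0,0.1)$ that $Y_i^2-\gamma_i$ is sub-exponential with parameters $(\sqrt{5}\sigma^2,10\sigma^2)$, and then invokes its sub-exponential extension of the main bound plus data processing through $\Ybf^2$. You instead apply the Donsker--Varadhan inequality directly to $g=\lambda(\phi_T-\mu_T)^2$ at the joint-versus-product level, using the closed-form bound $\E[e^{\lambda(\phi_i-\mu_i)^2}]\le(1-2\lambda\sigma^2)^{-1/2}$ (your Gaussian-integral derivation of it is valid), and a single fixed choice $\lambda=0.1/\sigma^2$. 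Since the paper's sub-exponential proposition is itself proved by the same variational fact with a restricted $\lambda$, the two arguments rest on identical machinery, but yours skips the centering, the sub-exponential bookkeeping, and the extra data-processing step, and it even yields a marginally better additive constant ($-5\sigma^2\log(0.8)\approx 1.12\sigma^2$ versus $1.25\sigma^2$) with the same $c_2=10$.
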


%\section{Information Usage Also Lower Bounds Bias} \label{sec:lowerbound}

\textbf{Information usage also lower bounds error.}
In the maximum of Gaussians example, we have already seen a setting where information usage precisely quantifies bias. Here we show that this is a more general phenomenon by exhibiting a much broader setting in which mutual-information lower bounds expected-error. This complements the upper bounds of Proposition~\ref{prop: main result} and Proposition~\ref{prop: absolute and squared error}. 

Suppose $T=\arg\max_{i} \phi_i$ where $\phibf \sim \mathcal{N}(\mubf, I)$.  Because $T$ is a deterministic function of $\phibf$, mutual information is equal to entropy.  The probability $T=i$ is a complicated function of the mean vector $\mubf$, and the entropy $H(T)$ provides a single number measuring the uncertainty in the selection process. Proposition \ref{prop: absolute and squared error} upper bounds the average squared distance between $\phi_T$ and $\mu_T$ by entropy. The next proposition provides a matching lower bound, and therefore establishes a fundamental link between information usage and selection-risk in a natural family of models. 
\begin{prop}\label{prop: lower bound} Let $T=\arg\max_{1\leq i \leq m} \phi_i$ where $\phibf \sim \mathcal{N}(\mubf, I)$. There exist universal numerical constants $c_1=1/8$,  $c_2 < 2.5$ , $c_3=10$, and $c_4=1.5$ such that for any $m\in \mathbb{N}$ and $\mubf \in \mathbb{R}^{m}$,   
	\[
	c_1 H(T)- c_2 \leq \E[(\phi_T - \mu_T)^2]  \leq c_3H(T) + c_4 . 
	\] 
\end{prop}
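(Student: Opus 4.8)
The plan is to treat the two inequalities separately, since the upper bound is essentially free and all the work is in the lower bound. For the upper bound, observe that with $\phibf\sim\mathcal N(\mubf,I)$ each coordinate $\phi_i-\mu_i$ is standard Gaussian, hence $1$-sub-Gaussian (so $\sigma=1$), and that $T=\arg\max_i\phi_i$ is almost surely a deterministic function of $\phibf$, whence $H(T\mid\phibf)=0$ and $I(T;\phibf)=H(T)$. Substituting $\sigma=1$ and $I(T;\phibf)=H(T)$ into the squared-error bound of Proposition~\ref{prop: absolute and squared error} gives $\E[(\phi_T-\mu_T)^2]\le 1.25+10\,H(T)\le 1.5+10\,H(T)$, which is exactly the right-hand inequality with $c_3=10$ and $c_4=1.5$.

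For the lower bound I would decompose both sides index by index. Writing $Z_i:=\phi_i-\mu_i\sim\mathcal N(0,1)$ and $p_i:=\Prob(T=i)$, we have $\E[(\phi_T-\mu_T)^2]=\sum_i\E[Z_i^2\mathbf 1\{T=i\}]$ and $H(T)=\sum_i p_i\log(1/p_i)$, so it suffices to prove a per-index inequality of the form $\E[Z_i^2\mathbf 1\{T=i\}]\ge \tfrac18\,p_i\log(1/p_i)-c\,p_i$ and sum over $i$ (the leftover $c\sum_i p_i=c$ supplies $c_2$, and the regime of large $p_i$, where $\log(1/p_i)=O(1)$, is handled trivially by the additive term). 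The structural fact driving this is that $Z_i$ is independent of the competing maximum $W_i:=\max_{j\ne i}\phi_j-\mu_i$, and that $\{T=i\}=\{Z_i\ge W_i\}$. Conditioning on $W_i=w$, the exact truncated-second-moment identity $\E[Z_i^2\mathbf 1\{Z_i\ge w\}]=\bar\Phi(w)+w\varphi(w)$ (with $\varphi,\bar\Phi$ the standard normal density and survival function) together with the Gaussian bound $\bar\Phi(w)\le\varphi(w)/w$ for $w>0$ yields $\E[Z_i^2\mathbf 1\{Z_i\ge w\}]\ge(1+w^2)\bar\Phi(w)$, so that for any threshold $t\ge0$,
\[
\E[Z_i^2\mathbf 1\{T=i\}]\;\ge\; t^2\,\E\bigl[\bar\Phi(W_i)\mathbf 1\{W_i\ge t\}\bigr].
\]
The plan is then to take $t\asymp\sqrt{\tfrac12\log(1/p_i)}$, using the tail estimate $\bar\Phi(w)\le\tfrac12 e^{-w^2/2}$ to convert the selection probability into a threshold height, and to argue that at least half of the selection probability $p_i=\E[\bar\Phi(W_i)]$ is carried by the event $\{W_i\ge t\}$, i.e. $\E[\bar\Phi(W_i)\mathbf 1\{W_i\ge t\}]\ge p_i/2$. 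Combining the two gives $\E[Z_i^2\mathbf 1\{T=i\}]\gtrsim t^2 p_i\gtrsim\tfrac14 p_i\log(1/p_i)$, with the generous constant $\tfrac18$ absorbing all losses.

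The hard part is exactly the step $\E[\bar\Phi(W_i)\mathbf 1\{W_i\ge t\}]\ge p_i/2$, namely that the selection probability is dominated by realizations in which the competing maximum $W_i$ is of order $\sqrt{\log(1/p_i)}$. This cannot be obtained from a pointwise estimate followed by Jensen: the natural pointwise bound produces the concave function $q\mapsto q\log(1/q)$ evaluated at $\bar\Phi(W_i)$, and Jensen then points the wrong way, so no convexification can recover the crucial $\log(1/p_i)$ factor; the information must come from the law of $W_i$ itself. Moreover the crude estimate $\E[\bar\Phi(W_i)\mathbf 1\{W_i<t\}]\le\Prob(W_i<t)$ is too lossy---one can already see it fail in the two-Gaussian case $\mu_1=0$, $\mu_2=a\to\infty$, where $\Prob(W_1<t)$ outweighs $p_1$. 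What is true, and what I would prove, is the sharper statement that the integrand $\bar\Phi(w)$ weighted by the law of $W_i$ is maximized well inside $\{w\ge t\}$, so that the sub-threshold mass is a vanishing fraction of $p_i$. Because $W_i$ is a maximum of independent Gaussians it concentrates, and standard Gaussian tail and concentration estimates make this rigorous. This is the only delicate ingredient; once it is in place, the per-index inequality and the subsequent summation are routine bookkeeping, with the constants chosen so that $c_1=1/8$ and $c_2<2.5$.
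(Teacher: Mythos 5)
Your upper bound is fine and is exactly the paper's: with $\sigma=1$ and $T$ almost surely a deterministic function of $\phibf$, so $I(T;\phibf)=H(T)$, Proposition~\ref{prop: absolute and squared error} gives the right-hand inequality. The lower bound, however, has a genuine gap at precisely the step you flag as delicate. The claim $\E[\bar{\Phi}(W_i)\mathbf{1}\{W_i\ge t\}]\ge p_i/2$ with $t=\sqrt{\tfrac12\log(1/p_i)}$ is asserted rather than proved, and the mechanism you invoke --- ``$W_i$ is a maximum of independent Gaussians so it concentrates'' --- is the wrong one. Concentration (the max is $1$-Lipschitz in the Gaussian vector, hence $1$-sub-Gaussian around its mean) controls $O(1)$ fluctuations of $W_i$, but in the worst configuration, a single competitor at gap $M$ so that $W_i\sim\N(M,1)$, the selection probability $p_i\approx e^{-M^2/4}$ is carried by the large-deviation event $W_i\approx M/2$, a constant fraction below $\E[W_i]$ and far outside any concentration window. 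What your step actually requires is a two-sided saddle-point comparison, uniform over all mean vectors $\mubf$: a lower bound on $p_i$ via the ``meet in the middle'' route, and an upper bound on the sub-threshold mass via the constrained problem $\inf_{w\le t}\{w^2/2+\log(1/\Prob(W_i\approx w))\}$. Moreover, with your specific threshold constant the margin is razor-thin: in the single-competitor case the sub-threshold exponent is $[t^2+(M-t)^2]/2\approx 0.271\,M^2$ against $\log(1/p_i)\approx 0.25\,M^2$, so polynomial prefactors or crude union bounds can easily destroy the factor $1/2$. Your own slack does permit a repair --- taking $t=\tfrac12\sqrt{\log(1/p_i)}$ still yields $c_1=1/8$ via $\E[Z_i^2\mathbf{1}\{T=i\}]\ge t^2 p_i/2$ and improves the worst-case exponent to $\approx 0.3125\,M^2$ --- but the uniform comparison over heterogeneous competitor configurations still has to be carried out, and nothing in the proposal does it.

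It is worth noting that the paper sidesteps this difficulty entirely by arguing in the dual direction: rather than lower-bounding each $\E[Z_i^2\mathbf{1}\{T=i\}]$, it upper-bounds each surprise term $\log(1/\Prob(T=i))$. Indices with $\E[M_{-i}]<\mu_i+1$ have $\Prob(T=i)$ bounded below by a universal constant, capping their surprise; for the rest, writing $\Prob(T=i)=\int\Prob(\phi_i-\mu_i>x)\,\Prob(M_{-i}-\mu_i\in dx)$ and applying the Gaussian lower-tail estimate together with Jensen's inequality (concavity of $\log$) gives $\log(1/\Prob(T=i))\le 5+4(\E[M_{-i}]-\mu_i)^2$, whence $H(T)\le c+4\,\E[(\E[M]-\mu_T)^2]$; the $1$-sub-Gaussianity of $M=\phi_T$ then converts $\E[M]$ into $\phi_T$ at the cost of additive constants. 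This avoids locating where the selection probability mass sits, which is exactly the question your route must answer. Your per-index decomposition would, if completed, prove a sharper statement than the paper's, but as written the pivotal step is missing, and the heuristic offered for it does not supply the needed large-deviations argument.
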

Recall that the entropy of $T$ is defined as \[
H(T) = \sum_{i}\Prob(T=i) \log\left(\frac{1}{\Prob(T=i)}\right).
\]
Here $\log(1/ \Prob(T=i))$ is often interpreted as the ``surprise'' associated with the event $\{T=i\}$ and entropy is interpreted as expected surprise in the realization of $T$. Proposition~\ref{prop: lower bound} relies on a link between the surprise associated with the selection of statistic $i$, and the squared error $(\phi_i-\mu_i)^2$ on events when it is selected.  

To understand this result, it is instructive to instead consider a simpler setting; imagine $m=2$ , $\phi_1=x$ always, $\phi_2 \sim \mathcal{N}(0, 1)$, and the selection rule is $T=\arg\max_{i} \phi_i$. When $x>>0$ is large, 
\[ 
\log(1/\Prob(T=2) ) = \log(1/ \Prob(\phi_2 \geq x)) \approx x^{2}/2
\]
and so the surprise associated with the event $\{T=2\}$ scales with the squared gap between the selection threshold $x$ and the true mean of $\phi_2$. One can show that as $x\rightarrow \infty$,
\begin{eqnarray*}
	H(T_x) &\sim& \Prob(T_x=2)\log(1/\Prob(T_x=2)) \\
	&\sim&  \Prob(T_x=2)x^{2} \\
	&\sim& \E[(\phi_{T_x} - \mu_{T_x})^2] 
\end{eqnarray*}
where $T_x$ denotes the selection rule with threshold $x$ and $f(x) \sim g(x)$ if $f(x)/g(x) \to 1$ as $x\to \infty$.  

In the Appendix, we investigate additional threshold-based selection policies applied to Gaussian and exponential random variables, allowing for arbitrary correlation among the $\phi_i$'s, and show that $H(T)$ also provides a natural lower bound on estimation-error. 

%Along with our upper bound, this describes a natural family of problems in which $\E[(\phi_T - \mu_T)^2] = \Theta(H(T))$. The main restriction for our lower bound to hold is that both the mean of $M_{-i}$ exceeds $\mu_i$, and the median of $M_{-i}$ is a standard deviation above $\mu_i$.  This is a appropriate in a high dimensional setting where the maximum of a large number of random variables is likely to exceed any particular random variable. Under such a restriction, no single random variable could have more than a 10\% chance of being maximal. It's worth emphasizing, however, that this regime still allows for an extremely wide range of means $\mubf$, and in particular accommodates settings where the probability a particular variable is selected dwarfs the probability some different variable is selected. In the Supplement, we investigate additional threshold selection policies of Gaussians and Exponential random variables, allowing for arbitrary correlation among the $\phi_i$'s, and show that $H(T)$ also provides a natural lower bound on their bias. 

%\input{quantifying_bias}
\section{When is bias large or small? The view from information usage} \label{selective}
In this section, we consider several simple but commonly used procedures of feature selection and parameter estimation. In many applications, such feature selection and estimation are performed on the same dataset. Information usage provides a unified framework to understand selection bias in these settings. Our results inform when these these procedures introduce significant selection bias and when they do not. The key idea is to understand which structures in the data and the selection procedure make the mutual information $I(T ; \phibf)$  significantly smaller than the worst-case value of $\log(m)$. We provide several simulation experiments as illustrations.

\subsection{Filtering by marginal statistics}\label{subsec: filtering}
Imagine that $T$ is chosen after observing some dataset $D$. This dataset determines the values of  $\phi_1,...,\phi_m$, but may also contain a great deal of other information. Manipulating the mutual information shows
\begin{eqnarray*}
	I(T; \phibf) &=& H(T) - H(T| \phibf) \\
	&\leq& H(T) - I(T;  D |  \phibf) \\
	&=& (1-\alpha)H(T)
\end{eqnarray*}

where $\alpha = I(T;  D |  \phibf)/ H(T)$ captures the fraction of the uncertainty in $T$ that is explained by the data in $D$ beyond the values $\phi_1,...,\phi_m$.  In many cases, instead of being a function of $\phibf$, the choice $T$ is a function of data that is more loosely coupled with $\phibf$, and therefore we expect that $I(T; \phibf)$ is much smaller than $H(T)$ (which itself can be less than $\log(m)$).

One setting when the selection of $T$ depends on the statistics of $D$ that are only loosely coupled with $\phibf$ is variance based feature selection \cite{ewasher, variancefilter}. Suppose we have $n$ samples and $m$ bio-markers. Let $X_{i,j}$ denote the value of the $i$-th bio-marker on sample $j$. Here $D = \{X_{i,j}\}$. Let $\phi_i = n^{-1} \sum_{j=1}^{n} X_{i,j}$ be the empirical mean values of the $i$-th biomarker. We are interested in identifying the markers that show significant non-zero mean. Many studies first perform a filtering step to select only the markers that have high variance and remove the rest. The rationale is that markers that do not vary could be measurement errors or are likely to be less important. A natural question is whether such variance filtering introduces bias.

In our framework, variance selection is exemplified by the selection rule $T = \arg\max_{i} V_i$ where $V_i =  \sum_{j=1}^{n} (X_{i,j}- \phi_i)^2$. Here we consider the case where only the marker with the largest variance is selected, but all the discussion applies to softer selection when we select the $K $ markers with the largest variance. 
The resulting bias is $\E[\phi_T - \mu_T]$. Proposition \ref{prop: main result} states that variance selection has low bias if $I(T; \phibf)$ is small, which is the case if the empirical means and variances, $\phi_i$ and $V_i$, are not too dependent. In fact, when the $X_{i,j}$ are i.i.d. Gaussian samples, $\phi_1,...,\phi_m$ are independent of $V_1,...,V_m$ . Therefore $I(T; \phibf)=0$ and we can guarantee that there is no bias from variance selection.

This illustrates an important point that the bias bound depends on $I(T; \phibf)$ instead of $I(T; D)$. The selection process $T$ may depend heavily on the dataset $D$ and $I(T;D)$ could be large. However as long as the statistics of the data used for selection have low mutual information with the estimators $\phi_i$, there is low bias on the reported values. 

We can apply our framework to analyze biases that arise from feature filtering more generally. A common practice in data analysis is to reduce multiple hypotheses testing burden and increase discovery power by first filtering out covariates or features that are unlikely to be relevant or interesting  \cite{bourgon2010independent}. This can be viewed as a two-step procedure. For each feature $i$, two marginal statistics are computed from the data, $\psi_i$ and $\phi_i$. Filtering corresponds to a selection protocol on $\psi_i$. Since $I(T; \phibf) \leq I(\bm \psi; \phibf)$, if the $\psi_i$'s do not reveal too much information about $\phi_i$'s then the filtering step does not create too much bias.
In our example above, $\psi_i$ is the sample variance and $\phi_i$ is the sample mean of feature $i$. General principles for creating independent $\psi_i$ and $\phi_i$ are given in \cite{bourgon2010independent}.

More generally, suppose the dataset determines two  sets of statistics $\phibf=(\phi_1,..,\phi_m)$ and ${\bm \psi}=(\psi_i,...,\psi_{m'})$. We report $\phi_T$ and want to quantify its bias, but the selection rule depends only on the $\psi_i$'s,  i.e. $T=f(\psi_i)$ can be expressed as a function of the $\psi_i$'s.  This captures the general situation where data processing and feature selection uses one set of summary statistics ($\psi$) and we want to quantify the bias introduced in these steps on another set of statistics ($\phi$). 
The dependence structure can be expressed as a Markov chain $T -{\bm \psi} -\phibf$, where this notation indicates that conditioned on ${\bm \psi}$, $T$ is independent of $\phibf$.
The data processing inequality implies $I(T; \phibf) \leq I(\phibf; \psi)$, which--combined with our bound--formalizes the intuition that the selection rule cannot be substantially biased when $\phibf$ and ${\bm \psi}$ share limited information in common. However, this bound may be quite loose. We instead turn to strong data processing inequalities. 
\begin{defn}
	A pair of random variables $(X,Y)$ satisfies a strong data-processing inequality with contraction coefficient $\eta \in [0,1]$ if for all random variables $U$ with $U-X-Y$ 
	\begin{equation*}\label{eq: STDP}
	I(U;Y) \leq \eta I(U;X)
	\end{equation*}
	Let $\eta_{XY}$ be the smallest constant such that \eqref{eq: STDP} is satisfied for all valid $U$.  
\end{defn}

The contraction coefficient satisfies several natural properties. First, it \emph{tensorizes} \cite{anantharam2013maximal}. That is, if $(X_1,Y_1),...(X_n, Y_n)$ is an independent sequence, then $\eta_{XY} = \max_{i} \eta_{X_{i}Y_{i}}$. Also, if $X,Y$ and $Z$ follow a Markov chain $X-Y-Z$ then $\eta_{XZ}\leq \eta_{YZ}$. 

\textbf{Example.} Suppose $D=(X_1,...,X_n)$ consists of $n$ iid random variables and ${\bm \psi}=(X_1,...,X_k)$ is a subsample of $k<n$ data points. Then $\eta_{{\bm \psi} \phibf} \leq \eta_{{\bm \psi} D} \leq  k/n$ \cite{kamath2015strong}.

\textbf{Example.}(Noisy Channels) If $(X,Y)$ corresponds to a binary symmetric channel with error rate $\delta$ then $\eta_{XY}=(1-2\delta)^2$ \cite{polyanskiy2016dissipation}.

Note that the contraction coefficient $\eta_{\phibf {\bm \psi}}$ depends only on the distribution of $\phibf$ and ${\bm \psi}$, and not on the selection rule $T$. A benefit of our mutual information framework for bounding the exploration bias is that we can immediately apply Strong Data Processing to obtain tighter bounds on bias:   %(i.e. \emph{channel}) $\Prob(\mathbb{\phi}| \mathbb{\psi})$ and does not depend on $T$. 
\begin{prop} \label{prop:strongdataproc}
	Suppose $\phi_i - \mu_i$ is $\sigma$ sub-Gaussian for each $i\in \{1,..,,m\}$. Then if the selection $T$ is independent of $\phibf$ conditioned on ${\bm \psi}$, 
	\[
	\E[\phi_T - \mu_T] \leq \sigma \sqrt{2 \eta_{{\bm \psi} \phibf} I(T; \mathbb{\psi}) }.
	\]
\end{prop}

\subsection{Bias due to data visualization}
Data visualization, using clustering for example, is a common technique to explore data and it can inform subsequent analysis. How much selection bias can be introduced by such visualization? While in principle a visualization could reveal details about every data point, a human analyst typically only extracts certain salient features from plots. For concreteness, we use clustering as an example, and imagine the analyst extracts the number of clusters $K$ from the analysis. In our framework the natural object of study is the information usage $I(K; \phibf)$, since if the final selection $T$ is a function of $K$, then $I(T; \phibf) \leq I(K; \phibf)$ by the data-processing inequality. In general, $K$ is a random variable that can take on values 1 to $n$ (if each point is assigned its own cluster). When there is structure in the data and the clustering algorithm captures it, then $K$ can be strongly concentrated around a specific number of clusters and $I(K; \phibf) \le H(K) \approx 0$. In this setting, clustering is informative to the analyst but does not lead to ``bad information-usage'' and therefore does not increase exploration bias.  This is a stylized example; if the analyst uses additional information beyond the number of clusters $K$, then the bias could increase.

\subsection{Rank selection with signal}\label{subsec: rank selection}
Rank selection is the procedure for selecting the $\phi_i$ with the largest value (or the top $K$ $\phi_i$'s with the largest values). It is the simplest selection policy and the one that we are instinctively most likely to use. We have seen previously how rank selection can introduce significant bias. In the bio-marker example in Subsection \ref{subsec: filtering}, suppose there is \emph{no signal} in the data, so $X_{i,j} \sim \mathcal{N}(0,1)$ and $\phi_i \sim \mathcal{N}(0,1/n)$. Under rank selection, $\phi_T$ would have a bias close to $\sqrt{(2\log m)/n}$.

What is the bias of rank selection when there \emph{is} signal in the data? Our framework cleanly illustrates how signal in the data can reduce rank selection bias. As before, this insight follows transparently from studying the mutual information $I(T, \phibf)$. Recall that mutual information is bounded by entropy:
$
I(T; \phibf) \leq H(T) \leq \log(m).
$
When the data provides a strong signal of which $T$ to select, the distribution of $T$ is far from uniform, and $H(T)$ is much smaller than its worst case value of $\log(m)$.

%When a data analyst performs selective inference, it is typically in an effort to identify a small number variables or hypotheses that are clearly distinguished from a large pool of potential variables or hypotheses. Worst-case examples, like the one in subsection \ref{subsec: linear lower bound}, highlight that adaptive selection procedures can introduce extreme bias when there is no underlying signal in the data, so the data analyst is mistaken and each $\phi_i$ actually follows the same distribution. This section highlights that the price of adaptivity can be much smaller when there is strong signal in the data, so some $\phi_i$ are reasonably separated from others. As before, this insight follows transparently from studying the mutual information $I(T, \phibf)$.

Consider the following simple example. Assume
\[
\phi_i \sim \begin{cases}
\N(\mu, \sigma^2) & \text{If } i=I^*\\
\N(0, \sigma^2) & \text{If } i\neq I^*
\end{cases}
\]
where $ \mu \geq 0$. The data analyst would like to identify $I^*$ and report the value of $\phi_{I^*}$. To do this, she selects $T=\arg\max_{i} \phi_i$. When $ \mu=0$, there is no true signal in the data and $T$ is equally likely to take on any value in $\{1,..,m\}$, $I(T; \phibf)=H(T) = \log(m)$. As $\mu$ increases, however, $T$ concentrates on $I^*$, causing $H(T)$ and the bias $\E [\phi_T - \mu_T]$ to diminish. We simulated this example with $m = 1000$ $\phi_i$'s, all but one of which are i.i.d. samples from $\N(0,1)$ and $\phi_{I^*} \sim \N(\mu, 1)$ for $\mu \in [1,4]$. The simulation results, averaged over 1000 independent runs, are shown in Figure~\ref{fig:signal}.

\begin{figure}[!htb]
	\centering
	\includegraphics[ clip,scale=0.4]{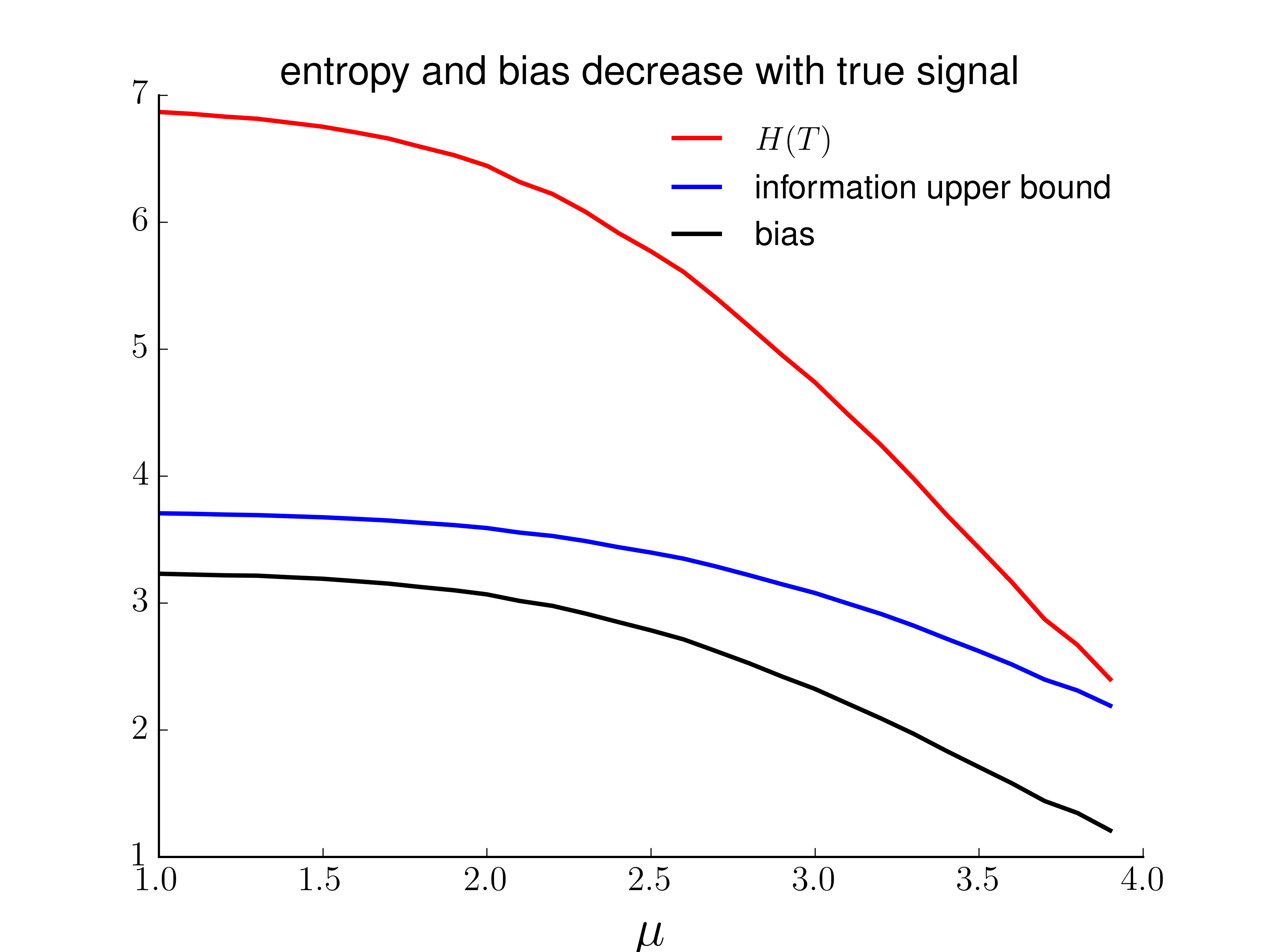}
	%\vspace{-0.6cm}
	\caption{As the signal strength increases ($\mu$ increases), the entropy of selection $H(T)$ decreases, causing the information upper bound $\sqrt{2I(T; \phibf)}$ to also decrease. The bias of the selected $\phi_T$ decreases as well. }
	\label{fig:signal}
\end{figure}

%Note that this example could be generalized to a setting where $\phibf \sim \N(\mubf , \Sigma)$ follows a general multivariate normal distirbution, but the dependence of $T$ and the bias $\E[\phi_T - \mu_T]$ on $\mubf$ and $\Sigma$ is difficult to characterize. In this way, the the mutual information $I(T; \phibf)$ implicitly captures a great deal about the structure of the problem.

%In the interest of transparency, we have chosen to focus on the simple mathematical example provided above. The following examples are intended to demonstrate how these same ideas apply to common problems in statistics and machine learning.

%\begin{example}
%Sparse Regression.
%\end{example}
%\begin{example}
%Margin and choosing a classifier.
%\end{example}

%Consider a dataset consisting of $n$ independent observations
% \[
% (X_1, Y_1),...(X_n, Y_n)
% \]
%where $Y_i \in \mathbb{R}^{d}$ and $X_i \in \mathbb{R}^{m}$. Consider
%\[
%\underset{\beta \in \mathbb{R}^{d}}{\min} \left\{ \sum_{i=1}^{n} (X_i^T \beta - Y_i)^2  : \| \beta \|_0 =1    \right\}
%\]
%Assume that conditional on $X=X_1,...X_n$, observation noise $Y_i- \E[Y_i | X]$ is $\sigma$--sub-Gaussian.
%
%We can study this in our framework by setting
%\[
%\phi_j = \underset{\beta_j}{\min} \frac{\sum_{i=1}^{n} (X_{i, j} \beta_j -Y_{i})^2 }{n}
%\]

\subsection{Information usage along the Least Angle Regression path} 
Our analyses illustrate that in certain stylized settings, information usage tightly bounds the bias of optimization selections. Here we show that information usage also accurately captures the bias of a more complex selection procedure corresponding to Least Angle Regressions (LARS) \cite{efron2004least}. LARS is an interesting example for two reasons. First it is widely used as a practical tool for sparse regression and is closely related to LASSO. Second LARS composes a sequence of maximum selections and thus provides a more complex example of selection. In Figure~\ref{fig:lars_example}, we show the simulation results for LARS under three data settings corresponding to low, medium and high signal-to-noise ratios. We use bootstrapping to empirically estimate the information usage and since we know the ground truth of the experiment, we can easily compute the bias of LARS. As the signal in the data increases, the information usage of LARS decreases and, consistent with the predictions of our theory, the bias of LARS also decreases.  Moreover, as the number of selected features increases, the average (per feature) information usage of LARS decreases and, consistent with this, the average bias of LARS also decreases monotonically. Details of the experiment are in the Appendix. 

%\subsection{Limiting max-information} I
\subsection{Differentially private algorithms} 
Recent papers \cite{dwork2014preserving, dwork2015reusable} have shown that techniques from differential privacy, which were initially inspired by the need to protect the security and privacy of datasets, can be used to develop adaptive data analysis algorithms with provable bounds on over-fitting. These differentially private algorithms satisfy worst case bounds on certain likelihood ratios, and are guaranteed to have low information-usage. On the other hand, many algorithms have low information-usage without being differentially private. Moreover, as we have seen, the exploration bias of an algorithm could be large or small depending on the particular dataset (e.g. the signal-to-noise ratio of the data) and information usage captures this. Differentially private algorithms have low information usage for \emph{all} datasets and $T$ that is designed adversarial to exploit this dataset, so this is a much stricter condition.  In \cite{dwork2015reusable}, the authors also define and study a notion of max-information, which can be viewed as a worst-case analogue of mutual information. We discuss the relationship between these measures further in the Appendix.
%Inspired by the need to protect the security and privacy of the dataset, recent papers have developed analysis algorithms that reveal limited max-information about the data. Max-information is a worst-case analogue of mutual information; any analysis with limited max-information automatically has low mutual information usage and hence low bias. More detailed discussion of max-information and related concepts of differential privacy is included in the Supplementary Information. 

\begin{figure}[!htb]
	\centering
	\includegraphics[ clip,scale=0.4]{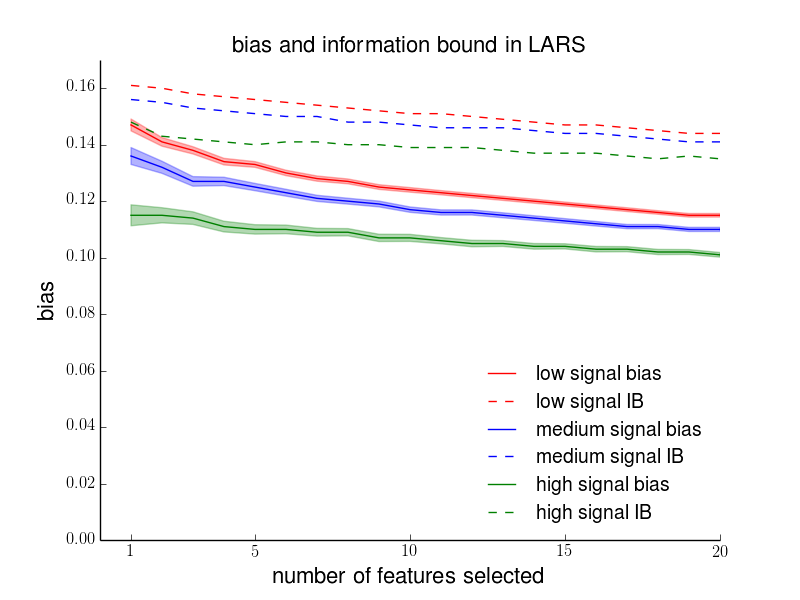}
	\caption{Information bound $\sqrt{2 I (T; \phibf)}$ (dotted lines) and bias of Least Angle Regression (solid lines). Results are shown for low (red), medium (blue) and high (green) signal-to-noise settings. The $x$-axis indicates  the number of features selected by LARS and the $y$-axis corresponds to the average information usage and bias in the selected features.}
	\label{fig:lars_example}
\end{figure}

\subsection{Information usage and classification overfitting}

This section applies our framework  to the problem of overfitting in classification. A classifier is trained on a dataset consisting of $n$ examples, with input features $X_1,..,X_n \in \mathcal{X}$ and corresponding labels $Y_1,...Y_n \in \{-1, 1\}$. We consider here a setting where the features of the training examples $X_i=x_i$ are fixed, and study overfitting of the noisy labels. Each label $Y_i$ is drawn independently of the other labels from an unknown distribution $\Prob(Y_i =1 | X_i=x_i)$. %The training examples $(X_i, Y_i)$ are drawn independently from an unknown distribution $\mathcal{P}$. 
A classifier $f$ associates a label $f(x) \in \{-1,1\}$ with each input $x$. The training error of a fixed classifier $f$ is 
\[
\hat{L}(f) = \frac{1}{n} \sum_{i=1}^{n} \mathbf{1}(f(x_i) \neq Y_i)
\]
while its true error rate is
\[
L(f)=\E[\hat{L}(f)] = \frac{1}{n} \sum_{i=1}^{n} \Prob( f(x_i) \neq Y_i),
\]
is the expected fraction of examples it mis-classifies on a random draw of the labels $Y_1,..,Y_n$. The process of training a classifier corresponds to selecting, as a function of the observed data, a particular classification rule $\hat{f}$ from a large family $\mathcal{F}$ of possible rules. Such a procedure may overfit the training data, causing the average training error $\E[\hat{L}(\hat{f})]$ to be much smaller than its true error rate $\E[L(\hat{f})]$.   

As an example, suppose each $X_i \in \mathbb{R}^{d}$ is a $d$--dimensional feature vector, and $\mathcal{F}=\{f_{\theta}: \theta \in \mathbb{R}^{d}\} $ consists of all linear classifiers of the form $f_{\theta}(x) = \mathbf{1}(x^T \theta \geq 0)$. A training algorithm might set $\hat{f}=f_{\hat{\theta}}$ by choosing the parameter vector that minimizes the number of mis-classifications on the training set. This procedure tends to overfit the noise in the training data, and as a result the average training of $\hat{f}$ can be much smaller than its true error rate.  The risk of  over-fitting tends to increase with the dimension $d$, since higher dimensional models allow the algorithm to fit more complicated, but spurious, patterns in the training set. 

The field of statistical learning provides numerous bounds on the magnitude of overfitting based on more general notions of the complexity of an arbitrary function class $\mathcal{F}$, with the most influential being the Vapnik-Chervonenkis dimension, or VC-dimension\footnote{The VC-dimension of $\mathcal{F}$ is the size of the largest set it shatters. A set $\{x_1,..,x_m\}\in \mathcal{X}$ is shattered by $\mathcal{F}$ if for any choice of labels $y_1,..,y_m \in \mathcal{Y}$, there is some $f\in \mathcal{F}$ with $f(x_i)=y_i$ for all $i$.}.  While the focus is on overfitting of the training data, similar concerns apply to overfitting the validation data. 

The next proposition provides information-usage bounds the degree of over-fitting, and then shows that mutual information is upper-bounded by the VC--dimension of $\mathcal{F}$. Therefore, information-usage is always constrained by function-class complexity. 
\begin{prop}\label{prop: overfitting}
	Let $\mathbf{x}\equiv(x_1,...,x_n)$, $\mathbf{Y}\equiv(Y_1,...,Y_n)$,  $\hat{f}(\mathbf{x}) \equiv (\hat{f}(x_1),...\hat{f}(x_n))$ and $\log_{+}(z) \equiv \max\{1, \log(z)\}$. Then,
	\[
	\E[L(\hat{f}) - \hat{L}(\hat{f})] \leq \sqrt{\frac{I(\hat{f}(\mathbf{x}); \mathbf{Y})}{2n}}.
	\]
	If $\mathcal{F}$ has VC-dimension $d<\infty$, then
	\[ 
	I(\hat{f}(\mathbf{x}); \mathbf{Y}) \leq d\log_{+}\left(\frac{ne}{d} \right).
	\]
\end{prop}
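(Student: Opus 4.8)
The plan is to recognize the overfitting gap as an instance of the selection bias controlled by Proposition~\ref{prop: main result}, and then bound the resulting mutual information first by entropy and then by the growth function of $\mathcal{F}$. The crucial preliminary observation is that both $\hat{L}(\hat{f}) = \frac{1}{n}\sum_i \mathbf{1}(\hat{f}(x_i)\neq Y_i)$ and $L(\hat{f}) = \frac{1}{n}\sum_i \Prob(\hat{f}(x_i)\neq Y_i)$ depend on the trained classifier $\hat{f}$ only through its vector of predictions $\hat{f}(\mathbf{x}) = (\hat{f}(x_1),\dots,\hat{f}(x_n))$ on the fixed inputs. I would therefore re-index the family of estimators not by classifiers but by prediction patterns $v \in \{-1,1\}^n$, setting $\phi_v \equiv \frac{1}{n}\sum_i \mathbf{1}(v_i \neq Y_i)$ with mean $\mu_v = \E[\phi_v] = \frac{1}{n}\sum_i \Prob(v_i\neq Y_i)$, and letting the selection variable be $T \equiv \hat{f}(\mathbf{x})$ itself. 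With this identification $\phi_T = \hat{L}(\hat{f})$ and $\mu_T = L(\hat{f})$, so that $\E[L(\hat{f})-\hat{L}(\hat{f})] = -\E[\phi_T - \mu_T]$ and the overfitting gap is exactly a selection bias.

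Each $\phi_v$ is an average of $n$ independent indicators $\mathbf{1}(v_i\neq Y_i)\in[0,1]$ (independent because the labels $Y_i$ are drawn independently), so each centered indicator is $\tfrac{1}{2}$-sub-Gaussian and hence $\phi_v - \mu_v$ is $\tfrac{1}{2\sqrt{n}}$-sub-Gaussian, exactly as in the sample-average remark following Proposition~\ref{prop: main result}. Applying that proposition with $\sigma = \tfrac{1}{2\sqrt{n}}$ gives $|\E[\phi_T-\mu_T]| \le \tfrac{1}{2\sqrt{n}}\sqrt{2 I(T;\phibf)} = \sqrt{I(T;\phibf)/(2n)}$. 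It then remains to replace $I(T;\phibf)$ by $I(\hat{f}(\mathbf{x});\mathbf{Y})$. Since the inputs are fixed, the entire vector $\phibf$ is a deterministic function of $\mathbf{Y}$, so $T \to \mathbf{Y} \to \phibf$ forms a Markov chain and the data-processing inequality yields $I(T;\phibf) \le I(T;\mathbf{Y}) = I(\hat{f}(\mathbf{x});\mathbf{Y})$, which establishes the first inequality.

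For the VC bound I would pass from mutual information to entropy, $I(\hat{f}(\mathbf{x});\mathbf{Y}) \le H(\hat{f}(\mathbf{x}))$, and then bound entropy by the logarithm of the support size, $H(\hat{f}(\mathbf{x})) \le \log|\mathcal{F}(\mathbf{x})|$, where $\mathcal{F}(\mathbf{x}) = \{(f(x_1),\dots,f(x_n)): f\in\mathcal{F}\}$ is the set of dichotomies that $\mathcal{F}$ realizes on $x_1,\dots,x_n$. The Sauer--Shelah lemma bounds this growth function by $\sum_{k=0}^d \binom{n}{k} \le (en/d)^d$ whenever $d\le n$, giving $H(\hat{f}(\mathbf{x})) \le d\log(en/d)$. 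When $d > n$ the family realizes at most $2^n$ patterns, so $H(\hat{f}(\mathbf{x})) \le n\log 2 < n \le d$, while $\log_{+}(ne/d)=1$ in this regime; hence $d\log_{+}(ne/d)$ dominates in every case and the second inequality follows.

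The only genuinely delicate point is the direction of the data-processing inequality in the first part, and this is precisely what forces the re-indexing by prediction patterns: had I indexed the estimators by classifier identity, the selection variable would be strictly finer than $\hat{f}(\mathbf{x})$, and data processing would only bound $I(\hat{f}(\mathbf{x});\mathbf{Y})$ from above by $I(T;\phibf)$ rather than the other way around. By taking $T$ equal to the prediction vector from the outset---legitimate because the two losses see $\hat{f}$ only through $\hat{f}(\mathbf{x})$---the chain $T\to\mathbf{Y}\to\phibf$ runs in the needed direction. The remaining steps (the sub-Gaussian constant, the entropy and Sauer--Shelah bounds, and the $\log_{+}$ bookkeeping for $d>n$) are routine.
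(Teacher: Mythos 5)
Your proposal is correct and takes essentially the same route as the paper's own proof: indexing estimators by the (finitely many) prediction patterns on $\mathbf{x}$ (the paper uses representatives $f_1,\dots,f_m$ of $\mathcal{F}_{\mathbf{x}}$, which is the same device), applying Proposition~\ref{prop: main result} with the $\tfrac{1}{2\sqrt{n}}$ sub-Gaussian constant for averaged Bernoulli indicators, invoking data processing along $T \to \mathbf{Y} \to \phibf$ to pass to $I(\hat{f}(\mathbf{x});\mathbf{Y})$, and then bounding mutual information by entropy, the growth function, and Sauer's lemma, with the same $\log_{+}$ case split at $d>n$. There are no gaps.
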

The proof of the information usage bound follows by an easy reduction to Proposition \ref{prop: main result}. The proof of the second claim relies on a known link between VC-dimension and a notion of the log-covering numbers of the function-class. 

It is worth highlighting that because VC-dimension depends only on the class of functions $\mathcal{F}$, bounds based on this measure can't shed light on which types of data-generating distributions and fitting procedures $ (\mathbf{X}, \mathbf{Y}) \mapsto \hat{f}$ allow for effective generalization. Information usage depends on both, and a result could be much smaller than VC-dimension; for example, this occurs when some classifiers in $\mathcal{F}$ are much more likely to be selected after training than others. This can occur naturally due to properties of the training procedure, like regularization, or properties of the data-generating distribution.

\subsection{Approximately independent data splitting.}
A data scientist has access to data in the form of $n$ samples $(s_1,\ldots,s_n)$ from a Markov chain. She would like to mimic the honest data-splitting she uses with i.i.d data. To do this, she splits the into three parts: $(s_1,..,s_{n_1})$, $(s_{n_1+1},..,s_{n_2})$ and $(s_{n_2+1},..,s_{n}).$  The first part is used for selection, the third for estimation, and the middle data is thrown away. In particular,  $\phi= \phi_1,...,\phi_{m} : (s_{n_2+1},..,s_{n})  \to \mathbb{R}^m$ and that $T: (s_1,..,s_{n_1}) \mapsto \{1,\ldots, m\}$.  One expects that if $n_2 - n_1$ is large so there is a sufficient delay between the two samples, then the risk of bias and overfitting will be low. We'll see that this is easy to formalize via an information usage lens. 

We assume the Markov process is stationary and time homogeneous with stationary distribution $\pi$. Moreover, it satisfies a uniform mixing condition 
\[
\max_{s} D( \Prob(s_{\tau}=\cdot | s_1=s) \, || \, \pi ) \leq c_0 e^{-c_1 \tau} \qquad \forall \tau \in \mathbb{N}. 
\]
We then claim that 
\[
I( T ; \phibf) \leq c_0 e^{-c_1 (n_2 - n_1) }
\]
and so a sufficient delay between the sample used for selection and the sample used for estimation guarantees low bias. 
We have immediately that, 
\begin{align*}
I( s_{t+\tau} ; s_t) &= \sum_{s} \Prob(s_t = s)  D( \Prob(s_{t+\tau}=\cdot | s_t=s) \, || \, \Prob(s_t=\cdot) ) \\
& \leq  c_0 e^{-c_1 \tau}
\end{align*}
where we used that $\Prob(s_t=s)=\pi(s)$. Then, by the data processing inequality 
\begin{align*} 
I( T ; \phibf) &\leq I( (s_1,..,s_{n_1}) \,;\,  (s_{n_2+1},..,s_{n})  ) \\
&\leq I(s_{n_1} ; s_{n_2+1}) \\
&\leq  c_0 e^{-c_1 (n_2-n_1)}. 
\end{align*}

\subsection{Bias control via FDR control}
There has been intense interest in large-scale hypothesis testing procedures that control the false-discovery rate. Here we consider the bias and error incurred when estimation is performed after variables are selected in this manner, and bound this in terms of the false discovery rate and the rates of type I and type II errors. 

As motivation, consider analysis of a large micro-array experiment. There is a large set of gene-expression data $D \in \mathbb{R}^{n \times m}$ consisting of $m$ gene expression levels drawn from $n$ samples, where there first $n_1$ samples were taken from tissue with a cancerous tumor and the remaining $n_2=n-n_1$ were taken from healthy tissue.  A scientist would like to identify genes with large differential between the expression levels across the two tissue types. She casts this as a multiple hypothesis testing problem, where rejecting a given null hypothesis indicates strength of evidence that an observed differential is unlikely due to random chance. Many procedures exist to control the false discovery rate, which is the expected proportion of type I errors among rejected null hypotheses. 

Consider for example the procedure proposed by Benjamini and Hochberg. One first constructs p-values $p_{1},\ldots p_{m}$ for $m$ separate hypothesis testing problems. These are then sorted as $p_{(1)} \leq p_{(2)} \leq \ldots, p_{(m)}$. To guarantee the false discovery rate is controlled at some level $q\in (0,1)$, their procedure specifies the selection of the the first $\hat{t}$ hypotheses, where $\hat{t}$ is the largest number such that $p_{(t)} \leq q t/m$. Framed differently, all hypotheses with p-values less than a random threshold $\hat{l}= q\hat{t}/m$ are rejected. To gain some insight, let us consider a simple model where each $p$-value is drawn either from a uniform distribution (i.e. the null distribution) or an alternative distribution $F$. Consider an asymptotic regime where the number of alternative $m\to \infty$, but the proportion of alternatives following the null distribution stays fixed. Then \cite{genovese2002operating} show that under regularity conditions on $F$, the random threshold $\hat{l}$ converges in probability to a deterministic limit $l^*$. Therefore, the rate of type I and type II errors, as well as the proportion of false discoveries, all tend to a fixed levels asymptotically as $m \to \infty$. Whether a particular hypothesis is accepted or rejected is still random and data-dependent, but when $m$ is large the overall proportions are nearly deterministic. 

We consider a more abstract framework. There is some random matrix $D\in \mathbb{R}^{n\times m}$, and a vector $\phi \in \mathbb{R}^m$ that is a function of $D$ with $\mubf = \E[\phibf]$. The indices $\{1,\ldots, m\}$ are partitioned into two sets $\mathcal{H}_0$ and $\mathcal{H}_1$. A selection procedure is a map $\psi: \mathbb{R}^{n \times m} \to \{0,1\}^m$, where $\psi(D)_{i} =1$ indicates variable $i$ was selected. We set $S_1\subset \{1,\ldots, m \}$ to be the set of selected variables and $S_0$ to be its complement. 

To form the analogy with the story above, we think of $\phi$ as a vector of summary statistics of the columns of $D$---e.g. the observed gene expression differential between tumor tissue and healthy tissue---and think of $\mathcal{H}_0$ as the set for which the null distribution holds --- e.g. across repeated samples there would not be an observed differential. The selected variables $S_1$ is the set for which the null hypothesis was rejected. %With this analogy, we denote the false discovery proportion by  %\[ %{\rm FDP} = \frac{\#(\Hc_0 \cap S_1) }{\# S_1 \wedge 1}
Set $\hat{\alpha} = \#(\Hc_0 \cap S_1) / \#\Hc_0$ and $\hat{\beta} = \#(\Hc_1 \cap S_0) / \#\Hc_1$ to be analogues of the proportion of type I and type II errors. Note that $\hat{\alpha}$ is the fraction of false discoveries relative to the total number of nulls, and is different from what is called the False Discovery Proportion or FDP. 
%\james{Might be better to not call these proportion of type I and type II errors, since we normally think of $ \#(\Hc_0 \cap S_1) / \#S_1$  as the proportion of type I error = FDP.} 
To simplify the discussion, we assume there is always at least one selected variable, so $S_1$ is nonempty.  We are interested in the average error or bias in reported estimates among selected, which leads to the study of quantities like 
\begin{align}\label{eq: bias like terms for FDR}
&\frac{1}{\#S_1} \sum_{i\in S_1} (\phi_i - \mu_i), \\\nonumber
& \frac{1}{\#S_1} \sum_{i\in S_1} |\phi_i - \mu_i|, \\\nonumber 
\text{or} \quad &  \frac{1}{\#S_1} \sum_{i\in S_1} (\phi_i - \mu_i)^2.
\end{align}
These can be rewritten as $\E[\phi_{T} - \mu_T]$, $\E[|\phi_{T} - \mu_T|]$ or $\E[(\phi_{T} - \mu_T)^2]$ where, conditioned on $D$, $T$ is drawn uniformly at random from the set of selected of selected variables $S_1$. This leads naturally to the study of information usage $I(T ; \phibf)$, which bounds these quantities. The quantities in \eqref{eq: bias like terms for FDR} reflect whether, the estimation procedures applied to the selected variables produce accurate results \emph{on average}. For this reason, we are able to provide meaningful guarantees that do not degrade as $m\to \infty$, a regime in which it is impossible to guarantees that \emph{every} selected variable is estimated accurately.

Now, let us define ${\rm FDR} = \Prob(T \in \Hc_0)$ to be the false discovery rate. This is the expected proportion of selected variables $S_1$ that are contained within the null set $\Hc_0$.  The next lemma bounds information usage in terms of the false discovery rate, the rates of type I and II error, and an extra error term that vanishes as the random proportion of realized type I and II errors concentrate around their expected value. A short proof is given in Appendix \ref{sec_app:FDR}.
\begin{prop}\label{prop: FDR} For the FDR control problem defined above,
	\begin{align*}
	I(T; \phibf) \leq h({\rm FDR}) &+\left(1- {\rm FDR}\right)\cdot \log\left(\frac{1}{1- \beta}  \right) \\
	&+ {\rm FDR} \cdot  \log\left( \frac{1}{\alpha} \right) + \xi
	\end{align*}
	where $h(p)=-p\log(p)-(1-p)\log(1-p)$ denotes the binary entropy function, $\alpha=\E[\hat{\alpha}]$ and $\beta=\E[\hat{\beta}]$ denote the type I and II error proportion relative to the total number of true null  and true alternative, respectively. The error term is 
	\[ 
	\xi = \E\left[\log_{+}\left( \frac{1-\beta}{1-\hat{\beta}}  \right) \right]+\E\left[\log_{+}\left( \frac{\alpha}{\hat{\alpha}}  \right) \right]. 
	\]
	for $\log_{+}(x)\equiv \max\{0, \log(x)\}$. 
\end{prop}
This result further formalizes the insight that estimation after selection is unlikely to overfit in settings where the selection procedure works reliably. When the rates of false discovery, type I error, and type II error are small, information usage is guaranteed to also be low. The implied bounds on estimation error after selection grow smoothly as the reliability of the selection procedure degrades.

\section{Limiting information usage and bias via randomization}\label{sec:randomization}
We have seen how information usage provides a unified framework to investigate the magnitude of exploration bias across different analysis procedures and datasets. It also suggests that methods that reduces the mutual information between $T$ and $\phibf$ can reduce bias. In this section, we explore simple procedures that leverages randomization to reduce information usage and hence bias, while still preserving the utility of the data analysis. 

We first revisit the rank-selection policy considered in the previous subsection, and derive a variant of this scheme that uses randomization to limit information usage.
We then consider a model of a human data analyst who interacts sequentially with the data. We use a stylized model to show that, even if the analysts procedure is unknown or difficult to describe, adding noise during the data-exploration process can provably limit the bias incurred. 
Many authors have investigated adding noise as a technique to reduce selection bias in specialized settings \cite{dwork2014preserving, chaudhuri2011differentially}. The main goal of this section is to illustrate how the effects of adding noise is transparent through the lens of information usage.

%At a high level, these schemes restrict the granularity of the analyst's view of the data-set, allowing identification of strong effects while while preventing discovery of spurious patterns. 

%Even if well-intentioned, such an analyst may discover spurious patterns in the data. 
%We investigate procedures that add noise during the data-exploration process, effectively restricting the granularity of the analyst's view of the data-set. Using a stylized model, we show this can provably limit the selection-bias. Throughout this section, powerful properties of mutual information enable transparent analysis. 

\subsection{Regularization via randomized selection}
Subsection \ref{subsec: rank selection} illustrates how signal in the data intrinsically reduces the bias of rank selection by reducing the entropy term $H(T)$ in $I(T; \phibf) = H(T) - H(T | \phibf)$. A complementary approach to potentially reduce bias is to increase conditional entropy $H(T|\phibf)$ by adding \emph{randomization} to the selection policy $T$. Note that while this randomization increases $H(T|\phibf)$, it also increases $H(T)$ and thus could increase information usage. 
It is easy to maximize conditional entropy by choosing $T$ uniformly at random from $\{1,...,m\}$, independently of $\phibf$. Imagine however that we want to not only ensure that conditional entropy is large, but want to choose $T$ such that the selected value $\phi_T$ is large. After observing $\phibf$, it is natural then to set the probability $\pi_i$ of setting $T=i$ by solving a maximization problem
\begin{eqnarray*}
	\underset{{\bm \pi} \in \mathbb{R}^m_+}{\text{maximize}} && H({\bm \pi}) \\
	\text{subject to} && \sum_{i=1}^{k} \pi_i \phi_i \geq b \mbox{ and }  \sum_{i=1}^{k} \pi_i =1.
\end{eqnarray*}
The solution ${\bm \pi}^*$ to this problem is the maximum entropy or ``Gibbs'' distribution, which sets
\begin{equation}\label{eq: exponential weights}
\pi^*_i \propto e^{ \beta \phi_i} \qquad i \in \{1,..,m\}
\end{equation}
for $\beta>0$ that is chosen so that $\sum_i \pi^*_i \phi_i = b$.  This procedure effectively adds stability, or a kind of regularization, to the selection strategy by adding randomization. Whereas tiny perturbations to $\phibf$ may change the identity of $T=\arg\max_{i} \phi_i$, the distribution $\pi^*$ is relatively insensitive to small changes in $\phibf$. Note that the strategy \eqref{eq: exponential weights} is one of the most widely studied algorithms in the field of online learning \cite{cesa2006prediction}, where it is often called \emph{exponential weights}. It is also known as the exponential mechanism in differential privacy. In our framework it is transparent how it reduces  bias.

To illustrate the effect of randomized selection, we use simulations to explore the tradeoff between bias and accuracy. We consider the following simple, max-entropy randomization scheme:
\begin{itemize}%[noitemsep,topsep=10pt,parsep=10pt,partopsep=10pt]
	\item Take as input parameters $\beta$ and $K$, and observations $\phi_1,...\phi_m$. Here $\beta$ is the inverse temperature in the Gibbs distribution and $K$ is number of $\phi_i$'s we need to select.
	\item Sample without replacement $K$ indices $T_1,...T_K$ from ${\bm \pi}^*$ given in \eqref{eq: exponential weights}. Report the corresponding values $\phi_{T_1}, ..., \phi_{T_k}$.
\end{itemize}
We consider settings where we have two groups of $\phi_i$'s: after relabeling assume that $\mu_1= ...=\mu_{N_1}=\mu > 0$ and $\mu_i = 0$ for $i > N_1$. We define the \emph{bias} of the selection to be $\frac{1}{K} \sum_{i=1}^K (\phi_{T_i} - \mu_{T_i}) $ and the \emph{accuracy} of the selection to be $|\{T_i: T_i \leq N_1 \}|/K$, which is the fraction of reported $\phi_{T_i}$ with true signal $\mu$. In Figure~\ref{fig:randomized_selection}, we illustrate the tradeoff between accuracy and bias for $N_1 = 1000, n-N_1 = 100000$ (i.e. there are many more false signals than true signals), randomization strength $\beta = 2$, and the signal strength $\mu$ varying from 1 to 5. Consistent with the theoretical analysis, max-entropy selection significantly decreased bias. In the low signal regime ($\mu = 1$), both rank selection and max-entropy selection have low accuracy because the signal is overwhelmed by the large number of false positives. In the high signal regime ($\mu \geq 4$), both selection methods have accuracy close to one and max-entropy selection has significantly less bias. In the intermediate regime ($ 1 < \mu < 4$), max-entropy selection has substantially less bias but is less accurate than rank selection.

Formally, unless the Gibbs distributions is degenerate with probability 1, 
\[
I(T;\phibf) = H(T)-H(T| \phibf) \leq \log(m) - H(T| \phibf) < \log(m), 
\]
so information usage is strictly smaller than its worst-case value of $\log(m)$. It is worth highlighting, however, that the Gibbs mechanism described above does not reduce bias or information usage for all possible data--generating distributions because it could increase entropy $H(T)$. 

\begin{figure}[!htb]
	\centering
	\includegraphics[ clip,scale=0.4]{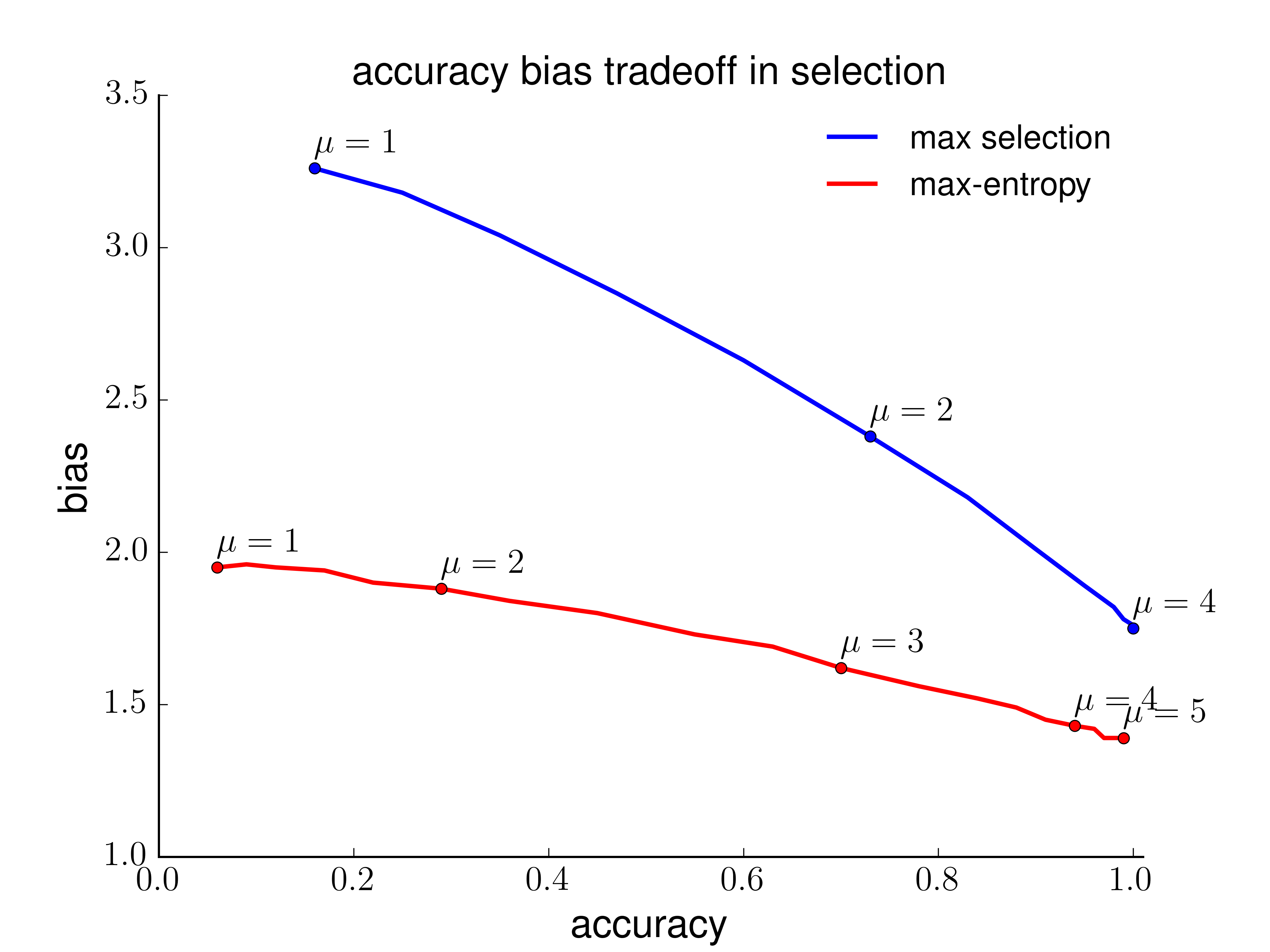}
	\caption{Tradeoff between accuracy and bias as the signal strength $\mu$ increases. The two curves illustrate the tradeoff for the maximum selection (i.e. reporting the largest $K = 100$ values of $\phi_i$) and the max-entropy randomized selection procedures.}
	\label{fig:randomized_selection}
\end{figure}

\subsection{Randomization for a multi-step analyst}

We next study how randomization can decrease information usage and bias even when we have very little knowledge of what the analyst is doing. To illustrate this idea,
we analyze in detail a simple example of a very flexible data analyst who performs multiple steps of analysis. Flexibility in multi-step data analysis presents a challenge to current statistical approaches for quantifying selection bias. Recent development in post-selection inference have focused on settings where the selection rule is simple and analytically tractable, and the full analysis procedure is fixed and specified before any data analysis is performed. While powerful results can be derived in this framework---including exact bias corrections and valid post-selection confidence intervals \cite{fithian2014optimal, Taylor2015}---these methods do not apply for exploratory analysis where the procedure can be quite flexible. 

%This illustrates how we can usefully analyze information usage when the analyst is very flexible.   
%A recent line of research in statistics has focused on providing estimates that exactly correct for this bias, and for producing valid post-selective statistical inference \cite{fithian2014optimal, Taylor2015}. While powerful results can be derived in this framework, it requires that the selection rule is simple and analytically tractable, and that this procedure is fixed and fully specified before any data-analysis is preformed.  Unfortunately, exploratory data analysis process is often much more adaptive -- possibly involving many rounds of adaptation and experimentation with the data -- and it is often not feasible to model such a process by one concise, pre-specified, protocol. 

In this section, we show how our mutual information framework can be used to analyze bias for a flexible multi-step analyst. We show that even if one does not know, or can't fully describe, the selection procedure $T$, one can control its bias by controlling the information it uses. The main idea is to inject a small amount of randomization at each step of the analysis. This randomization is guaranteed to keep the bad information usage low \emph{no matter what the analyst does.} 

%This leads naturally to the idea of an \emph{information budget}, which limits the mutual information accumulated throughout the analysis process. We provide an analysis of one method for implementing such an information budget, which involves adding small amounts of random noise to perturb the output of exploratory statistical analyses.

The idea of adding randomization during data analysis to reduce overfitting has been implemented as practical rule-of-thumb in several communities. Particle physicists, for example, have advocated \emph{blind data analysis}: when deciding which results to report, the analyst interacts with a dataset that has been obfuscated through various means, such as adding noise to observations, removing some data points, or switching data-labels. The raw, uncorrupted, dataset is only used in computing the final reported values \cite{maccoun2015blind}.  Adding noise is also closely related to a recent line of work inspired by differential privacy \cite{blum2015ladder, dwork2015generalization,dwork2015reusable, hardt2014preventing}. % which can be used to control a new concept called max-information. We close this section by discussing its connections to our mutual information framework.

%This practice, used by researchers in particle physics, involves obfuscating the data-set through various means, such as adding noise to observations, removing some data points, or switching data-labels. In a recent series in in Nature \cite{}, focused on bias in scientific data analysis, .... argue for reducing bias through what they

\textbf{A model of flexible, multi-step analyst.}
We consider a model of adaptive data analysis similar to that of \cite{dwork2015reusable, dwork2015generalization}. In this setting, the analyst learns about the data by running a series of analyses on the dataset. Each analysis is modeled by a function of the data $\phi_i$, and choice of which analysis to run may depend on the results from all the earlier analyses.  More formally, we define the model as follows:
%In this setting, the analyst does not work with the data directly, but submits queries on the data and receives possibly noisy responses.

\begin{enumerate}
	%\item Let $\phibf = \{\phi_1, \phi_2, \phi_3,..., \phi_m\}$ denote the collection of possible data queries, where $m$ is finite but can be arbitrarily large. Each $\phi_i$ is a function mapping from the data space $\mathcal{D}$ to $\mathbb{R}$. Given a probability distribution $\mathcal{P}$ over $\mathcal{D}$, each $\phi_i$ is a random variable.
	\item At step 1, the analyst selects a statistic $\phi_{T_1}$ to query for $T_1 \in [m]$ and observes a result $Y_{T_1} \in \mathbb{R}$.
	\item In the $k$-th iteration, the analyst chooses a statistic $\phi_{T_k}$ as a function of the results that she has received so far, $\{Y_{T_1}, T_1, ..., Y_{T_{k-1}}, T_{k-1}\}$, and receives result $Y_{T_k}$.
	\item After $K$ iterations, the analyst selects $\phi_T \equiv \phi_{T_{K+1}}$ as a function of $\{Y_{T_1}, T_1, ..., Y_{T_{K}}, T_K\}$
\end{enumerate}

The simplest setting is when the result of the analysis is just the value of $\phi_{T_k}$ on the data $D$: $Y_{T_K} = \phi_{T_K}(D)$. An example of this is the rank selection considered before. At the $k$-th step, $\phi_k$ is queried (i.e. the order is fixed and does not depend on the previous results) and $Y_k = \phi_k$ is returned. The analyst queries all $m$ $\phi_i$'s and returns the one with maximal value. 

In general, we allow the analysis output $Y_{T_K}$ to differ from the empirical value of the test $\phi_{T_K}$ and a particularly useful form is $Y_{T_k} = \phi_{T_k} + \mbox{noise}$ . This captures blind analysis settings, where the analyst intentionally adds noise throughout the data analysis in order to reduce over-fitting. A natural goal is to ensure that for every query $T_k$ used in the adaptive analysis, the reported result $Y_{T_K}$ is close to true value $\mu_{T_K}$.  We will show through analyzing the information usage that noise addition can indeed guarantee such accuracy.   

%The selection protocols from the Selective Inference section can be placed into this framework. Take rank selection for example; at the $k$-th step, $\phi_k$ is queried (i.e. the order is fixed and does not depend on the previous results) and $Y_k = \phi_k$ is returned. The analyst queries all $m$ $\phi_i$'s and returns the one with the maximal value. In general, $Y_{T_k}$ can differ from $\phi_{Y_k}$ and the number of iterations $K$ can be much less than $m$, which can be arbitrarily large. This general model of data analysis is consistent with multiple scenarios. For example, the data, $D$, could belong to some warehouse and the analyst cannot access $D$ directly. She makes queries $\phi_{T_k}$ and the warehouse returns answers $Y_{T_k}$. The analyst could also have the data $D$ herself, but implemented a system to return $Y_{T_k} = \phi_{T_k} + \mbox{noise}$ in order to reduce overfitting.

This adaptive analysis protocol can be viewed as a Markov chain
\[
T_{k+1} \leftarrow H_{k} \equiv \{T_1, Y_{T_1}, ...,T_{k}, Y_{T_k} \} \leftarrow D \rightarrow \phibf,
\]
where recall that $\phibf$ denotes the vector $\{\phi_1, ..., \phi_m \}$. 
By the information processing inequality \cite{cover2012elements}, $I(T_{k+1}; \phibf) \leq I(H_k; \phibf)$. Therefore, a procedure that controls the mutual information between the history of feedback $H_k$ and the statistics $\phibf$ will automatically control the mutual information $I(T_{k+1}; \phibf)$. %Combining this with our general results, Propositions~\ref{prop: main result}-\ref{prop: kl form of main result}, we can guarantee the accuracy of $\phi_{T_{k+1}}$.
By exploiting the structure of the adaptive analysis model, we can decompose the cumulative mutual information $I(H_k; \phibf)$ into a sum of $k$ terms. This is formalized in the following \emph{composition} lemma for mutual information.

\begin{lem}\label{lem: composition}
	Let
	$
	H_{k} = \left(T_1, Y_{T_1}, T_2, Y_{T_2},..., T_k, Y_{T_k}\right)
	$
	denote the history of interaction up to time $k$. Then, under the adaptive analysis model
	\[
	I(T_{k+1} ; \phibf) \leq I(H_k ; \phibf) = \sum_{i=1}^{k} I(Y_{T_i} ; \phi_{T_i} | H_{i-1}, T_i )
	\]
\end{lem}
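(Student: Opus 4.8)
The plan is to prove the two assertions separately: the inequality $I(T_{k+1}; \phibf) \leq I(H_k; \phibf)$ follows immediately from the data-processing inequality, while the equality $I(H_k; \phibf) = \sum_{i=1}^{k} I(Y_{T_i}; \phi_{T_i} \mid H_{i-1}, T_i)$ is a telescoping consequence of two applications of the chain rule for mutual information combined with the conditional-independence structure built into the adaptive model. For the inequality, I would observe that in the protocol $T_{k+1}$ is chosen as a (possibly randomized) function of the history $H_k$ using randomness independent of the data, so $\phibf - H_k - T_{k+1}$ forms a Markov chain and the data-processing inequality gives the bound at once.

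For the equality, the first step is to expand $I(H_k; \phibf)$ by the chain rule, regarding the history as the sequence of pairs $(T_1, Y_{T_1}), \ldots, (T_k, Y_{T_k})$:
\[
I(H_k; \phibf) = \sum_{i=1}^{k} I(T_i, Y_{T_i}; \phibf \mid H_{i-1}),
\]
where $H_0$ is the empty history. A second application of the chain rule splits each summand as
\[
I(T_i, Y_{T_i}; \phibf \mid H_{i-1}) = I(T_i; \phibf \mid H_{i-1}) + I(Y_{T_i}; \phibf \mid H_{i-1}, T_i).
\]
I would then argue that the first term vanishes: since $T_i$ is selected from $H_{i-1}$ using only data-independent randomness, $T_i$ is conditionally independent of $\phibf$ given $H_{i-1}$, so $I(T_i; \phibf \mid H_{i-1}) = 0$.

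The remaining work is to replace $\phibf$ by the single coordinate $\phi_{T_i}$ in the surviving term. Writing $\phibf = (\phi_{T_i}, \phibf_{-T_i})$ and applying the chain rule once more,
\[
I(Y_{T_i}; \phibf \mid H_{i-1}, T_i) = I(Y_{T_i}; \phi_{T_i} \mid H_{i-1}, T_i) + I(Y_{T_i}; \phibf_{-T_i} \mid H_{i-1}, T_i, \phi_{T_i}).
\]
The plan is to show the last term is zero. This is the crux of the proof and where the structure of the model is essential: the reported result $Y_{T_i}$ is a (possibly noisy) function of the data only through the queried statistic---e.g. $Y_{T_i} = \phi_{T_i} + \text{noise}$ with fresh, independent noise---so conditional on $(T_i, \phi_{T_i})$ the variable $Y_{T_i}$ is independent of the other coordinates $\phibf_{-T_i}$ and of the past history $H_{i-1}$, forcing the conditional mutual information to vanish. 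Substituting back through the two chain-rule expansions yields the claimed decomposition.

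I expect the conditional-independence step just described to be the main obstacle, not because it is deep but because it hinges on making the modeling assumption precise: one must posit that at each step the output $Y_{T_i}$ depends on $D$ solely through $\phi_{T_i}(D)$, with any added noise drawn independently of the data and of all prior interaction. I would therefore state this structural hypothesis explicitly before invoking it, after which the remaining manipulations are routine chain-rule bookkeeping.
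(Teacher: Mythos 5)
Your proposal is correct and follows essentially the same route as the paper's proof: the data-processing inequality via the Markov chain $\phibf \to H_k \to T_{k+1}$, the chain rule over the pairs $(T_i, Y_{T_i})$, the vanishing of $I(T_i; \phibf \mid H_{i-1})$ because $T_i$ is determined by $H_{i-1}$ plus independent randomness, and the final reduction from $\phibf$ to $\phi_{T_i}$ via the conditional independence of $Y_{T_i}$ from the remaining coordinates given $(H_{i-1}, T_i, \phi_{T_i})$. Your added emphasis on stating the structural hypothesis about $Y_{T_i}$ explicitly is sound practice but does not change the argument.
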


%
%For the interactive query model, $I(T_{k+1}; \pmb \phi) \leq \sum_{i=1}^{k}  I(Y_{T_i} ;  \phi_{T_i} | Y_{T_1},..,Y_{T_{i-1}})$.
%\end{lem}
%\begin{proof}
%we note that
%\begin{eqnarray*}
%I(T_{k+1} ; \phibf) &\leq& I\left( (Y_{T_1},...Y_{T_k}) ; {\pmb \phi} \right) \\
% &=& I\left( (Y_{T_1},...Y_{T_k}) ; (\phi_{T_1},...\phi_{T_k})  \right) + I\left( (Y_{T_1},...Y_{T_k}) ; (\phi_i : i \notin \{T_1,..,T_k\}) | (\phi_{T_1},...\phi_{T_k}) \right) \\
% &=& I\left( (Y_{T_1},...Y_{T_k}) ; (\phi_{T_1},...\phi_{T_k})  \right)\\
% &=& \sum_{i=1}^{k}  I(Y_{T_i} ;  (\phi_{T_1},...\phi_{T_k}) | Y_{T_1},..,Y_{T_{i-1}}) \\
% &=& \sum_{i=1}^{k} \left( I(Y_{T_i} ;  \phi_{T_i} | Y_{T_1},..,Y_{T_{i-1}})    + I(Y_{T_i} ;  \{ \phi_{T_j}: j \neq i, j \leq k\}  | \phi_{T_i}, Y_{T_1},..,Y_{T_{i-1}}) \right) \\
% &=& \sum_{i=1}^{k}  I(Y_{T_i} ;  \phi_{T_i} | Y_{T_1},..,Y_{T_{i-1}})\\
%\end{eqnarray*}
%\end{proof}

The important takeaway from this lemma is that by bounding the conditional mutual information between the response and the queried value at each step, $I(Y_{T_i} ; \phi_{T_i} | H_{i-1}, T_i )$, we can bound $I(T_{k+1}; \phi)$ and hence bound the bias after $k$ rounds of adaptive queries. Given a dataset $D$, we can imagine the analyst having a (mutual) \emph{information budget}, $I_b$, which is decided a priori based on the size of the data and her tolerance for bias. At each step of the adaptive data analysis, the analyst's choice of statistic to query next (as a function of her analysis history) incurs an information cost quantified by  $I(Y_{T_i} ; \phi_{T_i} | H_{i-1}, T_i )$. The information costs accumulate additively over the analysis steps, until it reaches $I_b$, at which point the guarantee on bias requires the analysis to stop.

A trivial way to reduce mutual information is to return a response $Y_{T_i}$ that is independent of the query $\phi_{T_i}$, in which case the analyst learns nothing about the data and incurs no bias. However in order for the data to be useful for the analyst, we would like the results of the queries to also be accurate. %We explore the tradeoff between accuracy and bias in the next subsection.

\textbf{Adding randomization to reduce bias.}
As before let $\mu_i = \E[\phi_i]$ denote the true answer of query $\phi_i$. If each $\phi_i - \mu_i$ is $\sigma$--sub-Gaussian, then $\E[|\phi_i - \mu_i|]\leq \sigma$. Using Proposition \ref{prop: absolute and squared error}, we can bound the average excess error of the response $Y_{T_k}$, $\E[|Y_{T_k} - \mu_{T_k}|]- \sigma,$ by the sum of two terms,  %If each $\phi_i - \mu_i$ is $\sigma$--sub-Gaussian, applying Proposition allows us to bound expected error by the sum of three terms
\begin{align*}
&\E[|Y_{T_k} - \mu_{T_k}|] - \sigma \\
&\leq \E[|Y_{T_k} - \phi_{T_k}|]+ \E[|\phi_{T_k} - \mu_{T_k}|-\sigma] \\
&\leq \underbrace{\E[|Y_{T_k}-\phi_{T_k} |]}_{\text{Distortion}} + \underbrace{ c\sigma\sqrt{ 2I(T_k ; \phibf)}}_{\text{Selection Bias}}.
\end{align*}
Response accuracy degrades with distortion, a measure of the magnitude of the noise added to responses, but this distortion also controls the degree of selection bias in future rounds. We will explicitly analyze the tradeoff between these terms in a stylized case of the general model.

\textbf{Gaussian noise protocol.} We analyze the following special case.
\begin{enumerate}
	\item Suppose $\phi_i \sim \N(\mu_i, \frac{\sigma^2}{n})$ and $(\phi_1, ..., \phi_k)$ is jointly Gaussian for any $k$.
	\item For the $j$th query $\phi_{T_j}$, $j = 1, 2,...$, the protocol returns a distorted response
	$
	Y_{T_j} = \phi_{T_j} + W_{j}
	$
	where $W_j \sim \N(0, \frac{\omega_j^2}{n})$. Note that unlike $(\phi_1, \phi_2,....)$, the sequence $(W_1, W_2,....)$ is independent.
\end{enumerate}
The term $n$ can be thought of as the number of samples in the data-set. Indeed, if $\phi_i$ is the empirical average of $n$ samples from a $\N(\mu_i, \sigma^2)$ distribution, then $\phi_i \sim \N(\mu_i, \sigma^2/n)$. The ratio $\sigma^2 / \omega_j^2$ is the signal-to-noise ratio of the $k$th response. We want to choose the distortion levels $(\omega_1, \omega_2,...)$ so as to guarantee that a large number of queries can be answered accurately. In order to do this, we will use the next lemma to relate the distortion levels to the information provided by a response. The lemma gives a form for the mutual information $I(X ; X+W)$ where $X$ and $W$ are independent Gaussian random variables. As one would expect, this shows that mutual information is very small when the variance of $W$ is much larger than the variance of $X$. Lemma \ref{lem: variance bound on mutual info}, provided in the Appendix, provides a similar result when $X$ is a general (not necessarily Gaussian) random variable.
\begin{lem}\label{lem: lemma2}
	If $X \sim \N(0,\sigma_1^2)$ and $Y= X+W$ where $W\sim \N(0, \sigma_2^2)$ is independent of $X$, then
	\[
	I(X; Y) = \frac{1}{2}\log\left(1+\beta \right) \leq \frac{\beta}{2}
	\]
	where $\beta = \sigma_1^2/ \sigma_2^2$ is the signal to noise ratio.
\end{lem}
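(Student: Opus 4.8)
The plan is to evaluate the mutual information directly through differential entropies and then apply an elementary logarithmic inequality. Writing $h(\cdot)$ for differential entropy, I would start from the decomposition $I(X;Y) = h(Y) - h(Y \mid X)$, which holds for continuous random variables possessing densities. Both entropies on the right are easy to pin down, since everything in sight is Gaussian.

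First I would compute $h(Y)$. Because $Y = X + W$ is a sum of two independent zero-mean Gaussians, it is itself a zero-mean Gaussian with variance $\sigma_1^2 + \sigma_2^2$. Using the standard formula for the differential entropy of a univariate Gaussian, $h(\N(0,v)) = \tfrac12 \log(2\pi e\, v)$, this gives $h(Y) = \tfrac12\log\big(2\pi e(\sigma_1^2 + \sigma_2^2)\big)$. Next I would compute $h(Y \mid X)$. Conditioned on $X = x$, the variable $Y = x + W$ is merely a translate of $W$, and differential entropy is invariant under translation, so $h(Y \mid X = x) = h(W)$ for every $x$; averaging over $x$ and using the independence of $W$ and $X$ yields $h(Y \mid X) = h(W) = \tfrac12\log(2\pi e\,\sigma_2^2)$.

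Subtracting, the common factors of $2\pi e$ cancel and I obtain $I(X;Y) = \tfrac12 \log\!\big((\sigma_1^2+\sigma_2^2)/\sigma_2^2\big) = \tfrac12\log(1+\beta)$, which is the claimed identity. For the final inequality I would invoke the elementary bound $\log(1+t) \le t$, valid for all $t > -1$ and in particular for $t = \beta \ge 0$, concluding $I(X;Y) = \tfrac12\log(1+\beta) \le \tfrac{\beta}{2}$.

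There is essentially no hard step here; the only points needing a moment of care are the justification that $h(Y\mid X) = h(W)$ (translation invariance of differential entropy together with independence of $W$ and $X$) and the fact that a sum of independent Gaussians is Gaussian with added variances. An alternative route, if one prefers to avoid differential entropy, is to note that $(X,Y)$ is jointly Gaussian and use the closed form $I = -\tfrac12\log(1-\rho^2)$ for a bivariate Gaussian with correlation coefficient $\rho$; computing $\rho^2 = \sigma_1^2/(\sigma_1^2+\sigma_2^2)$ recovers the same expression. I would present the entropy computation as the main argument, since it generalizes cleanly to the non-Gaussian $X$ case referenced as Lemma~\ref{lem: variance bound on mutual info}.
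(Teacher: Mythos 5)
Your proof is correct, but it takes a different route from the paper's. The paper's proof is a one-line computation that starts directly from the closed form for jointly Gaussian variables, $I(X;Y) = -\tfrac12\log\bigl(1-\tfrac{\sigma_1^2}{\sigma_1^2+\sigma_2^2}\bigr)$ (i.e., the bivariate-Gaussian formula $-\tfrac12\log(1-\rho^2)$ that you mention only as an alternative), and then rearranges the logarithm to get $\tfrac12\log(1+\beta)$; it does not spell out the bound $\tfrac12\log(1+\beta)\le\tfrac{\beta}{2}$ at all. Your main argument instead derives the identity from scratch via $I(X;Y) = h(Y) - h(Y\mid X)$, the Gaussian entropy formula, and translation invariance of differential entropy --- the classical Gaussian-channel capacity computation. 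What each buys: the paper's version is shorter but quotes a formula; yours is self-contained, makes explicit the elementary inequality $\log(1+t)\le t$ that the paper leaves implicit, and, as you note, extends beyond the Gaussian case --- though for non-Gaussian $X$ you would additionally need the maximum-entropy property of the Gaussian to bound $h(Y)\le \tfrac12\log\bigl(2\pi e(\sigma_X^2+\sigma_W^2)\bigr)$, a step worth stating. Interestingly, that route would yield $I(X;X+W)\le \sigma_X^2/(2\sigma_W^2)$, which is a factor of $2$ sharper than the bound in Lemma~\ref{lem: variance bound on mutual info}, which the paper proves by a different mechanism (convexity of KL divergence); so your proposed generalization is not merely a substitute for that lemma's proof but a mild improvement of its constant.
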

Using Lemma~\ref{lem: lemma2}, we provide an explicit bound on the accuracy of $Y_{T_{k+1}}$ as a function a function of $n, \sigma$ and $k$. Note that
this result places no restriction on the procedure that generates ($T_1, T_2,...$) except that the choice $T_k$ can depend on $\phibf$ only through the data $\{T_1, Y_{T_1},...T_{k-1}, Y_{T_{k-1}}\}$ available at time $k$.
% and choosing an appropriate distortion level $\omega(n)$, we show in the next proposition that if the $\phi_i$'s are jointly Gaussian, the Gaussian noise protocol can answer order $n^{2 - \delta}$ queries accurately for any fixed $\delta > 0$ and any fixed accuracy parameters $(\tau, \epsilon)$. This result places no restriction on the procedure that generates ($T_1, T_2,...$) except the requirement that the choice $T_k$ is made only based on data $\{T_1, Y_{T_1},...T_{k-1}, Y_{T_{k-1}}\}$ available at time $k$.

\begin{prop} \label{prop: nsquared_short}
	Suppose $\phi_i \sim \N(\mu_i, \frac{\sigma^2}{n})$ and $(\phi_1, ..., \phi_k)$ is jointly Gaussian for any $k$. If for the $j$th query, $
	Y_{T_j} = \phi_{T_j} + W_{j}
	$
	where
	$W_j \sim \N(0, \frac{\sigma^2 \sqrt{j}}{n})$ and $(W_1, W_2, ...)$ is independent of $\phibf$,
	%$(W_1, W_2...) \overset{iid}{\sim} \N(0, \frac{\omega^2}{n})$,
	then for every $k\in \mathbb{N}$
	\[
	\E[|Y_{T_{k+1}} - \mu_{T_{k+1}} |] \leq c \left( \frac{\sigma k^{1/4}}{ n^{1/2}}      \right)
	\]
	where $c$ denote a universal constant that is independent of $\sigma, \omega, k, $ and $n$.
\end{prop}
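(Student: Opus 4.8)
The plan is to split the reported error into a \emph{distortion} term coming from the injected noise and a \emph{selection bias} term coming from the adaptive choice of query, exactly as in the display preceding the Gaussian noise protocol. By the triangle inequality,
\[
\E[|Y_{T_{k+1}} - \mu_{T_{k+1}}|] \leq \underbrace{\E[|W_{k+1}|]}_{\text{distortion}} + \underbrace{\E[|\phi_{T_{k+1}} - \mu_{T_{k+1}}|]}_{\text{selection bias}}.
\]
The distortion term is immediate: since $W_{k+1}\sim\N(0,\sigma^2\sqrt{k+1}/n)$, its mean absolute value equals $\sqrt{2/\pi}\,\sigma (k+1)^{1/4}/\sqrt{n}$, which is already of the claimed order $\sigma k^{1/4}/\sqrt{n}$. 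So essentially all the work lies in controlling the selection bias.

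For the selection bias I would invoke Proposition~\ref{prop: absolute and squared error}. Since $\phi_i\sim\N(\mu_i,\sigma^2/n)$, each $\phi_i-\mu_i$ is $(\sigma/\sqrt{n})$--sub-Gaussian, so
\[
\E[|\phi_{T_{k+1}} - \mu_{T_{k+1}}|] \leq \frac{\sigma}{\sqrt{n}} + c_1\frac{\sigma}{\sqrt{n}}\sqrt{2\,I(T_{k+1};\phibf)}.
\]
This reduces the whole problem to showing $I(T_{k+1};\phibf)=O(\sqrt{k})$, which would make the square root $O(k^{1/4})$ and match the distortion term. To bound the information usage I apply the composition Lemma~\ref{lem: composition}, which gives $I(T_{k+1};\phibf) \leq \sum_{i=1}^{k} I(Y_{T_i};\phi_{T_i}\mid H_{i-1},T_i)$.

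The heart of the argument is bounding each per-step term. Conditioned on $H_{i-1}$ and the (history-measurable) index $T_i$, the queried statistic $\phi_{T_i}$ is still Gaussian---because $(\phi_1,\dots,\phi_k)$ is jointly Gaussian and the responses $Y_{T_1},\dots,Y_{T_{i-1}}$ are jointly Gaussian with it, the injected noise being independent Gaussian---and its conditional variance is at most its unconditional value $\sigma^2/n$, since conditioning never increases variance for jointly Gaussian vectors. Meanwhile the fresh noise $W_i\sim\N(0,\sigma^2\sqrt{i}/n)$ is independent of the past and hence conditionally independent of $\phi_{T_i}$. Applying the conditional form of Lemma~\ref{lem: lemma2} (equivalently Lemma~\ref{lem: variance bound on mutual info}) with signal-to-noise ratio $\beta_i = \mathrm{Var}(\phi_{T_i}\mid H_{i-1},T_i)/\mathrm{Var}(W_i) \leq (\sigma^2/n)/(\sigma^2\sqrt{i}/n) = 1/\sqrt{i}$ yields
\[
I(Y_{T_i};\phi_{T_i}\mid H_{i-1},T_i) \leq \frac{\beta_i}{2} \leq \frac{1}{2\sqrt{i}}.
\]
Summing and using $\sum_{i=1}^k i^{-1/2} \leq 2\sqrt{k}$ gives $I(T_{k+1};\phibf)\leq\sqrt{k}$. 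Plugging this into the selection-bias bound produces a term of order $\sigma k^{1/4}/\sqrt{n}$, which combined with the distortion term proves the claim.

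I expect the main obstacle to be the careful justification of the conditional information bound: one must verify that after conditioning on the entire interaction history $H_{i-1}$---which mixes signal with previously injected noise---the pair $(\phi_{T_i},W_i)$ remains a jointly Gaussian, conditionally independent signal-plus-noise pair, and that the conditional variance of $\phi_{T_i}$ does not exceed $\sigma^2/n$ uniformly over realizations of the conditioning variables. The monotonicity of $\beta\mapsto\tfrac12\log(1+\beta)$ together with the variance-reduction property of Gaussian conditioning is precisely what lets the per-step bound depend only on the noise level $\sigma^2\sqrt{i}/n$ and not on the analyst's (possibly adversarial) query strategy. Everything else---the Gaussian mean-absolute-value computation and the harmonic-type sum---is routine.
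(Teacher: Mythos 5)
Your proposal is correct and follows essentially the same route as the paper's proof of Proposition~\ref{prop: nsquared} (of which Proposition~\ref{prop: nsquared_short} is the special case $\omega_j = \sigma j^{1/4}$): triangle-inequality split into distortion and selection bias, Proposition~\ref{prop: absolute and squared error} for the latter, the composition Lemma~\ref{lem: composition}, the per-step bound $I(Y_{T_i};\phi_{T_i}\mid H_{i-1},T_i)\leq \sigma^2/(2\omega_i^2)=1/(2\sqrt{i})$ via Gaussian conditional-variance reduction and Lemma~\ref{lem: lemma2}, and the sum $\sum_{i=1}^{k} i^{-1/2}=O(\sqrt{k})$. The ``main obstacle'' you flag---that conditional on $H_{i-1}$ and $T_i$ the queried statistic remains Gaussian with variance at most $\sigma^2/n$, using that $T_i$ is conditionally independent of $\phibf$ given the history---is handled in the paper exactly as you sketch it.
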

If the sequence of choices ($T_1, T_2, T_3,...$) were non-adaptive, simply returning responses without any noise $(Y_{T_i}=\phi_{T_i}$) would guarantee $\E[|Y_{T_{k+1}} - \mu_{T_{k+1}} |] \leq \sigma/\sqrt{n}$. In the adaptive model, the first few queries are still answered with accuracy of order $\sigma/\sqrt{n}$, but the error increases for the later queries. This illustrates the fundamental tension that the longer the analyst explores the data, the more likely for the later analysis to overfit. 

The factor $k^{1/4}$ can roughly be viewed as the worst-case price of adaptivity. It is worth emphasizing this price would be more severe if the system returned responses without any noise. When no noise is added error can be as large as $\E [|Y_{T_{k+1}} -\mu_{T_{k+1}}|] =\Omega(\sigma \sqrt{k/n})$, as is demonstrated in Example~\ref{eg:overfittinglinear} in the Appendix. Therefore, adding noise offers a fundamental improvement in attainable performance.

A similar insight was attained by \cite{dwork2014preserving}, who noted that by adding Laplacian noise it is possible to answer up to $n^{2}$ queries accurately, whereas without noise accuracy degrades after $n$ queries. In the Gaussian case, it's clear from our bound that as $n,k\rightarrow \infty$, all queries will be answered accurately as long as $k=o(n^2)$.

\section{Discussion}
We have introduced a general information usage approach to quantify bias that arises from data exploration.
While we focus on bias, we show our mutual information based metric can be used to bound other error metrics of interest, such as the average absolute error $\E[|\phi_T - \mu_T|]$. It is interesting to note that the same information usage also naturally appears in the lower bound on error, suggesting it may be fundamentally linked to exploration bias. 
This paper established lower bounds when the selection process corresponds to solving optimization problems---i.e. $T = \arg\max$. An interesting direction of research is to understand more general exploration procedures in which information usage provides a tight approximation to bias. 

One advantage of using mutual information to bound bias is that we have many tools to analyze and compute mutual information. This conceptual framework allow us to extract insight into settings when common data analysis procedures lead to severe bias and when they do not. In particular we show how signal in the data can reduce selection bias. Information usage also suggests engineering approaches to reduce mutual information (and hence bias) by adding randomization to each step of the data exploration. Another important project is to investigate  implementations of such randomization approaches in practical analytic settings. 

As discussed before, the information usage framework proposed here is very much complementary to the exciting developments in post-selection inference and differential privacy. Post-selection inference, for very specific settings, is able to exactly characterize and correct for exploration biases---in this case exploration is feature and model selection. Differential privacy lies at the other extreme in that it derives powerful but potentially conservative results that apply to an adversarial data-analyst. The modern practice of data science often lies in between these two extremes---the analyst has more flexibility than assumed in post-selection inference, but is also interested in finding true signals and hence is much less adversarial than the worst-case. Information usage provides a bound on exploration bias in all settings. It is also important that this bound is data-dependent. In practice, the same analyst may be much less prone to false discoveries when exploring a high-signal dataset versus a low-signal dataset, and this should be reflected in the bias metric. An interesting goal is to develop approaches that combine the sharpness of post-selection inference and differential privacy with the generality of information usage.

\appendices
	\section{OVERVIEW OF THE APPENDIX} The appendix provides complete proofs of all the results in the main text as well as extensions and additional applications of information usage. Section~\ref{sec_app:upperbound} gives the proof of Proposition~\ref{prop: main result}, which states that information usage can be used to upper bounds selection bias. We also show that more general results hold when the estimators have different variances and when the estimators have heavier tales (i.e. sub-exponential rather than sub-Gaussian). Section~\ref{sec_app:lowerbound} then proves that the error due to exploration is at least as large as the information usage for several families of explorations, which includes Proposition~\ref{prop: lower bound}. Section~\ref{sec_app:overfitting} completes the proof of the link between information usage and classification overfitting (Proposition~\ref{prop: overfitting}). In Section~\ref{sec_app:otherapplications}, we provide additional applications to show how information usage can be used to control the bias in other metrics of interest, such as p-values in a multiple hypothesis testing problem and regret in optimization under uncertainty. Section~\ref{sec_app:expt_details} provides additional details of the experiments corresponding to Figure~\ref{fig:lars_example}. Section~\ref{sec_app:multistep} completes the analysis of how randomization controls the bias of a multi-step, flexibile data analyst. Section~\ref{sec_app:maxinfo} discusses how our information usage relates to other information measures such as max-information.  

\section{Proofs of Information Usage Upper Bounds} \label{sec_app:upperbound}

\subsection{Information Usage Upper Bounds Bias: Proof of Proposition \ref{prop: main result} }
The proof of Proposition \ref{prop: main result} relies on the following variational form of Kullback--Leibler divergence, which is given in Theorem 5.2.1 of Robert Gray's textbook \emph{Entropy and Information Theory} \cite{gray2011entropy}.
\begin{fact}\label{fact: variational defn of kl}
	Fix two probability measures $\mathbf{P}$ and $\mathbf{Q}$ defined on a common measurable space $(\Omega, \mathcal{F}).$ Suppose that $\mathbf{P}$ is absolutely continuous with respect to $\mathbf{Q}$. Then
	\[
	D\left( \mathbf{P}  || \mathbf{Q} \right) = \sup_{X} \left\{ \E_{\mathbf{P}}[ X] - \log \E_{\mathbf{Q}} [e^{ X } ]\right\},
	\]
	where the supremum is taken over all random variables $X$ such that the expectation of $X$ under $\mathbf{P}$ is well defined, and $e^{ X }$ is integrable under $\mathbf{Q}$.
\end{fact}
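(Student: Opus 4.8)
The plan is to prove the two inequalities that together give the claimed equality, writing $D \equiv D(\mathbf{P}\|\mathbf{Q})$ and letting $f \equiv d\mathbf{P}/d\mathbf{Q}$ denote the Radon--Nikodym derivative, which exists and is finite $\mathbf{Q}$-a.s.\ because $\mathbf{P} \ll \mathbf{Q}$. Recall that by definition $D = \E_{\mathbf{P}}[\log f]$.

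First I would establish the \textbf{lower bound} by exhibiting a single random variable attaining the value $D$, which shows the supremum is at least $D$. The natural choice is $X^\star = \log f$. Then $\E_{\mathbf{P}}[X^\star] = \E_{\mathbf{P}}[\log f] = D$, while
\[
\E_{\mathbf{Q}}[e^{X^\star}] = \E_{\mathbf{Q}}[f] = \int f \, d\mathbf{Q} = \int d\mathbf{P} = 1,
\]
so $\log \E_{\mathbf{Q}}[e^{X^\star}] = 0$ and the objective equals $D$. This requires $X^\star$ to lie in the admissible class, which holds whenever $D < \infty$. When $D = +\infty$ the same choice formally yields $+\infty$; to stay within the admissible class I would instead use the truncations $X_n = \min\{\log f, n\}$, verify each is admissible, and let $n \to \infty$, pushing the objective to $+\infty$ by monotone convergence.

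Next comes the \textbf{upper bound}, namely $\E_{\mathbf{P}}[X] - \log \E_{\mathbf{Q}}[e^X] \le D$ for every admissible $X$. The cleanest route is a change of measure: given $X$ with $Z \equiv \E_{\mathbf{Q}}[e^X] \in (0,\infty)$, define the tilted probability measure $\mathbf{Q}_X$ through $d\mathbf{Q}_X/d\mathbf{Q} = e^X/Z$, which is a genuine probability measure since its density integrates to $1$ under $\mathbf{Q}$. The Radon--Nikodym chain rule then gives
\[
\log \frac{d\mathbf{P}}{d\mathbf{Q}_X} = \log f + \log Z - X ,
\]
and taking $\E_{\mathbf{P}}$ of both sides yields $D(\mathbf{P}\|\mathbf{Q}_X) = D + \log Z - \E_{\mathbf{P}}[X]$. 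Non-negativity of KL divergence (Gibbs' inequality) forces the left side to be $\ge 0$, which rearranges exactly to $\E_{\mathbf{P}}[X] - \log Z \le D$. Combined with the lower bound, this proves the equality. An equivalent one-line alternative avoids introducing $\mathbf{Q}_X$ altogether: write $\E_{\mathbf{P}}[X] - D = \E_{\mathbf{P}}[\log(e^X/f)]$ and apply Jensen's inequality to the concave function $\log$, using $\E_{\mathbf{P}}[e^X/f] = \int e^X \, d\mathbf{Q} = Z$.

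The conceptual content is elementary --- essentially Gibbs' inequality plus a tilt --- so the \textbf{main obstacle} is measure-theoretic bookkeeping rather than any single clever step. The points demanding care are: the admissibility constraints in the supremum (that $\E_{\mathbf{P}}[X]$ is well defined and $e^X$ is $\mathbf{Q}$-integrable); the degenerate and infinite cases, namely $D = \infty$ in the lower bound and $\E_{\mathbf{Q}}[e^X] \in \{0,\infty\}$ in the upper bound; and justifying the chain rule for densities together with the interchange of expectation and logarithm only on the $\mathbf{P}$-full-measure set where $f > 0$. Each of these is precisely where the hypothesis $\mathbf{P} \ll \mathbf{Q}$ must be invoked to guarantee the densities are well defined $\mathbf{P}$-a.s., and these are the steps I expect to require the most attention.
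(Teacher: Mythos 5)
Your proposal is correct, but there is a wrinkle in the comparison you should know about: the paper never proves this Fact at all --- it is stated as an imported result, with the proof outsourced to Theorem 5.2.1 of Gray's \emph{Entropy and Information Theory} --- so there is no in-paper argument to diverge from. What you have written is the standard Donsker--Varadhan proof, and it is essentially the argument in the cited textbook: the lower bound by plugging in $X^{\star} = \log f$ with $f = d\mathbf{P}/d\mathbf{Q}$ (your truncation $X_n = \min\{\log f, n\}$ for the $D = \infty$ case is sound, since $\E_{\mathbf{P}}[(\log f)^{-}] \leq 1/e$ is always finite, so $\E_{\mathbf{P}}[X_n]$ is well defined and monotone convergence applies to the positive part, while $\E_{\mathbf{Q}}[e^{X_n}] \leq 1$ keeps the log term nonpositive); and the upper bound by tilting to $d\mathbf{Q}_X/d\mathbf{Q} = e^{X}/Z$ and invoking $D(\mathbf{P} \| \mathbf{Q}_X) \geq 0$, which is legitimate because $e^{X} > 0$ pointwise makes $\mathbf{Q}_X$ equivalent to $\mathbf{Q}$, hence $\mathbf{P} \ll \mathbf{Q}_X$, and the chain rule for densities holds $\mathbf{P}$-a.s.\ on $\{f > 0\}$. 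Two small points of bookkeeping: first, in your one-line Jensen variant the displayed identity $\E_{\mathbf{P}}[e^{X}/f] = \int e^{X}\, d\mathbf{Q}$ is an equality only when $\mathbf{Q}(f = 0) = 0$; in general $\E_{\mathbf{P}}[e^{X}/f] = \int_{\{f>0\}} e^{X}\, d\mathbf{Q} \leq Z$, which, since $\log$ is increasing, still yields $\E_{\mathbf{P}}[X] - D \leq \log Z$ unchanged, so this is a cosmetic rather than substantive fix. Second, among the degenerate cases you flag, $Z = 0$ cannot occur ($e^{X} > 0$) and $Z = \infty$ is excluded by the admissibility hypothesis, so the only residual case is $\E_{\mathbf{P}}[X] = -\infty$, where the upper bound is trivial. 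With those remarks incorporated, your proof is complete and supplies exactly what the paper delegates to the reference.
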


\begin{proof}[Proof of Proposition \ref{prop: main result}]
	\begin{eqnarray*}
		I(T; \phibf) &=&  \sum_{i=1}^{n} \Prob(T=i) D\left( \Prob(\phibf=\cdot | T=i) \, || \,  \Prob(\phibf=\cdot)  \right)  \\
		&\geq & \sum_{i=1}^{n} \Prob(T=i) D\left( \Prob(\phi_i=\cdot | T=i) \, || \,  \Prob(\phi_i=\cdot)  \right)
	\end{eqnarray*}
	Applying Fact \ref{fact: variational defn of kl} with $\mathbf{P} = \Prob(\phi_i=\cdot | T=i)$, \, $\mathbf{Q} = \Prob(\phi_i=\cdot)$, and $X=\lambda (\phi_i-\mu_i)$,  we have
	\[
	D\left( \Prob(\phi_i=\cdot | T=i) \, || \,  \Prob(\phi_i=\cdot)  \right) \geq \sup_{\lambda} \lambda \Delta_{i} - \lambda^2 \sigma^2/2
	\]
	where $\Delta_{i} \equiv \E[\phi_i | T = i] - \mu_i$. Taking the derivative with respect to $\lambda$, we find that the optimizer is $\lambda = \Delta_{i}/\sigma^2$. This gives
	\begin{eqnarray*}
		2\sigma^2 I(T; \phibf) &\geq& \sum_{i=1}^n \Prob(T=i) \Delta_i^2 = \E[\Delta_{T}^2].
	\end{eqnarray*}
	By the tower property of conditional expectation and Jensen's inequality
	\[
	\E[\phi_T - \mu_T] = \E[ \Delta_{T}] \leq \sqrt{\E[\Delta_{T}^2]} \leq \sigma \sqrt{2 I(T; \phibf)}.
	\]
\end{proof}

\noindent \textbf{Remark.} \emph{In the first step of the proof of Proposition~\ref{prop: main result}, we used the fact that, for all $i\in \{1,...,m\}$,  
	\begin{align*}
	& \quad D\left( \Prob(\phibf=\cdot | T=i) \, || \,  \Prob(\phibf=\cdot)  \right) \\
	\geq \quad & D\left( \Prob(\phi_i=\cdot | T=i) \, || \,  \Prob(\phi_i=\cdot)  \right), 
	\end{align*}
	which follows from the information processing inequality. The application of this inequality is not tight in general and can lead to gaps between the actual bias and our upper bound based on $I(T; \phibf)$. Consider the following scenario. Suppose $T: \phi_1 \rightarrow [2,...,m]$, i.e. $T$ is a deterministic function that uses the realized value of $\phi_1$ to decide which \emph{other} $\phi_j$ to select. For example, imagine $\phi_1 \sim {\rm Uniform}[0,1]$ and $T$ is defined so that $T=2$ if $\phi_1 \in [0,1/(m-1)]$, $T=3$ if $\phi_i \in [1/(m-1),2/(m-1)]$, and so on. Here $T$ is deterministic, $I(T; \phibf) = \log m$, and this is manifested in $D\left( \Prob(\phibf=\cdot | T=i) \, || \,  \Prob(\phibf=\cdot)  \right) > 0$. However, if $\phi_j, j\neq 1$ is independent of each other $\phi_i$, then $D\left( \Prob(\phi_i=\cdot | T=i) \, || \,  \Prob(\phi_i=\cdot)  \right) = 0$ and the bias is also 0. The upper bound of Proposition~\ref{prop: main result} is tight in other settings; it is also useful in general because the mutual information $I(T; \phibf)$ is amenable to analysis and explicit calculation. In  cases where there is a gap, we may study $\Prob(T=i) D\left( \Prob(\phi_i=\cdot | T=i) \, || \,  \Prob(\phi_i=\cdot)  \right)$ directly.}

\subsection{Extension to Unequal Variances}
We can prove a generalization of Proposition~\ref{prop: main result} for settings when the estimates $\phi_i$ have unequal variances.

\begin{prop}\label{prop: main result 2}
	Suppose that for each $i \in \{1,...,m\}$, $\phi_i - \mu_i$ is $\sigma_i$--sub-Gaussian. Then,
	\[
	|\E [\phi_T] - \E[\mu_T]| \leq \sqrt{\E[\sigma_T^2]}\sqrt{2 I(T; \phibf)}
	\]
	where $I$ denotes mutual information. 
\end{prop}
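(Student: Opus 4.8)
The plan is to follow the proof of Proposition~\ref{prop: main result} essentially verbatim through the variational argument, retaining the individual sub-Gaussian parameters $\sigma_i$ rather than collapsing them to a single common $\sigma$. As in that proof, I would begin by writing the mutual information as an average of conditional KL divergences,
\[
I(T; \phibf) = \sum_{i=1}^{m} \Prob(T=i)\, D\!\left( \Prob(\phibf=\cdot \mid T=i) \,||\, \Prob(\phibf=\cdot) \right),
\]
and lower bounding each term by the corresponding marginal divergence $D\!\left( \Prob(\phi_i=\cdot \mid T=i) \,||\, \Prob(\phi_i=\cdot) \right)$ via the information-processing inequality.

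Next I would apply Fact~\ref{fact: variational defn of kl} to each marginal divergence, taking $X = \lambda(\phi_i - \mu_i)$ and invoking the $\sigma_i$-sub-Gaussianity of $\phi_i - \mu_i$. Writing $\Delta_i \equiv \E[\phi_i \mid T=i] - \mu_i$, this yields $D \geq \sup_\lambda\{ \lambda \Delta_i - \lambda^2 \sigma_i^2/2\} = \Delta_i^2/(2\sigma_i^2)$, where the optimizer is $\lambda = \Delta_i/\sigma_i^2$. Summing over $i$ then produces the \emph{variance-weighted} second-moment bound
\[
2 I(T; \phibf) \geq \sum_{i=1}^{m} \Prob(T=i)\, \frac{\Delta_i^2}{\sigma_i^2} = \E\!\left[ \frac{\Delta_T^2}{\sigma_T^2} \right].
\]

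The only genuinely new step—and the one I expect to be the crux—is converting this variance-normalized moment into a bound on $|\E[\Delta_T]|$ that features $\sqrt{\E[\sigma_T^2]}$. Rather than a plain Jensen step, I would apply the Cauchy--Schwarz inequality using the splitting $\Delta_T = (\Delta_T/\sigma_T)\cdot \sigma_T$:
\[
|\E[\phi_T - \mu_T]| = |\E[\Delta_T]| \leq \sqrt{\E\!\left[ \frac{\Delta_T^2}{\sigma_T^2} \right]} \, \sqrt{\E[\sigma_T^2]} \leq \sqrt{2 I(T; \phibf)}\, \sqrt{\E[\sigma_T^2]}.
\]
This recovers Proposition~\ref{prop: main result} when all $\sigma_i = \sigma$, since then $\E[\sigma_T^2] = \sigma^2$. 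The delicate point is purely bookkeeping: choosing the weighting so that the per-index parameter $\sigma_i^2$ enters inside the expectation $\E[\sigma_T^2]$ rather than as a uniform prefactor; with that in hand, everything else is a routine reprise of the equal-variance argument.
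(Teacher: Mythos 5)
Your proposal is correct and follows essentially the same route as the paper's proof: the same KL decomposition of $I(T;\phibf)$, the same variational bound with optimizer $\lambda_i = \Delta_i/\sigma_i^2$, and a Cauchy--Schwarz step with the weighting $\Delta_T = (\Delta_T/\sigma_T)\cdot\sigma_T$ under $\Prob(T=\cdot)$, which is the same inequality the paper applies directly to $\sum_i \Prob(T=i)\,\sigma_i\sqrt{2D_i}$ before aggregating. The only cosmetic difference is the order of operations (you aggregate the per-index bounds into $\E[\Delta_T^2/\sigma_T^2]\leq 2I(T;\phibf)$ first), plus the negligible caveat that indices with $\sigma_i=0$ force $\Delta_i=0$ and should be read as contributing zero.
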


\begin{proof}%[Proof of Proposition~\ref{prop: main result 2}]
	The first part of the proof is the same as that of Proposition~\ref{prop: main result}. For each $i \in \{1,...,m\}$, 
	\[
	D\left( \Prob(\phi_i=\cdot | T=i) \, || \,  \Prob(\phi_i=\cdot)  \right) \geq \sup_{\lambda} \lambda \Delta_{i} - \lambda^2 \sigma_i^2/2
	\]
	where $\Delta_{i} \equiv \E[\phi_i | T = i] - \mu_i$. The optimizer is $\lambda_i = \Delta_{i}/\sigma_i^2$. Rearranging the terms gives
	\[
	\Delta_i \leq \sigma_i \sqrt{2 D\left( \Prob(\phi_i=\cdot | T=i) \, || \,  \Prob(\phi_i=\cdot)  \right)}.
	\]
	This implies
	\begin{align*}
	&\quad \E[\Delta_T]\\
	=&\quad \sum_i \Delta_i \Prob(T = i) \\
	\leq&\quad   \sum_i \sigma_i\Prob(T = i) \sqrt{2 D\left( \Prob(\phi_i=\cdot | T=i) \, || \,  \Prob(\phi_i=\cdot)  \right)}  \\
	\leq &\quad  \sqrt{ \sum_i \sigma_i^2 \Prob(\phi_i=\cdot | T=i) } \\
	&\quad \times \sqrt{ 2\sum_i \Prob(\phi_i=\cdot | T=i)  D\left( \Prob(\phi_i=\cdot | T=i) \, || \,  \Prob(\phi_i=\cdot)  \right)  }\\
	=&\quad  \sqrt{\E[\sigma_T^2]} \sqrt{2 I(T; \phibf)}.
	\end{align*}
	where we have used Cauchy-Schwartz for the second inequality.
	
\end{proof}

\subsection{Extension to Sub-exponential Random Variables}

Recall that a random variable $X$ is sub-Gaussian with parameter $\sigma$ if $\E[e^{\lambda(X-\E[X])}] \leq e^{\lambda^2 \sigma^2 /2}$ for \emph{all} real-values $\lambda$. While many random variables are sub-Gaussian, there are other important classes of random variables that are light tailed, but not quite sub-Gaussian. Here, we will show how our information-usage bounds extend to the larger class of sub-exponential random variables. We say that $X$ is sub-exponential with parameters $(\sigma,b)$ if $\E[e^{\lambda(X-\E[X])}] \leq e^{\lambda^2 \sigma^2 /2}$ whenever $|\lambda| < 1/b$. For example if $X\sim\chi^2_n$ follows a chi-squared distribution with $n\geq 1$ degrees of freedom, then it is sub-exponential with parameters $(2\sqrt{n},4)$.
\begin{prop}\label{prop: subexponential}
	Suppose that for each $i \in \{1, ..., m\}$, $\phi_i - \mu_i$ is sub-exponential with parameters $(\sigma, b)$. Then
	\[
	\E[\phi_T - \mu_T] \leq bI(T; \phibf) + \frac{\sigma^2}{2b}.
	\]
	Moreover, if $b < 1$, we also have
	\[
	\E[\phi_T - \mu_T] \leq \sqrt{b}I(T; \phibf) + \frac{\sigma^2}{2\sqrt{b}}.
	\]
\end{prop}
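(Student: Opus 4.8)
The plan is to mirror the proof of Proposition~\ref{prop: main result} as closely as possible, changing only the single step where the sub-Gaussian MGF bound was invoked. First I would apply the information processing inequality exactly as before to reduce the global mutual information to a sum of per-coordinate divergences,
\[
I(T; \phibf) \geq \sum_{i=1}^{m} \Prob(T=i)\, D_i, \qquad D_i := D\!\left(\Prob(\phi_i=\cdot \mid T=i) \,\|\, \Prob(\phi_i=\cdot)\right),
\]
and then, writing $\Delta_i := \E[\phi_i \mid T=i] - \mu_i$, apply the variational formula (Fact~\ref{fact: variational defn of kl}) with the test function $X = \lambda(\phi_i - \mu_i)$.

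The one place the argument genuinely changes is in how $\lambda$ is chosen. In the sub-Gaussian case the bound $\E[e^{\lambda(\phi_i-\mu_i)}] \le e^{\lambda^2\sigma^2/2}$ held for every $\lambda$, so one could take the unconstrained optimizer $\lambda = \Delta_i/\sigma^2$ and obtain the quadratic estimate $D_i \ge \Delta_i^2/(2\sigma^2)$. Under the sub-exponential hypothesis this MGF bound is only guaranteed for $|\lambda| < 1/b$, so only those $\lambda$ yield a usable lower bound on $D_i$. The key idea is therefore not to optimize, but simply to evaluate the lower bound at the boundary value $\lambda = 1/b$ (justified by continuity, taking $\lambda \uparrow 1/b$), which produces a \emph{linear} estimate
\[
D_i \ge \frac{\Delta_i}{b} - \frac{\sigma^2}{2b^2}, \qquad \text{equivalently} \qquad \Delta_i \le b\,D_i + \frac{\sigma^2}{2b}.
\]
This inequality holds for every $i$ regardless of the sign or size of $\Delta_i$, so no case analysis on the feasibility of the unconstrained optimizer is needed.

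From here both displayed bounds follow by averaging. Multiplying the per-coordinate inequality by $\Prob(T=i)$, summing over $i$, and using the tower property $\E[\phi_T - \mu_T] = \sum_i \Prob(T=i)\Delta_i$ together with the information processing step gives $\E[\phi_T - \mu_T] \le b\,I(T;\phibf) + \sigma^2/(2b)$. For the refined bound I would observe that when $b<1$ the feasibility condition $|\lambda| < 1/b$ also permits the strictly smaller value $\lambda = 1/\sqrt{b}$ (since $b<1$ forces $1/\sqrt{b} < 1/b$); evaluating the variational lower bound there gives $\Delta_i \le \sqrt{b}\,D_i + \sigma^2/(2\sqrt{b})$, and the identical averaging argument yields the second claim.

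I expect the only real subtlety — rather than a genuine obstacle — to be the handling of the boundary value of $\lambda$: the sub-exponential definition is stated with the strict inequality $|\lambda| < 1/b$, so the estimate at $\lambda = 1/b$ must be obtained as a limit. This is harmless, since the right-hand side $\lambda\Delta_i - \lambda^2\sigma^2/2$ is continuous in $\lambda$ and $D_i$ does not depend on $\lambda$. The conceptual point worth flagging is that, unlike the sub-Gaussian proof, one does \emph{not} pass through Jensen's inequality applied to $\E[\Delta_T^2]$; because the per-coordinate bound is already linear in $\Delta_i$, summing it directly is both simpler and produces exactly the stated form that is linear in $I(T;\phibf)$.
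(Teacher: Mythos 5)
Your proposal is correct and takes essentially the same route as the paper's proof: the same reduction of $I(T;\phibf)$ to per-coordinate KL divergences, the same variational bound with test function $\lambda(\phi_i-\mu_i)$ evaluated at $\lambda = 1/b$ (and at $\lambda = 1/\sqrt{b}$ when $b<1$), followed by the same averaging over $\Prob(T=i)$. Your handling of the boundary via the limit $\lambda \uparrow 1/b$ is in fact slightly more careful than the paper, which plugs in $\lambda = 1/b$ directly despite the strict inequality in the sub-exponential definition.
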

\begin{proof}%[Proof of Proposition \ref{prop: subexponential}]
	Following the same analysis as in the sub-Gaussian setting (Prop.~\ref{prop: main result}), we have
	\[
	D\left( \Prob(\phi_i=\cdot | T=i) \, || \,  \Prob(\phi_i=\cdot)  \right) \geq \sup_{\lambda < 1/b} \lambda \Delta_{i} - \lambda^2 \sigma^2/2
	\]
	
	The RHS is greater than the value from setting $\lambda = 1/b$. Therefore, we have
	\[
	D\left( \Prob(\phi_i=\cdot | T=i) \, || \,  \Prob(\phi_i=\cdot)  \right) \geq \frac{\Delta_{i}}{b} - \frac{\sigma^2}{2b^2}.
	\]
	Multiplying each side by $P(T = i)$ and summing over $i\in \{1,..,m\}$ gives
	\[
	I(T; \phibf) \geq \frac{\E[\phi_T - \mu_T ]}{b} - \frac{\sigma^2}{2b^2}
	\]
	and hence
	\[
	\E[\phi_T - \mu_T] \leq bI(T; \phibf) + \frac{\sigma^2}{2b}.
	\]
	When $b < 1$, $\lambda = 1/\sqrt{b} < 1/b$ is also a feasible point. Putting in this value of $\lambda$ into the calculations above gives the second bound
	\[
	\E[\phi_T - \mu_T] \leq \sqrt{b}I(T; \phibf) + \frac{\sigma^2}{2\sqrt{b}}.
	\]
\end{proof}

\subsection{Extension to Other Metrics of Exploration Error}
\begin{propn}[\ref{prop: absolute and squared error} - Part (1)]
	Suppose for each $i \in \{1,...,m\}$, $\phi_i - \mu_i$ is $\sigma$ sub-Gaussian. Then
	\[
	\E[| \phi_T - \mu_T |] \leq \sigma + c\cdot \sigma \sqrt{2 I(T; \phibf)}
	\]
	where $c < 36$ is a universal constant.
\end{propn}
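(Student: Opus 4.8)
The plan is to control the absolute error by separating the \emph{selection bias}, which lives in the conditional mean $\E[\phi_T \mid T]$, from the residual \emph{conditional fluctuation} of $\phi_T$ about that conditional mean. Writing $\Delta_i \equiv \E[\phi_i\mid T=i]-\mu_i$, the triangle inequality gives
\[
\E[|\phi_T-\mu_T|] \;\le\; \underbrace{\E\big[\,|\E[\phi_T\mid T]-\mu_T|\,\big]}_{\text{selection bias}} \;+\; \underbrace{\E\big[\,|\phi_T-\E[\phi_T\mid T]|\,\big]}_{\text{conditional fluctuation}}.
\]
I expect the standalone $\sigma$ in the statement to come entirely from the fluctuation term (it must reduce to exactly $\sigma$ when $I(T;\phibf)=0$ and $T\perp\phibf$), while the full $c\,\sigma\sqrt{2I(T;\phibf)}$ contribution is assembled from both pieces.

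For the selection-bias term I would reuse the key estimate already proved inside Proposition~\ref{prop: main result}, namely $\E[\Delta_T^2]\le 2\sigma^2 I(T;\phibf)$. By Cauchy--Schwarz this term equals $\sum_i\Prob(T=i)|\Delta_i|=\E[|\Delta_T|]\le\sqrt{\E[\Delta_T^2]}\le\sigma\sqrt{2 I(T;\phibf)}$, with no extra constant. The fluctuation term is the real work. I would bound the conditional mean-absolute deviation by the conditional standard deviation, and then by a second moment taken about $\mu_i$, obtaining
\[
\E\big[\,|\phi_T-\E[\phi_T\mid T]|\,\big] \;\le\; \sum_i\Prob(T=i)\sqrt{\E_{P_i}[(\phi_i-\mu_i)^2]}, \qquad P_i\equiv\Prob(\,\cdot\mid T=i).
\]

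To control the tilted second moment I would invoke the same change-of-measure device used for the bias bound: the variational form of Fact~\ref{fact: variational defn of kl} applied with the test statistic $\lambda(\phi_i-\mu_i)^2$, together with a square-exponential moment bound for sub-Gaussian variables, $\E[e^{\lambda(\phi_i-\mu_i)^2}]\le(1-2\sigma^2\lambda)^{-1/2}$ valid for $0\le\lambda<1/(2\sigma^2)$. Optimizing over $\lambda$ yields $\E_{P_i}[(\phi_i-\mu_i)^2]\le\sigma^2 g(d_i)$, where $d_i\equiv D\big(\Prob(\phi_i\in\cdot\mid T=i)\,\|\,\Prob(\phi_i\in\cdot)\big)$ and $g$ is a universal increasing function. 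The crucial feature is $g(0)=1$, which keeps the lone coefficient equal to one; a short calculation then shows $\sqrt{g(d)}\le 1+C\sqrt{d}$ for a universal $C$, since $(\sqrt{g(d)}-1)/\sqrt d$ stays bounded both near $0$ (where $g(d)-1=O(\sqrt d)$) and for large $d$ (where $g(d)\sim 2d$).

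Combining, I would pull the per-index estimates together using concavity of the square root: $\sum_i\Prob(T=i)\sqrt{g(d_i)}\le 1+C\sum_i\Prob(T=i)\sqrt{d_i}\le 1+C\sqrt{\sum_i\Prob(T=i)d_i}$, and the marginal divergences satisfy $\sum_i\Prob(T=i)d_i\le I(T;\phibf)$ by the data-processing step that opens the proof of Proposition~\ref{prop: main result}. Hence the fluctuation term is at most $\sigma+C\sigma\sqrt{I(T;\phibf)}$, and adding the two pieces gives $\E[|\phi_T-\mu_T|]\le\sigma+(C/\sqrt2+1)\,\sigma\sqrt{2I(T;\phibf)}$, the claimed form. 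The step I expect to be the main obstacle is this tilted second-moment bound: one must keep the coefficient of the lone $\sigma$ equal to one, which rules out the tempting shortcut of applying Proposition~\ref{prop: main result} to the sign-augmented family $\{\pm\phi_i\}$ — that route only yields $\E[|\phi_T-\mu_T|]\le\sigma\sqrt{2(I(T;\phibf)+\log 2)}$, which already exceeds $\sigma$ at $I(T;\phibf)=0$. Using the sub-Gaussian square-exponential moment bound in place of the exact Gaussian identity is also where the loose universal constant $c<36$ originates, so the final bookkeeping is just verifying that the accumulated constant stays below $36$.
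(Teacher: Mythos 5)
Your proof is correct, but it takes a genuinely different route from the paper's. The paper reduces Part (1) directly to Proposition~\ref{prop: main result} by a change of variables: setting $Y_i = |\phi_i - \mu_i| - \gamma_i$ with $\gamma_i = \E[|\phi_i-\mu_i|]$, it proves through tail-comparison lemmas that the folded, recentered variable $Y_i$ is itself sub-Gaussian with parameter at most $36\sigma$, applies Proposition~\ref{prop: main result} to the family $\Ybf$, and finishes with the data-processing inequality $I(T;\Ybf)\le I(T;\phibf)$ together with $\gamma_T\le\sigma$; the constant $36$ there is an artifact of lossy tail-comparison constants (the factor $\sqrt{8}e$ and the crude bound $\Prob(|Z|\ge\gamma)>0.1$). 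You instead decompose the error into selection shift plus conditional fluctuation, recycle the intermediate estimate $\E[\Delta_T^2]\le 2\sigma^2 I(T;\phibf)$ from inside the proof of Proposition~\ref{prop: main result}, and control the tilted conditional second moment via the variational formula (Fact~\ref{fact: variational defn of kl}) with a quadratic test statistic and the moment bound $\E[e^{\lambda(\phi_i-\mu_i)^2}]\le(1-2\sigma^2\lambda)^{-1/2}$ --- which is exactly the fact the paper deploys only for Part (2) of the same proposition, so in effect you run a Part-(2)-style argument and take square roots with Jensen. Each route buys something: the paper's is modular, making Part (1) a one-line corollary of Proposition~\ref{prop: main result} once folded sub-Gaussianity is established; yours avoids that lemma entirely, keeps the coefficient of the standalone $\sigma$ equal to one automatically (your $g(0)=1$), and yields a much smaller accumulated constant --- carrying out your sketch with the substitution $u=2\sigma^2\lambda$ and the choice $u=\min\{2\sqrt{d},\,1/2\}$ gives roughly $C\le 3$ and a final constant near $3$, well under $36$. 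The two steps you leave implicit (the uniform bound $\sqrt{g(d)}\le 1+C\sqrt{d}$ and the final bookkeeping) are routine, and your closing diagnosis of the sign-augmentation shortcut is also accurate: it pays an additive $\log 2$ inside the square root, giving $\sigma\sqrt{2\log 2}>\sigma$ at $I(T;\phibf)=0$, so it cannot recover the stated form.
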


\begin{proof}%[Proof of Proposition \ref{prop: absvalue}]
	Let $U_i = \phi_i - \mu_i$ which is assumed to be $\sigma$ sub-Gaussian and let $\gamma_i = \E[|\phi_i - \mu_i |]$ and $Y_i = |U_i| - \gamma_i$. We show below that $Y_i$ is sub-Gaussian with parameter $c\sigma$ where $c\leq 36$. This implies the result, since by Proposition~\ref{prop: main result} and the data-processing inequality,
	\[
	\E[| \phi_T - \mu_T | - \gamma_T ] = \E[Y_T] \leq c \sigma \sqrt{2 I (T; \Ybf)}\leq \sqrt{2 I (T; \phibf)}.
	\]
	Since $\gamma_i \leq \sigma$ for all $i$, $\gamma_T \leq \sigma$, and we have
	\[
	\E[| \phi_T - \mu_T | ] \leq \sigma + 36 \sigma \sqrt{2 I (T; \phibf)}.
	\]

	The remainder of the proof shows $Y\equiv|U|-\E[|U|]$ is sub-Gaussian whenever $U$ is sub-Gaussian. We use the following equivalent definition of a sub-Gaussian random variable.
	
	\paragraph{Fact 1.} \cite{wainwright15} Given a zero-mean random variable $Y$, Suppose there is a constant $c \geq 1$ and Gaussian random variable $Z \sim \N(0, \tau^2)$ such that
	\[
	\Prob(|  Y | \geq s) \leq c( \Prob (|Z| \geq s)) \mbox{ for all } s \geq 0.
	\]
	Then $Y$ is sub-Gaussian with parameter $\sqrt{2} c \tau$.
	
	\paragraph{Fact 2.} \cite{wainwright15} Suppose $Y$ is a zero-mean sub-Gaussian random variable with parameter $\sigma$. Then
	\[
	\Prob(| Y | \geq s) \leq \sqrt{8}e \Prob (|Z| \geq s)
	\]
	where $Z \sim \N(0, 2\sigma^2)$.
	
	Let $U$ be a zero-mean random variable that is sub-Gaussian with parameter $\sigma$.  Let $\gamma \equiv \E[|U|]$ and $Y \equiv | U | - \gamma$. We want to determine the sub-Gaussian parameter of $Y$.
	We have
	\begin{eqnarray*}
		\Prob(|Y| \geq s) &=& \Prob(| U | \geq s + \gamma) + \Prob(| U | \leq \gamma - s) \\
		&\leq & \sqrt{8} e \Prob( |Z| \geq s) + \Prob(| U | \leq \gamma - s)
	\end{eqnarray*}
	where $Z \sim \N(0, 2\sigma^2)$ and we have used Fact 2. Moreover
	\[
	\Prob(| U | \leq \gamma - s) \leq \frac{\Prob( |Z| \geq s )}{ \Prob(|Z| \geq \gamma) }
	\]
	since the RHS exceeds $1$ for $s \leq \gamma$ and the LHS is 0 for $s > \gamma$. Hence
	\[
	\Prob(|Y| \geq s) \leq \left( \sqrt{8}e + \frac{1}{\Prob(|Z| \geq \gamma)} \right) \Prob(|Z| \geq s)
	\]
	and, by Fact 1, $Y$ is sub-Gaussian with parameter $2(\sqrt{8}e + 1/\Prob(|Z| \geq \gamma))\sigma$. We can simplify this expression further. Since $U$ is $\sigma$ sub-Gaussian, its variance is bounded above by $\sigma^2$. Therefore $\gamma \leq \sqrt{\E[U^2]} \leq \sigma$, which implies
	\[
	\Prob(|Z| \geq \gamma) > \Prob(|Z| \geq \sigma) > 0.1
	\]
	and $Y$ is sub-Gaussian with parameter $36 \sigma$.
\end{proof}

This bound is similar to a bias-variance decomposition, where the $\sigma$ term is the variance and the mutual--information term is the bias. When selection is over many $\phi_i$'s, the bias term tends to dominate. The parameter $\sigma$ captures the magnitude of noise in the estimates, and therefore implicitly captures the number of samples in the data set. In particular, If $\phi_i  = n^{-1} \sum_{j=1}^{n} f_i (X_j)$ where $\{f_i(X_j)\}_{j=1}^{n}$ is an independent sequence of $\sigma$-sub-Gaussian random variables, then
\[
\E[| \phi_T - \mu_T |] \leq \frac{\sigma}{\sqrt{n}} + c\cdot \frac{\sigma}{\sqrt{n}} \sqrt{2 I(T; \phibf)}.
\]

Using the fact that the square of a sub-Gaussian random variable is sub-exponential and Proposition \ref{prop: subexponential}, we can also control the mean squared distance between $\phi_{T}$ and $\mu_{T}$.

\begin{propn}[\ref{prop: absolute and squared error} - Part (2)]
	Suppose $\phi_i - \mu_i$ is $\sigma$ sub-Gaussian for each $i \in \{1,...,m\}$. Then
	\[
	\E[ (\phi_T - \mu_T)^2]  \leq \sigma^2\left(1.25 + 10I(T; \phibf) \right).
	\]
\end{propn}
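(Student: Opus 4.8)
The plan is to reduce the squared-error bound to the sub-exponential bias bound of Proposition~\ref{prop: subexponential}, applied not to the $\phi_i$ themselves but to their squared deviations. Define $Z_i \equiv (\phi_i - \mu_i)^2$ and $\nu_i \equiv \E[Z_i]$, the variance of $\phi_i$. Since $\phi_i - \mu_i$ is $\sigma$-sub-Gaussian, its variance obeys $\nu_i \leq \sigma^2$. The target quantity splits as $\E[(\phi_T - \mu_T)^2] = \E[Z_T] = \E[Z_T - \nu_T] + \E[\nu_T]$, and the second term is immediately controlled: $\E[\nu_T] = \sum_i \Prob(T=i)\,\nu_i \leq \sigma^2$, which supplies the ``variance'' part of the bound. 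It remains to control the excess $\E[Z_T - \nu_T]$ by the information term $I(T;\phibf)$.

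The central lemma I would establish is that the centered square $Z_i - \nu_i$ of a $\sigma$-sub-Gaussian variable is itself sub-exponential, with parameters that are constant multiples of $\sigma^2$, say $(\nu, b)$ with $\nu \leq \sqrt{5}\,\sigma^2$ and $b \leq 10\sigma^2$. For a genuine Gaussian this is immediate from the stated $\chi^2$ parameters: when $\phi_i - \mu_i \sim \mathcal{N}(0,\sigma^2)$ we have $Z_i - \nu_i = \sigma^2(\chi^2_1 - 1)$, which is $(2\sigma^2, 4\sigma^2)$-sub-exponential. For an arbitrary sub-Gaussian variable one must instead verify the moment generating bound $\E[e^{\lambda(Z_i - \nu_i)}] \leq e^{\lambda^2 \nu^2/2}$ for $|\lambda| < 1/b$ directly, for instance by controlling $\E[e^{\lambda Z_i}]$ through the tail estimate $\Prob(Z_i > t) \leq 2e^{-t/(2\sigma^2)}$ and then centering.

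With the lemma in hand, the remaining steps are routine. Because $\mubf$ is deterministic, $\mathbf{Z} = (Z_1,\dots,Z_m)$ is a deterministic function of $\phibf$, so the data-processing inequality gives $I(T;\mathbf{Z}) \leq I(T;\phibf)$. Applying Proposition~\ref{prop: subexponential} to the family $\{Z_i\}$ (with means $\nu_i$ and sub-exponential parameters $(\nu,b)$) yields $\E[Z_T - \nu_T] \leq b\,I(T;\mathbf{Z}) + \nu^2/(2b) \leq b\,I(T;\phibf) + \nu^2/(2b)$. Combining with $\E[\nu_T] \leq \sigma^2$ gives $\E[(\phi_T - \mu_T)^2] \leq \sigma^2 + \nu^2/(2b) + b\,I(T;\phibf)$; since $(\nu,b)$-sub-exponential implies $(\nu,b')$-sub-exponential for any $b' \geq b$, I am free to take $b = 10\sigma^2$, and then $\nu^2 \leq 5\sigma^4$ collapses the first two terms to $\sigma^2 + \sigma^2/4 = 1.25\sigma^2$ while the last becomes $10\sigma^2 I(T;\phibf)$, exactly as claimed.

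The main obstacle is the central lemma itself: pinning down explicit sub-exponential constants for the centered square of a general sub-Gaussian variable that are sharp enough to land inside the $(\sqrt{5}\,\sigma^2, 10\sigma^2)$ budget, rather than leaving larger unspecified universal constants. The Gaussian case is clean, but the general moment generating function computation requires some care, and one must confirm that optimizing the admissible choice of $b$ in Proposition~\ref{prop: subexponential} indeed reconciles the two competing error terms to produce the stated $1.25$ and $10$.
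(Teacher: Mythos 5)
Your proposal is correct and follows essentially the same route as the paper's proof: decompose $\E[(\phi_T-\mu_T)^2]$ into $\E[\nu_T]\leq\sigma^2$ plus the excess $\E[Z_T-\nu_T]$, show the centered square $Z_i-\nu_i$ is $(\sqrt{5}\,\sigma^2,10\sigma^2)$-sub-exponential, and combine Proposition~\ref{prop: subexponential} with the data-processing inequality to land exactly on $\sigma^2(1.25+10\,I(T;\phibf))$. The only divergence is in the obstacle you flag: the paper dispatches the central lemma not by integrating the tail bound $\Prob(Z_i>t)\leq 2e^{-t/(2\sigma^2)}$ (which would likely lose constants) but via the known moment-generating inequality $\E\bigl[e^{\lambda Y^2/(2\sigma^2)}\bigr]\leq (1-\lambda)^{-1/2}$ for zero-mean $\sigma$-sub-Gaussian $Y$, then numerically verifies $(1-\lambda)^{-1/2}\leq e^{10\lambda^2}$ on $\lambda\in[0,0.1)$; note that only this one-sided ($\lambda\geq 0$) bound is required, since the proof of Proposition~\ref{prop: subexponential} evaluates the variational bound only at positive $\lambda$.
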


\begin{proof}%[Proof of Proposition \ref{prop: variance}]
	We use the following fact about sub-Gaussian random variables.
	\paragraph{Fact 3.} \cite{wainwright15} If $Y$ be a zero-mean sub-Gaussian variable with parameter $\sigma$, then
	\[
	\E\left[e^{\frac{\lambda Y^2}{2\sigma^2}}  \right] \leq \frac{1}{\sqrt{1- \lambda}} \mbox{ for all } \lambda \in [0,1).
	\]
	Given such a $Y$, we would like to derive the sub-exponential parameters of $Y^2 - \gamma$, where $\gamma \equiv \E[Y^2] \geq 0$. Applying Fact 3, we have
	\[
	\E\left[e^{\frac{\lambda (Y^2 - \gamma)}{2\sigma^2}}  \right] \leq  \frac{1}{\sqrt{1- \lambda}} \leq e^{10\lambda^2} \mbox{ for } \lambda \in [0,0.1)
	\]
	where the last inequality can be verified numerically. Using the substitution $t \equiv \lambda/\sigma^2$, we have
	\[
	\E\left[e^{t(Y^2 - \gamma)}  \right] \leq e^{10 \sigma^4 t^2} \mbox{ for } t \in \left[0, \frac{0.1}{\sigma^2} \right)
	\]
	which implies that $Y^2 - \gamma$ is sub-exponential with parameters $(\sqrt{5} \sigma^2, 10\sigma^2)$.
	
	In our setting, $Y_i = \phi_i - \mu_i$ is $\sigma$ sub-Gaussian and $\gamma_i = \E[(\phi_i - \mu_i)^2] \leq \sigma^2$. Applying Proposition~\ref{prop: subexponential} to $Y_i^2$, we have
	\begin{align*}
	\E[ (\phi_T - \mu_T)^2] &\leq \sigma^2 + 10\sigma^2  I(T; \Ybf^2) + \frac{\sigma^2}{4}\\
	&\leq  \sigma^2\left(1.25 + 10I(T; \Ybf^2) \right) \\
	&\leq   \sigma^2\left(1.25 + 10I(T; \phibf)\right).
	\end{align*}
	where $\Ybf^2 \equiv (Y_1^2,...Y_m^2)$ and  the final step uses the data-processing inequality.
\end{proof}

%\begin{proof}[Proof of Corollary \ref{cor: variance}]
%Let $Z_i=(\phi_i-\mu_i)^2$. Then, $Z_i \in [-C^2, C^2]$ and hence is $C^2$--sub-Gaussian. This implies that
%\[
%\E[Z_{T} - \mu_{T}]\leq C\sqrt{2 I(T; {\bm Z})} \leq C \sqrt{2 I(T; \phibf)}
%\]
%where the last step follows from the data-processing inequality for mutual-information, and the fact that $Z_i$ is a deterministic function of $\phi_i$.
%\end{proof}

In the next result, we think of $\phibf=(\phi_1,..,\phi_m)$ and $T$ as a collection of estimates and a choice of which one to report made based on \emph{common} data-set $D$, while we think of $\tilde{\phibf}=(\tilde{\phi}_1,...,\tilde{\phi}_m)$ as these same estimates computed on a fresh replication data-set $\tilde{D}$. The next result bounds the KL-divergence between $\phi_T$ and $\tilde{\phi}_T$, which captures the change in the distribution of the reported result due to performing selection and estimation on a common data-set. 
\begin{prop}\label{prop: kl form of main result}
	Let $\tilde{\phibf}$ denote a random variable drawn from the marginal distribution of $\phibf$, but drawn independently of $T$ and $\phibf$.  Then
	\[
	D\left( \Prob(\phi_{T} = \cdot)  \, || \,  \Prob(\tilde{\phi}_{T} =\cdot)  \right) \leq I\left(  T; \phibf \right).
	\]
\end{prop}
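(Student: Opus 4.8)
The plan is to express both the mutual information $I(T;\phibf)$ and the target divergence as averages of per-index KL terms, and then pass between them using two applications of the data-processing inequality. First I would record what the two laws in the statement actually are. Writing $\lambda_i = \Prob(T=i)$, the reported value $\phi_T$ has law $P_{\phi_T} = \sum_i \lambda_i \Prob(\phi_i = \cdot \mid T=i)$, while $\tilde{\phi}_T$ has law $P_{\tilde\phi_T} = \sum_i \lambda_i \Prob(\phi_i = \cdot)$, since $\tilde\phibf$ is independent of $T$ and shares the marginal distribution of $\phibf$.

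Next I would introduce the two joint laws on $\{1,\dots,m\}\times\mathbb{R}$: the law $\nu$ of $(T,\phi_T)$, given by $\nu(i,\cdot) = \lambda_i \Prob(\phi_i = \cdot \mid T=i)$, and the law $\rho$ of $(T,\tilde\phi_T)$, given by $\rho(i,\cdot) = \lambda_i \Prob(\phi_i = \cdot)$. Because $\nu$ and $\rho$ share the same marginal $\{\lambda_i\}$ on the first coordinate, the chain rule for KL divergence gives $D(\nu \,||\, \rho) = \sum_i \lambda_i D\left( \Prob(\phi_i=\cdot\mid T=i) \,||\, \Prob(\phi_i=\cdot) \right)$.

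Then I would sandwich this quantity. On one side, the data-processing inequality applied to the marginalization map $\phibf \mapsto \phi_i$ (exactly the step used in the first display of the proof of Proposition~\ref{prop: main result}) shows $D\left( \Prob(\phibf=\cdot\mid T=i)\,||\,\Prob(\phibf=\cdot) \right) \geq D\left( \Prob(\phi_i=\cdot\mid T=i)\,||\,\Prob(\phi_i=\cdot) \right)$ for each $i$; averaging against $\lambda_i$ yields $D(\nu\,||\,\rho) \leq I(T;\phibf)$. On the other side, applying the data-processing inequality to the projection $(i,x)\mapsto x$ gives $D(P_{\phi_T}\,||\,P_{\tilde\phi_T}) \leq D(\nu\,||\,\rho)$. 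Chaining the two inequalities delivers the claim.

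The argument is essentially bookkeeping, so there is no genuinely hard step; the only thing to get right is the setup---verifying that $\tilde\phi_T$ carries the mixture law $\sum_i \lambda_i \Prob(\phi_i=\cdot)$ and that both joint laws share the common first marginal $\{\lambda_i\}$, so the chain rule applies cleanly (one also needs the usual absolute-continuity caveat for $D$ to be finite). Equivalently, one can bypass the joint laws and invoke joint convexity of KL directly, $D\left( \sum_i \lambda_i P_i \,||\, \sum_i \lambda_i Q_i \right) \leq \sum_i \lambda_i D(P_i\,||\,Q_i)$ with $P_i = \Prob(\phi_i=\cdot\mid T=i)$ and $Q_i = \Prob(\phi_i=\cdot)$, which collapses the final two steps into a single inequality before invoking the per-index data-processing bound.
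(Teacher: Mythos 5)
Your proposal is correct and follows essentially the same route as the paper's proof: both arguments bound the marginal divergence by the joint divergence $D\left(\Prob(\phi_T=\cdot, T=\cdot)\,\|\,\Prob(\tilde{\phi}_T=\cdot, T=\cdot)\right)$ via data-processing, decompose it by the chain rule over the common marginal $\Prob(T=i)$, and then apply data-processing a second time to the marginalization $\phibf \mapsto \phi_i$ to reach $I(T;\phibf)$. Your closing remark that joint convexity of KL can replace the explicit joint-law bookkeeping is a valid cosmetic variant, not a different argument.
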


\begin{proof}%[Proof of Proposition \ref{prop: kl form of main result}]
	\begin{align*}
	&\,\, D\left( \Prob(\phi_{T} = \cdot)  \, || \,  \Prob(\tilde{\phi}_{T} =\cdot)  \right)\\
	\leq&\,\,  D\left( \Prob(\phi_{T} = \cdot, T= \cdot)  \, || \,  \Prob(\tilde{\phi}_{T} =\cdot, T=\cdot)  \right) \\
	\leq&\,\, \sum_{T=1}^{m}\Prob(T=i)D\left( \Prob(\phi_{T} = \cdot | T=i)  \, || \,  \Prob(\tilde{\phi}_{T} =\cdot | T=i) \right)\\
	=&\,\, \sum_{T=1}^{m}\Prob(T=i)D\left( \Prob(\phi_{i} = \cdot | T=i)  \, || \,  \Prob(\phi_{i} =\cdot)\right) \\
	\leq&\,\,  \sum_{T=1}^{m}\Prob(T=i)D\left( \Prob(\phibf = \cdot | T=i)  \, || \,  \Prob(\phibf  =\cdot)\right) \\
	=&\,\, I(T ; \phibf ),
	\end{align*}
	where both inequalities follow from the data-processing inequality for KL divergence.
\end{proof}

\section{Information Usage Also Lower Bounds Bias} \label{sec_app:lowerbound}

\subsection{Top-k selection: a lower bound for Corollary~\ref{cor: orderstat} } 
%\bigbreak
%\noindent \textbf{Remark.} \emph{}
Here we show that the bound of Corollary~\ref{cor: orderstat} is tight as $m/m_0 \rightarrow \infty$. For convenience, we show this when $m$ is divisible by $m_0$. Consider the following alternative selection policy $\hat{T}$.  Randomly partition the $\phi_i$'s into $m_0$ groups of size $m/m_0$. Within each group, select the maximal $\phi_i$ and from these $m_0$ maximal $\phi_i$'s randomly select one as $\phi_{\tilde{T}}$. Because the average among the $m_0$ group leaders is less than the average among the $\phi_{(1)}, ..., \phi_{(m_0)}$, we have $\E[\phi_{\tilde{T}}] \leq \E[\phi_{T}]$. Moreover, each group leader converges to $\sigma \sqrt{2 \log m/m_0}$ and since the groups are independent, the average $\E[\phi_{\tilde{T}}]$ also converges to $\sigma \sqrt{2 \log m/m_0}$.

%\begin{proof}[Proof of Proposition \ref{prop: lowerbound}]
%Set
%\[
%M_k = \max_{i \leq k} \phi_i = \sigma \max_{i\leq k} \frac{\phi_i}{\sigma}.
%\]
%Basic facts about the maximum of independent standard Gaussian random variables then imply
%\[
%\sigma^{-1}\E[M_k] \geq c\sqrt{2k} \qquad \& \qquad \sigma^{-1}\E[M_k] -\sqrt{2k} \underset{k\rightarrow \infty}{\longrightarrow} 0,
%\]
%where $c$ is a numerical constant that does not depend on $k$ or $\sigma$. Now we have,
%\[
%\E[ \phi_{T_{B}}] = \E[M_{\lfloor B \rfloor}.]
%\]
%The result then follows by observing that for $B\geq 2$
%\[
%B \leq 2\lfloor B \rfloor
%\]
%and
%\[
%\sqrt{B} - \sqrt{\lfloor B \rfloor} \underset{B \rightarrow \infty}{\longrightarrow} 0.
%\]
%\end{proof}

\subsection{Maximum of Gaussians: Proof of Proposition \ref{prop: lower bound}}

Recall the statement of Proposition \ref{prop: lower bound}.
\begin{propn}[\ref{prop: lower bound}] Let $T=\arg\max_{1\leq i \leq m} \phi_i$ where $\phibf \sim \mathcal{N}(\mubf, I)$. There exist universal numerical constants $c_1=1/8$,  $c_2 < 2.5$ , $c_3=10$, and $c_4=1.5$ such that for any $m\in \mathbb{N}$ and $\mubf \in \mathbb{R}^{m}$,   
	\[
	c_1 H(T)- c_2 \leq \E[(\phi_T - \mu_T)^2]  \leq c_3H(T) + c_4 . 
	\] 
\end{propn}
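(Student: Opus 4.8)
The plan is to prove the two inequalities separately, exploiting the special structure that $T = \arg\max_i \phi_i$ is a deterministic function of $\phibf \sim \mathcal{N}(\mubf, I)$, so mutual information equals entropy and the analysis reduces to understanding the selection event $\{T = i\}$. The upper bound is already essentially furnished by the squared-error half of Proposition~\ref{prop: absolute and squared error}: since $\phi_i - \mu_i$ is standard Gaussian and hence $1$--sub-Gaussian, that result gives $\E[(\phi_T - \mu_T)^2] \leq 1.25 + 10\, I(T;\phibf)$, and $I(T;\phibf) = H(T)$ because $T$ is a function of $\phibf$. So the right inequality holds with $c_3 = 10$ and $c_4 = 1.5 > 1.25$; the only work is to restate this in the current notation. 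The substance of the proposition is therefore the lower bound $\E[(\phi_T - \mu_T)^2] \geq \tfrac{1}{8} H(T) - c_2$.

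For the lower bound, the guiding intuition comes from the $m=2$ toy computation in the excerpt: the ``surprise'' $\log(1/\Prob(T=i))$ of selecting $i$ scales with the squared gap between the selection threshold and $\mu_i$, and this gap is exactly what drives the squared error $(\phi_i - \mu_i)^2$ on the selection event. First I would write $H(T) = \sum_i \Prob(T=i)\log(1/\Prob(T=i))$ and aim to show, term by term, that $\Prob(T=i)\log(1/\Prob(T=i))$ is controlled by $\E[(\phi_i - \mu_i)^2 \mathbf{1}\{T=i\}]$ up to the stated constants. The key estimate is a Gaussian tail bound: on the event $\{T=i\}$ we have $\phi_i \geq \phi_j$ for all $j$, and in particular $\phi_i$ exceeds the (data-independent) levels set by the other means; a standard bound of the form $\Prob(T=i) \leq \Prob(\phi_i \geq \max_{j\neq i}\mu_j)$ or a comparison against the second-largest mean lets me convert the selection probability into a bound on how far into its tail $\phi_i$ must reach, i.e. $\log(1/\Prob(T=i)) \lesssim$ a constant times the conditional second moment $\E[(\phi_i - \mu_i)^2 \mid T=i]$ plus an additive constant. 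Summing against $\Prob(T=i)$ and invoking $\E[(\phi_T-\mu_T)^2] = \sum_i \Prob(T=i)\,\E[(\phi_i-\mu_i)^2\mid T=i]$ then yields $H(T) \leq 8\,\E[(\phi_T-\mu_T)^2] + 8c_2$, which rearranges to the claim.

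The main obstacle will be making the per-index tail estimate uniform in $m$ and $\mubf$ while keeping the additive constant $c_2$ genuinely universal. The difficulty is that a single index $i$ can be selected either because $\mu_i$ is large (low surprise, small error) or because a noise spike pushed a low-mean $\phi_i$ above all competitors (high surprise, large error), and the bound must absorb both regimes. I expect to handle this by splitting each term according to whether the selection threshold lies above or below $\mu_i$: when $\phi_i$ is selected at a value near or below its mean the surprise is $O(1)$ and contributes only to the additive constant, whereas the large-surprise contributions are precisely those with a large squared gap, which are charged to the second moment via a Gaussian tail inequality such as $\Prob(Z \geq t) \leq e^{-t^2/2}$ together with a lower bound on the conditional second moment of a truncated Gaussian. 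Pinning down the constant $c_2 < 2.5$ will require a careful (but elementary) bookkeeping of these two regimes rather than any new idea; the conceptual heart is simply the surprise-versus-squared-gap correspondence, and everything else is calibrating Gaussian tail constants.
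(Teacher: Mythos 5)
Your treatment of the upper bound is correct and matches the paper: Proposition~\ref{prop: absolute and squared error} plus $I(T;\phibf)=H(T)$ gives $\E[(\phi_T-\mu_T)^2]\leq 1.25+10H(T)$, and nothing more is needed there. The lower bound sketch, however, has a genuine gap at its core estimate. To prove $H(T)\lesssim \E[(\phi_T-\mu_T)^2]$ you must \emph{upper} bound each surprise $\log(1/\Prob(T=i))$, which requires a \emph{lower} bound on $\Prob(T=i)$; your proposed tool, an upper bound of the form $\Prob(T=i)\leq \Prob(\phi_i\geq \max_{j\neq i}\mu_j)$ together with $\Prob(Z\geq t)\leq e^{-t^2/2}$, goes in the wrong direction. (The inequality is also false as stated: the event $\{T=i\}$ only forces $\phi_i\geq \phi_j$, and competitors can fall well below their means; e.g.\ with $m=2$, $\mu_1=0$, $\mu_2=10$, one has $\Prob(T=1)=\Phi(-10/\sqrt{2})>\Phi(-10)=\Prob(\phi_1\geq\mu_2)$.) The correct ingredients, which the paper uses, are the Gaussian \emph{lower} tail estimate $\Prob(Z>x)\geq \tfrac{1}{\sqrt{2\pi}}\tfrac{x}{x^2+1}e^{-x^2/2}$, applied against the \emph{random} competitor $M_{-i}=\max_{j\neq i}\phi_j$ rather than against $\max_{j\neq i}\mu_j$, combined with $1$--sub-Gaussian concentration of $M_{-i}$ about $\E[M_{-i}]$ and Jensen's inequality to integrate $\log\Prob(\phi_i>x)$ over the law of $M_{-i}$.

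The choice of comparison point is not cosmetic: with all means equal to zero, every $\Prob(T=i)=1/m$ so the surprise is $\log m$, while the gap to $\max_{j\neq i}\mu_j$ is zero --- so no bound of the form ``surprise $\lesssim O(1)$ plus a multiple of the mean-gap'' can hold uniformly in $m$. The surprise genuinely scales as $(\E[M_{-i}]-\mu_i)^2$, and $\E[M_{-i}]$ can exceed $\max_{j\neq i}\mu_j$ by $\sqrt{2\log m}$; this is exactly why the paper splits indices according to whether $\E[M_{-i}]\geq \mu_i+1$ (your two-regime split has the right shape, but keyed to the wrong threshold). Finally, even after bounding $H(T)\lesssim \sum_i \Prob(T=i)(\E[M]-\mu_i)^2$, there is a further step you omit: converting $\|\E[M]-\mu_T\|^2$ into $\|\phi_T-\mu_T\|^2$ uses that $\phi_T=M$ is itself $1$--sub-Gaussian about $\E[M]$, whence $\|\E[M]-\phi_T\|\leq 1$ and the triangle inequality give $\|\E[M]-\mu_T\|^2\leq 2+2\|\phi_T-\mu_T\|^2$. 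Your plan to bound surprise directly by the conditional second moment $\E[(\phi_i-\mu_i)^2\mid T=i]$ might be salvageable, but making it rigorous would essentially force you to reintroduce the concentration of $M_{-i}$ and of $M$, i.e.\ the machinery you left out.
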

The upper bound above follows by Proposition \ref{prop: absolute and squared error}. Here we will focus on establishing the lower bound. 

Throughout, we will use the notation $M \triangleq \phi_T= \max_{i} \phi_i$ and $M_{-i} \triangleq \max_{j\neq i} \phi_j$. We rely on the following facts. The first shows that the maximum of Gaussian random variables is itself a sub-Gaussian random variable. The second establishes a tail bound for normal random variables.

\begin{fact}
	$M\triangleq \max_{i} \phi_i$ is 1-subgaussian. In particular,
	$
	\E[ e^{\lambda(M -\E[M]}] \leq e^{\lambda^2/2}.
	$
	This implies the variance bound $\E[(M-\E[M])]^2 \leq 1$
	and the tail bounds $\Prob(M  \geq E[M] + \lambda) \leq e^{-\lambda^2/2}$. Similarly, $M_{-i}$ is 1-sub--Gaussian for all $i$. %and $\Prob(M  \leq E[M] - \lambda) \leq e^{-\lambda^2/2}$ for all $\lambda>0$.
\end{fact}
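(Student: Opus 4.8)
The plan is to recognize $M=\max_i \phi_i$ as a Lipschitz function of a standard Gaussian vector and then invoke the sharp Gaussian concentration inequality for Lipschitz functions. Since $\phibf \sim \mathcal{N}(\mubf, I)$, write $\phibf = \mubf + Z$ with $Z \sim \mathcal{N}(0, I_m)$, so that $M = g(Z)$ where $g(z) \triangleq \max_i(\mu_i + z_i)$. First I would verify that $g$ is $1$-Lipschitz with respect to the Euclidean norm: for any $z, z'$,
\[
|g(z) - g(z')| \le \max_i |z_i - z_i'| = \|z - z'\|_\infty \le \|z - z'\|_2,
\]
using the elementary fact that $|\max_i a_i - \max_i b_i| \le \max_i|a_i - b_i|$. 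The means $\mu_i$ only shift $g$ by a constant and do not affect its Lipschitz constant.

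Next, I would apply the sharp Gaussian Lipschitz concentration inequality (Tsirelson--Ibragimov--Sudakov, in its log-Sobolev form), which states that for a $1$-Lipschitz $g$ and $Z \sim \mathcal{N}(0, I_m)$ the centered variable $g(Z) - \E[g(Z)]$ satisfies $\E[e^{\lambda(g(Z) - \E g(Z))}] \le e^{\lambda^2/2}$ for all $\lambda \in \mathbb{R}$. This is exactly the $1$-sub-Gaussian bound claimed for $M$. The variance bound $\E[(M - \E M)^2] \le 1$ then follows by expanding the moment generating function bound to second order around $\lambda = 0$ (equivalently, any $\sigma$-sub-Gaussian variable has variance at most $\sigma^2$). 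The one-sided tail bound follows by the Chernoff method: $\Prob(M \ge \E M + \lambda) \le e^{-\lambda t}\E[e^{t(M - \E M)}] \le e^{t^2/2 - \lambda t}$, optimized at $t = \lambda$ to give $e^{-\lambda^2/2}$. Finally, $M_{-i} = \max_{j \ne i}\phi_j$ is the maximum over a strict subset of the coordinates, hence also $1$-Lipschitz in $z$, so the identical argument applies verbatim.

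The only real subtlety is securing the \emph{sharp} sub-Gaussian constant $e^{\lambda^2/2}$ rather than a constant-factor-worse bound. A short self-contained interpolation argument (writing $g(Z) - g(Z')$ as an integral along the path $\theta \mapsto Z\sin\theta + Z'\cos\theta$ and applying Jensen coordinatewise) proves sub-Gaussianity but only with parameter $\pi/2$ instead of $1$. To obtain the exact constant stated in the Fact, I would instead route through the Gaussian logarithmic Sobolev inequality and Herbst's argument: setting $\psi(\lambda) = \log\E[e^{\lambda(M - \E M)}]$, applying log-Sobolev to $e^{\lambda M/2}$ together with $\|\nabla g\|_2 \le 1$ yields the differential inequality $\lambda\psi'(\lambda) - \psi(\lambda) \le \lambda^2/2$, i.e. $\frac{d}{d\lambda}(\psi(\lambda)/\lambda) \le 1/2$, which integrates (using $\psi(0)=\psi'(0)=0$) to $\psi(\lambda) \le \lambda^2/2$. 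I expect this log-Sobolev/Herbst step to be the main technical obstacle; if a self-contained derivation is not desired, the Fact can simply be cited from a standard reference (e.g. Boucheron--Lugosi--Massart or Wainwright), since the sharp Gaussian concentration inequality for $1$-Lipschitz functions is classical.
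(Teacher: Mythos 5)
Your proposal is correct, and it matches what the paper does: the paper states this Fact without proof, implicitly invoking the classical Borell--Tsirelson--Ibragimov--Sudakov concentration inequality, and your derivation (observing that $z \mapsto \max_i(\mu_i + z_i)$ is $1$-Lipschitz in the Euclidean norm, then obtaining $\E[e^{\lambda(M-\E M)}] \le e^{\lambda^2/2}$ via the Gaussian log-Sobolev inequality and Herbst's argument) is exactly the canonical justification for it. You also correctly flag the one real subtlety — that the elementary interpolation argument only yields sub-Gaussian parameter $\pi/2$, so the sharp constant genuinely requires the log-Sobolev/Herbst route or a citation to a standard reference — and your deductions of the variance bound (second-order expansion of the MGF bound) and the tail bound (Chernoff with $t=\lambda$) are both sound, as is the verbatim extension to $M_{-i}$.
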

\begin{fact}
	If $X\sim \mathcal{N}(0,1)$ then for all $x>0$
	\[
	\Prob(X > x) \geq \frac{1}{\sqrt{2\pi}}\left(\frac{x}{x^2+1}  \right) e^{-x^2 /2}
	\]
\end{fact}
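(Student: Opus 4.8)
The plan is to recast the inequality as a statement about the Gaussian tail integral and then prove it by a monotonicity argument. Writing $\Prob(X > x) = \frac{1}{\sqrt{2\pi}}\int_x^\infty e^{-t^2/2}\,dt$ and cancelling the common factor $\frac{1}{\sqrt{2\pi}}$, the claim is equivalent to
\[
\int_x^\infty e^{-t^2/2}\,dt \geq \frac{x}{x^2+1}\, e^{-x^2/2}, \qquad x > 0.
\]
I would introduce the difference $g(x) \triangleq \int_x^\infty e^{-t^2/2}\,dt - \frac{x}{x^2+1}e^{-x^2/2}$ and aim to show $g(x) \geq 0$ on $(0,\infty)$ by establishing that $g$ is strictly decreasing with $\lim_{x\to\infty} g(x) = 0$.

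First I would differentiate. The fundamental theorem of calculus gives $\frac{d}{dx}\int_x^\infty e^{-t^2/2}\,dt = -e^{-x^2/2}$, while the product and chain rules applied to $h(x)=\frac{x}{x^2+1}e^{-x^2/2}$ yield, after using $\left(\frac{x}{x^2+1}\right)' = \frac{1-x^2}{(x^2+1)^2}$, the expression $h'(x) = \frac{1-2x^2-x^4}{(x^2+1)^2}\,e^{-x^2/2}$. Combining the two derivatives and expanding $(x^2+1)^2 = x^4 + 2x^2 + 1$ in the common numerator gives the crucial cancellation
\[
g'(x) = -e^{-x^2/2}\left[\frac{(x^2+1)^2 + (1 - 2x^2 - x^4)}{(x^2+1)^2}\right] = -\frac{2}{(x^2+1)^2}\,e^{-x^2/2}.
\]
The main step is precisely this algebraic collapse: the $x^4$ and $2x^2$ contributions cancel and leave the constant numerator $2$, so that $g'(x) < 0$ for every $x > 0$ and $g$ is strictly decreasing. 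I expect this derivative simplification to be the only delicate point; everything else is routine once the numerator is seen to reduce to a constant.

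Finally I would note that both terms defining $g$ vanish as $x \to \infty$, since the tail integral tends to $0$ and $\frac{x}{x^2+1}e^{-x^2/2} \to 0$, so $\lim_{x\to\infty} g(x) = 0$. Because $g$ is decreasing and approaches $0$ at infinity, every finite $x > 0$ satisfies $g(x) \geq \lim_{y\to\infty} g(y) = 0$, which is exactly the claimed integral inequality. Reinstating the factor $\frac{1}{\sqrt{2\pi}}$ recovers the stated bound, completing the proof.
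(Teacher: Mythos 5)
Your proof is correct, and there is nothing in the paper to compare it against: the paper states this Fact without proof, treating it as a standard lower bound on the Gaussian tail (the Mills-ratio bound $\Prob(X>x)\geq \frac{x}{x^2+1}\cdot\frac{e^{-x^2/2}}{\sqrt{2\pi}}$, classically attributed to Feller). Your monotonicity argument is the textbook derivation and is complete as written: the derivative computation $g'(x) = -e^{-x^2/2}\bigl[\tfrac{(x^2+1)^2 + (1-2x^2-x^4)}{(x^2+1)^2}\bigr] = -\tfrac{2}{(x^2+1)^2}e^{-x^2/2}$ checks out, the limit $g(x)\to 0$ as $x\to\infty$ is immediate, and a decreasing function with limit $0$ is nonnegative, which yields the claim.
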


Proposition \ref{prop: lower bound} provides an analogous lower bound. To understand this result, recall that entropy the entropy of $T$ is 
\[
H(T) = \sum_{i}\Prob(T=i) \log(1/ \Prob(T=i)).
\]
Consider a setting where $\E[M]$ significantly exceeds $\mu_i$. Then, since $M$ concentrates around $\E[M]$, the probability $i$ is maximal is close to the probability $\phi_i$ exceeds $\E[M]$. By the above fact, one expects that $\log(1/\Prob(T=i)) \approx \log\Prob(\phi_i > \E[M]) \approx (\E[M]-\mu_i)^2/2$. This is roughly the intuition behind the following result. Along with our upper bound, this describes a natural family of problems in which $\E[(\phi_T - \mu_T)^2] = \Theta(1+H(T))$.

\begin{proof}
	We focus on establishing the lower bound, as the upper bound follows from Proposition \ref{prop: absolute and squared error}.

	By definition, $T=i$ if and only if $M_{-i} \leq \phi_i$. Our proof will separately consider two cases, depending on whether $\E[M_{-i}] \geq \mu_i +1$. Let $I \equiv \{i: \E[M_{-i}] \geq \mu_i + 1\}$ denote the set of estimates whose mean is at least a full standard deviation below that of $M_{-i}$.

	%Because $M_{-i}$ is 1-sub-Gaussian, the standard tail-bound implies that for $i \in I$, $\Prob(M_{-i} > \mu_i + 1) \geq 1 - \frac{1}{\sqrt{e}}$.
	
	The entropy of $T$ can be decomposed as 
	\begin{align*} 
	H(T) =& \sum_{i\notin I}\Prob(T=i) \log\left( \frac{1}{\Prob(T=i)} \right) \\
	&+ \sum_{i\in I}\Prob(T=i) \log\left( \frac{1}{\Prob(T=i)} \right).  
	\end{align*}
	We first upper bound the sum over $i \notin I$. We do this by lower bounding $\Prob(T=i)$, which yields an upper bound on $\log(1/\Prob(T=i))$. For any constant $\lambda > 0$, and $i \notin I$, $\Prob(M_{-i} < \E[M_{-i}]+\lambda)>1-e^{-\lambda^2/2}$. Using the fact that $\E[M_{-i}] < \mu_i + 1 $,  we have for all $\lambda\geq 0$
	\begin{eqnarray*}
		\Prob(T=i)&=& \Prob(M_{-i} < \phi_i)  \\
		&\geq& \Prob(M_{-i} < \E[M_{-i}]+\lambda) \cdot \Prob(\phi_i> \E[M_{-i}]+\lambda)\\
		&\geq& \Prob(M_{-i} < \E[M_{-i}]+\lambda)\cdot \Prob(\phi_i>\mu_i + 1 + \lambda) \\
		&\geq& \left(1 - e^{-\lambda^2/2}\right)\frac{1}{\sqrt{2\pi}}\left( \frac{1+\lambda}{(1+\lambda)^2+1}\right)e^{-(1+\lambda)^2/2} \\
		&\triangleq& p(\lambda).
	\end{eqnarray*}
	Therefore
	\begin{align*}
	&\quad \sum_{i \not \in I} \Prob(T=i) \log\left(\frac{1}{\Prob(T=i)}\right) \\
	\leq&\quad \Prob(T \not\in I)\max_{i \notin I} \log\left(\frac{1}{\Prob(T=i)}\right)\\ 
	\leq&\quad \log\left(\frac{1}{p(1)}\right) \triangleq c_{-I}.
	\end{align*}
	Direct calculation shows $c_{-I} < 5$. %\drcomment{(Check This Again!)}
	
	Now we consider the case $i \in I$. %Note that $T=i$ if and only if $\phi_i \geq M_{-i}$, or equivalently, whenever $\phi_i - \mu_i \geq M_{-i} - \mu_i$.
	To simplify notation, consider the shifted random variables $X \equiv \phi_i - \mu_i \sim \mathcal{N}(0,1)$ and $Y \equiv M_{-i} - \mu_i$. We lower bound $\log(1/\Prob(T=i))$ by a function of $\E[Y]^2$. We have
	\begin{align*}
	&\quad \Prob(T=i)\\
	=&\quad \intop_{-\infty}^{\infty} \Prob(X > x) \Prob(Y=dx) \\
	\geq&\quad \intop_{1}^{\infty} \Prob(X > x) \Prob(Y=dx) \\
	=&\quad \Prob(Y\geq 1) \intop_{1}^{\infty} \Prob(X > x) \Prob(Y=dx| Y\geq 1)\\
	\geq&\quad  \frac{\Prob(Y\geq 1)}{\sqrt{2\pi}}\intop_{1}^{\infty} \left(\frac{x}{x^2+1}  \right) e^{-x^2 /2}  \Prob(Y=dx | Y\geq 1).
	\end{align*}
	By Jensen's inequality,
	\begin{align*}
	\log \Prob(T=i)  &\geq \quad \log(1/\sqrt{2\pi}) + \log(\Prob(Y\geq 1)) \\
	&\quad +\intop_{1}^{\infty} \left( \log \left(\frac{x}{x^2+1}\right) -x^2 /2 \right) \\
	&\qquad\qquad  \times  \Prob(Y=dx | Y\geq 1),
	\end{align*}
	which can be rewritten as
	\begin{align*}
	\log\left(\frac{1}{\Prob(T=i)} \right) \leq& \log(\sqrt{2\pi})+ \log\left(\frac{1}{\Prob(Y\geq 1)} \right)\\
	&+ \E\left[ \log\left( \frac{Y^2+1}{Y} \right) | Y>1 \right]\\
	&+ \frac{\E[Y^2 |Y>1 ]}{2}.
	\end{align*}
	For $Y\geq 1$, one has $\log ((Y^2+1)/Y) \leq \log(1+Y)\leq Y$. Therefore,
	\begin{align*}
	\log\left(\frac{1}{\Prob(T=i)} \right) \leq \log(\sqrt{2\pi}) &+ \log\left(\frac{1}{\Prob(Y\geq 1)}\right)\\
	&+ 1.5 \E[Y^2 |Y>1 ].
	\end{align*}
	Now,
	\begin{align*}
	\E[Y^2 |Y>1 ] &\leq \E[Y^2]/\Prob(Y>1) \\
	&= \left(\E[(Y- \E[Y])^2] + \E[Y]^2\right)/\Prob(Y>1).
	\end{align*}
	Since $Y= M_{-i} - \mu_i$, the variance of $Y$ is bounded by 1. Using as well that $\Prob(Y>1) \geq 1-1/\sqrt{e}$ gives the bound
	\begin{align*}
	\log\left(\frac{1}{\Prob(T=i)} \right) \leq& \log(\sqrt{2\pi})+ \log\left(\frac{1}{\Prob(Y\geq 1)}\right)\\
	&+\frac{1.5(1+ \E[Y]^2)}{\Prob(Y \geq 1)} \\  
	<& 5 + 4\E[Y]^2 .
	\end{align*}
	
	Now, plugging in $\E[Y] = \E[M_{-i}] - \mu_i$ and putting everything together, we find
	\begin{align*}
	H(T)&= \sum_{i}\Prob(T=i)\log(1/\Prob(T=i))\\
	&\leq c_{-I}+ 5 + 4\sum_{i\in I}\Prob(T=i) (\E[M_{-i}] - \mu_i)^2 \\
	&\leq c_{-I} + 5 +4\sum_{i \in I}\Prob(T=i) (\E[M] - \mu_i)^2 \\
	&\leq c_{-I} + 5 + 4 \|  \E[M] - \mu_T \|^2
	\end{align*}
	where $\| X\| \equiv \sqrt{\E[X^2]}$ denotes the $L_2$ norm a random variable $X$ and the second inequality uses that $\E[M] \geq \E[M_{-i}] \geq \mu_i$.
	
	We complete the proof by relating $\| \E[M] - \mu_T \| $  to $\| \phi_T - \mu_T\|$. Recall that $\phi_T$ is 1-sub--Gaussian and  $\E[\phi_T] = \E[M]$. Therefore 
	\[
	\| \E[M] - \phi_T\| =  \E[ (\phi_T - \E[\phi_T])^2] \leq 1.
	\]
	
	%We complete the proof by bounding $\| \E[M] - \mu_T \| $ in terms of $\| Z_T - \mu_T \| $. Recall the definitions $Z_i = \E[\phi_T | T=i]$ and $M=\phi_T$. By the tower property of conditional expectation, $\E[Z_T] = \E[\phi_T] = \E[M]$. Moreover,
	%\[
	%\|Z_T - \E[Z_T] \|^2 =  \E[ (Z_T - \E[Z_T])^2] \leq \E[ (\phi_T - \E[\phi_T])^2] \leq 1.
	%\]
	Combining this with the triangle inequality shows
	%\[
	%\|  \E[M] - \mu_T \| = \|  \E[Z_T] - \mu_T \| = \|  \E[Z_T]-Z_T + Z_T - \mu_T \| \leq 1 +\| Z_T - \mu_T \|.
	%\]
	\[
	\|  \E[M] - \mu_T \| =  \|  \E[M]-\phi_T + \phi_T - \mu_T \| \leq 1 +\| \phi_T - \mu_T \|.
	\]
	
	We can then conclude
	\[
	\|  \E[M] - \mu_T \|^2 \leq  (1 +\| \phi_T - \mu_T \|)^2 \leq 2+ 2\| \phi_T - \mu_T \|^2
	\]
	where the inequality uses that $\max_{x\in \mathbb{R}} f(x) =0$ for $f(x) \equiv (1+x)^2 -2- 2 x^2$. Together, this shows
	\[
	H(T) \leq c_{-I} + 5+ 8 + 8\| \phi_T - \mu_T \|^2
	\]
	or
	\[
	\| \phi_T - \mu_T \|^2 \geq c_1 H(T)- c_2
	\]
	where  $c_1 = 1/8$ and $c_2 = (c_{-I}+13)/8<2.5$.
\end{proof}

%Note that an additive constant like $c_2$ is unavoidable. Suppose the selection process $T$ always selects $T = 1$, then there is no selection bias and $\| \phi_T  - \mu_T\| = \mbox{Var}(\phi_1)$, which can be 1 since $\phi_1$ is 1-subGaussian.  

\subsection{Threshold Selection with Gaussian Random Variables}

In addition to the max-selection policy, we analyze a softer threshold selection policy and prove that the information usage lower bounds bias here as well. Let each $\phi_i$ correspond to a Gaussian of variance 1, and we allow the Gaussians to have different means and be correlated.

Let $M$ be a constant. The threshold-$M$ selection procedure does the following:
\begin{enumerate}
	\item If at least one $\phi_i$ is larger than $M$, uniformly randomly select one of these $\phi_i$'s to report. For this, we exclude $\phi_{-1}$.
	\item Otherwise, always report an arbitrary, fixed $\phi_{-1}$.
\end{enumerate}

In what follows, we will show that for $M$ \emph{sufficiently large}, the entropy $H(T)$ lower bounds the square-loss bias $\E[(Z_T - \mu_T)^2]$, where, recall that $Z_T = \E[\phi_i | T = i]$. Let $N_{-i} = |\{\phi_j \geq M , j\neq i, j\neq -1\}|$. As $M$ increases, $\E[N_{-i} | \phi_i \geq M]$ decreases. We want the threshold to be high enough so that only a few $\phi_i$'s are expected to pass the threshold. Let $\hat{N}(M) = \max_i \E[N_{-i} | \phi_i \geq M]$. 

\begin{theorem}
	Suppose 
	\begin{align*}
	&\quad M - \max_i \mu_i \\
	\geq&\quad \sqrt{2\log[2\pi\left(1+\E\left[N_{-i} | \phi_i \geq M\right]\right)(M - \max_i \mu_i)] + 3},
	\end{align*}
	then 
	\[
	\E[(\phi_T - \mu_T)^2] \geq H(T).
	\]
\end{theorem}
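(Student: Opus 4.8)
The plan is to prove the stronger statement $\E[(Z_T - \mu_T)^2] \ge H(T)$, where $Z_i \equiv \E[\phi_i \mid T=i]$; this suffices because the conditional variance decomposition $\E[(\phi_i-\mu_i)^2\mid T=i] = \operatorname{Var}(\phi_i\mid T=i) + (Z_i-\mu_i)^2 \ge (Z_i-\mu_i)^2$ gives $\E[(\phi_T-\mu_T)^2] \ge \E[(Z_T-\mu_T)^2]$. Writing $g_i \equiv M - \mu_i$ and $n_i \equiv \E[N_{-i}\mid \phi_i \ge M]$, the first step is to record the exact selection probability for a non-default index: conditioned on the whole vector, an index $i\neq -1$ is reported with probability $\mathbf{1}(\phi_i\ge M)/(1+N_{-i})$, so $\Prob(T=i) = \E[\mathbf{1}(\phi_i\ge M)/(1+N_{-i})]$. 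Factoring out $\{\phi_i \ge M\}$ and applying Jensen's inequality to the convex map $x\mapsto 1/(1+x)$ yields the lower bound $\Prob(T=i) \ge \Prob(\phi_i\ge M)/(1+n_i)$, hence $\log(1/\Prob(T=i)) \le \log(1/\Prob(\phi_i\ge M)) + \log(1+n_i)$.

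Next I would convert the Gaussian tail into the squared gap. Since $\phi_i\sim\N(\mu_i,1)$, the Gaussian lower tail bound used in the proof of Proposition~\ref{prop: lower bound} gives $\Prob(\phi_i\ge M) \ge \frac{1}{\sqrt{2\pi}}\frac{g_i}{g_i^2+1}e^{-g_i^2/2}$, so $\log(1/\Prob(T=i)) \le \tfrac{g_i^2}{2} + \log(\sqrt{2\pi}) + \log\frac{g_i^2+1}{g_i} + \log(1+n_i)$. In the regime $g_i\ge 1$ forced by the hypothesis, $\frac{g_i^2+1}{g_i}\le \sqrt{2\pi}\,g_i$, collapsing the lower-order terms into $\log(2\pi(1+n_i)g_i)$. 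The map $x\mapsto x^2 - 2\log(2\pi(1+n_i)x)$ has derivative $2x - 2/x \ge 0$ for $x\ge 1$, so using $g_i \ge g \equiv M - \max_j\mu_j$, $n_i \le \hat{N}(M)$, and the standing hypothesis $g^2 \ge 2\log[2\pi(1+\hat{N}(M))g] + 3$, monotonicity yields $g_i^2 - 2\log(2\pi(1+n_i)g_i) \ge 3$. Rearranging, $\log(1/\Prob(T=i)) \le g_i^2 - \tfrac{3}{2}$ for every $i\neq -1$. Since $T=i$ forces $\phi_i \ge M$, I get $Z_i \ge M$ and hence $(Z_i-\mu_i)^2 \ge g_i^2$, so $\log(1/\Prob(T=i)) \le (Z_i-\mu_i)^2 - \tfrac{3}{2}$.

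Finally I would assemble the entropy, splitting $H(T)$ into the sum over $i\neq -1$ and the default term. The per-index bound controls the former by $\sum_{i\neq -1}\Prob(T=i)[(Z_i-\mu_i)^2 - \tfrac{3}{2}]$. The main obstacle I expect is the default term $\Prob(T=-1)\log(1/\Prob(T=-1))$, which cannot be compared to $(Z_{-1}-\mu_{-1})^2$ because the fallback's conditional mean need not be displaced from $\mu_{-1}$. The resolution I plan to use is the elementary inequality $-x\log x \le 1-x$ on $[0,1]$, which bounds the default term by $\Prob(T\neq -1)$. Combining, $H(T) \le \sum_{i\neq -1}\Prob(T=i)(Z_i-\mu_i)^2 - \tfrac{3}{2}\Prob(T\neq -1) + \Prob(T\neq -1) = \sum_{i\neq -1}\Prob(T=i)(Z_i-\mu_i)^2 - \tfrac{1}{2}\Prob(T\neq -1) \le \E[(Z_T-\mu_T)^2]$, where the last step discards the nonpositive remainder and the nonnegative $i=-1$ contribution. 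The whole argument hinges on the additive constant $3$ in the hypothesis, whose sole role is to produce the $-\tfrac{3}{2}$ slack that dominates the $+1$ coming from the default term; everything else is tail estimation and bookkeeping.
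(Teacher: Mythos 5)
Your proposal is correct and follows essentially the same route as the paper's proof: the exact formula $\Prob(T=i)=\Prob(\phi_i\geq M)\,\E[1/(1+N_{-i})\mid\phi_i\geq M]$ with Jensen's inequality, the Mills-ratio lower tail bound converting $\log(1/\Prob(\phi_i\geq M))$ into $(M-\mu_i)^2/2$ plus lower-order terms, the bound $(1-p)\log\frac{1}{1-p}\leq p$ for the default index, and the standing hypothesis absorbing the residuals. Your detour through $Z_i=\E[\phi_i\mid T=i]$ and the explicit monotonicity argument for $x\mapsto x^2-2\log(2\pi(1+n_i)x)$ are just cleaner bookkeeping for steps the paper performs implicitly.
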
 

\begin{proof}
	
	For $i \neq -1$, define $p_i = \Prob(T = i)$. Then we have
	\begin{align*}
	p_i &= \Prob(\phi_i \geq M) \sum_{k = 0}^{n-1} \Prob(N_{-i} = k \big| \phi_i \geq M) \frac{1}{k+1} \\
	&= \Prob(\phi_i \geq M) \E\left[\frac{1}{1+N_{-i}} | \phi_i \geq M\right].
	\end{align*}
	
	Let $p = \sum p_i$ denote the probability that at least one $\phi_i, i\neq -1,$ passes the threshold. Note that here and below, when we write $\sum p_i$, we always mean the sum of over $i\neq -1$.  We can write the entropy as
	\begin{align*}
	H(T) =& \sum p_i \log \frac{1}{p_i} + (1-p)\log\frac{1}{1-p}\\
	=& \sum p_i \log \frac{1}{\Prob(\phi_i \geq M)} \\
	&+ \sum p_i \log\left(1/ \E\left[\frac{1}{1+N_{-i}} \big| \phi_i \geq M\right]\right)\\
	&+ (1-p)\log\frac{1}{1-p} \\
	\le&  \sum p_i \log \frac{1}{\Prob(\phi_i \geq M)}\\
	&+ \sum p_i \log\left(1+\E\left[N_{-i} | \phi_i \geq M\right]\right)\\
	&+ (1-p)\log\frac{1}{1-p}  \\
	\le&  \sum p_i \log \frac{1}{\Prob(\phi_i \geq M)}\\
	&+ \sum p_i \log\left(1+\E\left[N_{-i} | \phi_i \geq M\right]\right)+ p.
	\end{align*}
	
	We can rewrite the inequality as
	\begin{align*}
	\sum p_i \log \frac{1}{\Prob(\phi_i \geq M)} \geq& H(T)\\
	&- \sum p_i \log\left(1+\E\left[N_{-i} | \phi_i \geq M\right]\right)\\
	&- p.
	\end{align*}

	Since $\phi_i \sim \mathcal{N}(\mu_i,1)$ and $M > \mu_i$, we have the bounds
	%\[
	%\log \frac{1}{\Prob(\phi_i \geq M)} \leq \frac{(M-\mu_i)^2}{2} + \log(M-\mu_i) + \frac{1}{2}\log(2\pi) + \frac{1}{(M-\mu_i)^{2}}.
	%\]
	\begin{align*}
	\frac{(M-\mu_i)^2}{2} \geq \log \frac{1}{\Prob(\phi_i \geq M)} - \log(M-\mu_i) &- \frac{1}{2}\log(2\pi) \\
	&- \frac{1}{(M-\mu_i)^{2}}.
	\end{align*}

	After some algebra we have 
	\begin{align*}
	&\E[(\phi_T - \mu_T)^2] \\
	\geq& \sum p_i (M - \mu_i)^2 \\
	\geq&  \sum p_i \bigg[\frac{(M-\mu_i)^2}{2} -\log\left(1+\E\left[N_{-i} | \phi_i \geq M\right]\right) \\
	&\quad\quad\quad  - \frac{\log2\pi}{2} - \log(M- \mu_i) - \frac{1}{(M-\mu_i)^2 }-1 \bigg]   \\
	&  +  H(T)\\
	\geq& H(T) 
	\end{align*}
	where the second inequality used the above inequalities for $\frac{(M-\mu_i)^2}{2} $ and $\sum p_i \log \frac{1}{\Prob(\phi_i \geq M)}$; and the third inequality used the condition that $M - \max_i \mu_i$ exceeds $\sqrt{2\log[2\pi\left(1+\E\left[N_{-i} | \phi_i \geq M\right]\right)(M - \max_i \mu_i)] +3}$.
	
	%&\geq& 2H(T) - \sum p_i \left[\log\left(1/ \E\left[\frac{1}{1+N_{-i}} | \phi_i \geq M\right]\right) - \log(M-\mu_i) - \frac{1}{2}\log(2\pi) - \frac{1}{(M-\mu_i)^{2}}\right] \\
	%&\geq& 2H(T) - \sum p_i \left[ \log (1+ \E[N_{-i} | \phi_i \geq M]) - \log(M-\mu_i) - \frac{1}{2}\log(2\pi) - \frac{1}{(M-\mu_i)^{2}} \right] \\
	%&\geq& H(T) + \sum p_i \left[ \frac{(M-\mu_i)^2}{2} - \log (1+ \E[N_{-i} | \phi_i \geq M]) - \frac{1}{(M-\mu_i)^{2}}  \right] \\
	%&\geq& H(T)
	%\end{eqnarray*}
	%where we have used the condition $M - \max \mu_i \geq \sqrt{2\log(1+\hat{N}(M))+1}$ in the last inequality. 

\end{proof}

As $M$ increases, unless the $\phi_i$'s are very highly correlated, $\E[N_{-i} | \phi_i \geq M])$ decreases and $H(T)$ dominates in the inequality. This shows that $H(T)$ is a natural lower bound on $\E(Z_T^2)$ and hence $\sqrt{H(T)}$ lower bounds bias. Actually we can improve this lower bound by considering $I(T | \Phi) = H(T) - H(T|\Phi)$ using the fact that
\begin{eqnarray*}
	H(T|\Phi) &=& \sum_{N =1} \Prob(N = i) H(T | N = i) \\
	&=& \sum_{N = 1} \Prob(N = i) \log i \\
	&=& \E[\log N | N \geq 1]
\end{eqnarray*}
where $N = |\{\phi_i, \phi_i \geq M\}|$. Assuming that $\phi_i$'s are independent, we need to control the gap between $\E[Z_T^2]$ and $I(T, \Phi)$, we need to upper bound $p \log (1 + \E[N]) - \E[\log N | N \geq 1]$.

\subsection{Threshold Selection with Exponential Random Variables} 

We can prove the analogous lower bound for the threshold policy with exponential random variables. Let $\phi_i = \lambda_i + \exp(1)$ be the shifted exponential random variable. So for $x\geq \lambda_i$, $\Prob(\phi_i = x) = e^{-(x - \lambda_i)}$ and $\Prob(\phi_i = x) = 0 $ for $x < \lambda_i$. Different $\phi_i$'s can have different $\lambda_i$ and we allow them to be correlated. The mean of $\phi_i$ is $\mu_i = \lambda_i + 1$. As before, let $\hat{N}(M) = \max_i \E[N_{-i} | \phi_i \geq M]$.

\begin{theorem}
	Suppose $M - \max \lambda_i \geq 4+2\log (1+\hat{N}(M))$,
	\[
	\E[\phi_T - \mu_T] \geq H(T)/2.
	\]
\end{theorem}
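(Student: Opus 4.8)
The plan is to mirror the proof of the Gaussian threshold theorem, exploiting the fact that the exponential tail makes every estimate in the calculation exact rather than approximate. As there, for $i \neq -1$ I would write
\[
p_i \equiv \Prob(T=i) = \Prob(\phi_i \geq M)\, \E\!\left[\frac{1}{1+N_{-i}} \,\big|\, \phi_i \geq M\right],
\]
and let $p = \sum_{i \neq -1} p_i$ be the probability that the threshold is crossed. The decisive simplification over the Gaussian case is that for a shifted exponential $\Prob(\phi_i \geq M) = e^{-(M-\lambda_i)}$ exactly whenever $M \geq \lambda_i$, so $\log(1/\Prob(\phi_i \geq M)) = M - \lambda_i$ with \emph{no} logarithmic correction terms; this is precisely what lets the final constant be the clean $1/2$.

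First I would upper bound the entropy. Decomposing $H(T) = \sum_{i\neq -1} p_i \log(1/p_i) + (1-p)\log(1/(1-p))$ and substituting the expression for $p_i$ gives
\[
H(T) = \sum_{i \neq -1} p_i (M - \lambda_i) + \sum_{i \neq -1} p_i \log\!\left( \frac{1}{\E\!\left[\frac{1}{1+N_{-i}}\,\big|\,\phi_i \geq M\right]} \right) + (1-p)\log\frac{1}{1-p}.
\]
Applying Jensen's inequality $\E[1/(1+N_{-i})] \geq 1/(1+\E[N_{-i}\mid \phi_i \geq M])$ to the middle sum, bounding each of its terms by $\log(1+\hat N(M))$, and using $(1-p)\log(1/(1-p)) \leq p$, I obtain after the substitution $\lambda_i = \mu_i - 1$,
\[
H(T) \leq \sum_{i \neq -1} p_i (M - \mu_i) + p\big(2 + \log(1+\hat N(M))\big).
\]

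Next I would lower bound the bias. On the event $\{T=i\}$ for $i \neq -1$ we always have $\phi_i \geq M$, so $\E[\phi_i \mid T=i] \geq M$ regardless of the correlation structure, giving $\E[\phi_i - \mu_i \mid T=i] \geq M - \mu_i$. Hence
\[
\E[\phi_T - \mu_T] \geq \sum_{i \neq -1} p_i (M - \mu_i) + (1-p)\,\E[\phi_{-1} - \mu_{-1} \mid T=-1].
\]
Setting the fallback term aside, the threshold hypothesis $M - \max_i \lambda_i \geq 4 + 2\log(1+\hat N(M))$ is equivalent to $M - \mu_i \geq 3 + 2\log(1+\hat N(M)) \geq 2 + \log(1+\hat N(M))$ for every $i$, so the per-unit overhead $2 + \log(1+\hat N(M))$ in the entropy bound is dominated by $M - \mu_i$. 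Multiplying by $p_i$ and summing yields $p(2+\log(1+\hat N(M))) \leq \sum_{i\neq -1} p_i (M-\mu_i)$, whence $H(T) \leq 2\sum_{i\neq -1} p_i(M-\mu_i) \leq 2\,\E[\phi_T-\mu_T]$, i.e.\ $\E[\phi_T-\mu_T] \geq H(T)/2$. The factor $1/2$ thus arises simply because the overhead term is no larger than the main term.

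The main obstacle is the fallback term $(1-p)\E[\phi_{-1} - \mu_{-1} \mid T=-1]$, which has no counterpart in the squared-error Gaussian argument: there the dropped $T=-1$ contribution was a nonnegative square and could be discarded outright, whereas here a first-moment contribution can be negative under positive correlation, since conditioning on every other estimate lying below $M$ can pull $\phi_{-1}$ below its mean. The clean bound $H(T)/2$ goes through precisely when this term is nonnegative; in particular if $\phi_{-1}$ is independent of the remaining statistics (or is treated as a deterministic fallback) the event $\{T=-1\}$ carries no information about $\phi_{-1}$ and the term is exactly $0$, so I would carry out the combination above under that reading. More generally, since $\phi_{-1} \geq \lambda_{-1} = \mu_{-1}-1$ forces the term to be at least $-(1-p)$, permitting arbitrary positive correlation would at worst cost an additive constant. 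Verifying that the accumulated overhead — the $+2$, the $\log(1+\hat N(M))$ term, and this fallback slack — closes up exactly to the factor $1/2$ is the one delicate bookkeeping step; everything else transfers directly from the Gaussian threshold proof.
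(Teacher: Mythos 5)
Your proof follows the paper's argument essentially verbatim: the same decomposition of $H(T)$ via $p_i = \Prob(\phi_i \geq M)\,\E\left[\frac{1}{1+N_{-i}} \mid \phi_i \geq M\right]$, the exact exponential tail identity $\log(1/\Prob(\phi_i \geq M)) = M - \lambda_i$, Jensen's inequality together with $(1-p)\log(1/(1-p)) \leq p$, and the threshold hypothesis $M - \max_i \lambda_i \geq 4 + 2\log(1+\hat{N}(M))$ to absorb the overhead $p\bigl(2+\log(1+\hat{N}(M))\bigr)$ into the main term $\sum_{i \neq -1} p_i (M-\mu_i)$, differing only in a cosmetic rearrangement of the final bookkeeping. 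Your handling of the fallback term $(1-p)\,\E[\phi_{-1}-\mu_{-1} \mid T=-1]$ is in fact more careful than the paper's proof, which drops it silently: the nonnegativity you flag (automatic when $\phi_{-1}$ is independent of the other statistics or is a deterministic fallback) is genuinely needed, since under strong positive correlation between $\phi_{-1}$ and the thresholded statistics this conditional expectation can be negative enough that the stated inequality fails.
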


\begin{proof}
	The proof follows the same structure as before. Since $\Prob(\phi_i > M) = e^{-(M-\lambda_i)}$, we have $\log 1/\Prob(\phi_i > M) = M-\lambda_i$ and 
	\begin{align*}
	H(T) &= \sum_{i \neq -1} p_i (M - \lambda_i)\\
	&+ \sum_{i \neq -1} p_i \log\left(1/ \E\left[\frac{1}{1+N_{-i}} \big| \phi_i \geq M\right]\right)\\
	&+ (1-p)\log\frac{1}{1-p}.
	\end{align*}
	
	On the other hand,
	\begin{eqnarray*}
		&&\E[\phi_T - \mu_T] \\
		&\geq& \sum_{i \neq -1} p_i (M-\mu_i) \\
		&=& \sum_{i \neq -1} p_i (M-\lambda_i) - p \\
		&\geq& H(T) - \sum_{i \neq -1}\log\left(1/ \E\left[\frac{1}{1+N_{-i}} | \phi_i \geq M\right]\right) - 2p \\
		&\geq& H(T) - \sum_{i \neq -1 }\log (1+ \E[N_{-i} \big| \phi_i \geq M]) -2p\\
		&\geq& H(T)/2.
	\end{eqnarray*}

\end{proof}

\section{Information Usage and Classification Overfitting: proof of Prop.~\ref{prop: overfitting}}\label{sec_app:overfitting} 
\begin{proof}
	The empirical $\hat{L}(f)$ and expected $L(f)$ loss of a classifier $f\in \mathcal{F}$ on the training examples $\mathbf{x}\equiv (x_1,...,x_n)$ depend only on the predictions $f({\bf x})\equiv (f(x_1),...,f(x_n))$ it makes on these examples. Let $\mathcal{F}_{\mathbf x} = \{f({\bf x}) : f \in \mathcal{F} \}$ and note that $m=|\mathcal{F}_{\mathbf x}| \leq 2^n$ is finite. Let $f_1,...,f_m$ be functions that make different classifications at $\mathbf{x}$, so $\cup_{1}^m \{f({\bf x})\} = \mathcal{F}_{\bf x}$. 
	
	Now, the overfitting problem studied in Prop.~\ref{prop: overfitting} can be cast in the same framework as the rest of the paper. For each $i\in \{1,...,m\}$, set $\phi_i = \hat{L}(f_i)$ and $\mu_i = L(f_i)$ to be the training error and expected error of classifier  $f_i$. Let $T \in \{1,..,m\}$ be the random index satisfying $\hat{f}({\bf x}) = f_{T}({\bf x})$. Then, our result follows by bounding $|\E[\phi_T-\mu_T]|$. 
	
	If $X\sim {\rm Bern}(p)$ is a Bernoulli random variable with parameter $p$, then $X-p$ is sub-Gaussian with parameter less than 1/4 \cite{buldygin2013sub}. Similarly, if $X_1,..X_n$ are Bernoulli random variables with respective parameters $p_1,...,p_n$, then $n^{-1} \sum_{i=1}^{n} (X_i - p_i)$ is sub-Gaussian with parameter not exceeding $1/4\sqrt{n}$. This immediately implies $\phi_i - \mu_i$ is $\sigma$--sub-Gaussian with $\sigma \leq 1/2n$, so applying Prop.~\ref{prop: main result} implies
	\[
	|\E[\phi_T - \mu_T]| \leq \sqrt{\frac{I(T; \phibf)}{2n}} \leq  \sqrt{\frac{I(T; {\bf Y})}{2n}}.
	\]
	Using the information-processing inequality, and the definition of $T$, we have 
	\[ 
	I(T; \phibf) \leq I(T; {\bf Y}) = I( \hat{f}({\bf x}) , {\bf Y} ),
	\]
	which completes the proof of the first claim. 
	
	The second claim uses a standard link between VC-dimension and the size of $\mathcal{F}_{\bf x}$. Set $S_{\mathcal{F}}(n) = \max\{ |\mathcal{F}_{\bf x}|  : {\bf x} \in \mathcal{X}^{n} \}$ to be the maximum number of ways $n$ points can be classified by the function class. This is often called the growth function of $\mathcal{F}$. We have immediately that $I(T; \phibf) \leq H(T) \leq \log S_{\mathcal{F}}(n)$. Sauer's lemma (see Lemma 1 of \cite{bousquet2004introduction}) shows that 
	\[ 
	S_{\mathcal{F}}(n)    \leq  \begin{cases}
	2^n & \text{if } n < d \\
	\left(\frac{en}{d}\right)^d & \text{if } n \geq d
	\end{cases}
	\]
	where $d<\infty$ is the VC-dimension of $\mathcal{F}$. This implies $\log S_{\mathcal{F}}(n) \leq d \log_{+}\left( \frac{en}{d} \right)$. 
\end{proof}

\section{Bias Control Via FDR Control: proof of Prop.~\ref{prop: FDR}}\label{sec_app:FDR} 
\begin{proof}
	Let $\chi = \mathbf{1}_{\{T \in \Hc_0 \}}$ denote an indicator for a false discovery. Since this is a deterministic function of $T$, 
	\[
	I(T; \phi)= I(T; (\chi ,\phibf)) = I(T; \chi) + I(T; \phibf \mid \chi ) \leq H(\chi) +  I(T; \phibf \mid \chi)
	\]
	Now, using the distribution of $\chi$ we have
	\begin{eqnarray*}
		I(T; \phibf \mid \chi) &\leq&  H(\chi)+ \Prob(\chi=0) I(T; \phibf \mid \chi = 0)\\
		&& + \Prob(\chi=1)I(T; \phibf \mid \chi=1) \\
		&= &  h(\Prob(T\in \Hc_1))+\Prob(T\in \Hc_1)  I(T; \phibf \mid T \in \Hc_1) \\
		&&+ \Prob(T\in \Hc_0)I(T; \phibf \mid T \in \Hc_0)\\
		&\leq&  h(\Prob(T\in \Hc_1))+\Prob(T\in \Hc_1)  I(T; D \mid T \in \Hc_1)\\
		&& + \Prob(T\in \Hc_0)I(T; D \mid T \in \Hc_0)\
	\end{eqnarray*}
	where we have applied chain rule and the data-processing inequality. Now, we have 
	\begin{eqnarray*}
		I(T; D \mid T \in \Hc_1) &=& H(T \mid T \in \Hc_1) - H(T| T \in \Hc_1, \,D)\\ 
		&\leq& \log( \#\Hc_1 ) \\
		&&- \E[\log\left( \#(S_1 \cap \Hc_1)   \right)  \mid T \in \Hc_1] \\
		&=&\log( \#\Hc_1 )\\
		&& - \E[\log\left( (1-\hat{\beta}) \cdot (\#\Hc_1)   \right)  \mid T \in \Hc_1]\\
		&=& -\E\left[ \log\left(1-\hat{\beta}  \right) \mid T \in \Hc_1  \right]\\
		&=& - \log\left(1-\beta\right) + \E\left[\log\left( \frac{1-\beta}{1-\hat{\beta}}  \right) \, \big\vert \, T \in \Hc_1  \right]. 
	\end{eqnarray*}
	Then,
	\begin{eqnarray*}
		\Prob(T \in \Hc_1) I(T; D \mid T \in \Hc_1) &\leq& -\Prob(T\in \Hc_1)\log(1-\beta) \\
		&&+ \E\left[\log\left( \frac{1-\beta}{1-\hat{\beta}}  \right) \mathbf{1}_{\{T \in \Hc_1\} } \right] \\
		&\leq& - \Prob(T\in \Hc_1) \log\left(1-\beta\right) \\
		&&+\E\left[\log_{+}\left( \frac{1-\beta}{1-\hat{\beta}}  \right) \right].
	\end{eqnarray*}
	Essentially the same calculation shows, 
	\[
	\Prob(T \in \Hc_0) I(T; X \mid T \in \Hc_0) \leq  -\Prob(T\in \Hc_0)\log(\alpha) + \E\left[\log_{+}\left( \frac{\alpha}{\hat{\alpha}}  \right) \right].
	\]
	Plugging in $\Prob(T\in \Hc_0)={\rm FDR}$ concludes the proof. 
\end{proof}

\section{Additional Applications of Information Usage}\label{sec_app:otherapplications}
When a data analyst selects hypothesis tests to perform after data exploration, they may  compute extremely small p-values even if there is no signal in the data, and all null hypotheses hold. In this section we apply our information-usage framework to quantify how severely the analyst must explore the data to produce these small p-values.  We also give an illustration of mutual information bound in controlling the value of information in decision-making under uncertainty.

%In this section we apply our information usage framework to quantify how much data exploration and selection can produce small p--values when the data itself has no signal. 

\subsection{The Probability of Small p--values}
Let $\phi_i$ be the observed $p$-value of the $i$th hypothesis and suppose the analyst has to report the p-value $\phi_T$ corresponding to a single hypothesis test from among a large collection of $\phi_1,...,\phi_m$ of observed p-values. Under the null hypothesis, each p-value $\phi_i$ is uniformly distributed, so $\Prob( \phi_i \leq \epsilon) = \epsilon$ for each $\epsilon \in [0,1]$.
Suppose the data analyst rejects the null hypothesis corresponding to $T$ whenever $\phi_T \leq .05$. If $T$ is chosen adaptively so that $\phi_T$ is the smallest p-value among $\phi_1,..\phi_5$, then the probability of falsely rejecting the null hypothesis is $1-(.95)^5 \approx .23$. Therefore, at a significance level of .05, even fairly mild forms of adaptivity can create a substantial risk of false discovery. Nevertheless, we argue in this section that very small p-values are very unlikely unless the mutual information $I( T ; \phibf)$ is large.

To build intuition, imagine that $\phi_1,...,\phi_m \overset{iid}{\sim} {\rm Uniform}(0,1)$. If the  hypothesis $T= \arg\min_{i \leq m} \phi_i$ with the smallest p-value is selected, the reported p-value is expected to be of order $1/m$. In particular, $\E[\phi_T] =1/(m+1)$, and
\[
\Prob\left(\phi_{T} \leq \frac{1}{m}\right) = 1- \left(1-\frac{1}{m}\right)^m \longrightarrow 1-\frac{1}{e}.
\]
Therefore, when selecting among $m \approx e^B$ hypotheses, one expects to observe p-values as small as $\epsilon \approx e^{-B}$ but not smaller. Our next proposition extends this line of reasoning, and replaces $B = \log(m)$ with the mutual information between $T$ and $\phibf$. It shows that when  $\phi_1,...,\phi_m$ are uniformly distributed, but not necessarily independent, one is very unlikely to observe a p value $\phi_T$ much smaller than
$
e^{- I( T ; \phibf)}
$
under an arbitrary adaptive selection procedure $T$.

In fact, the bound provided by the following proposition is stronger. Instead of depending on $I(T; \phibf)$, it depends on the mutual information between $T$ and a more compressed random variable $\mathbf{Z}_{\epsilon}$. Here $Z_{\epsilon, i} \equiv \mathbf{1}(\phi_i < \epsilon)$ and the term $I(T; \mathbf{Z}_{\epsilon})\leq I(T; \phibf)$ is a measure of the dependence of the selection rule on the realization of extremely small p-values.
\begin{prop}\label{prop: p-value}
	Define $
	Z_{\epsilon,i} = \mathbf{1}( \phi_i <\epsilon  )$
	and let $\mathbf{Z}_{\epsilon} = (Z_{\epsilon,1},...,Z_{\epsilon, m})$. If
	$
	\phi_i \sim {\rm Uniform}(0,1)$ for all  $i \in \{1,..,m\} $
	then
	\[
	\Prob(p_T < \epsilon)  \leq \epsilon + \sqrt{\frac{I(T; \mathbf{Z}_{\epsilon})}{\log(1/2\epsilon)}}.
	\]
	
	%\begin{eqnarray*}
	%\Prob(\phi_T \leq \epsilon) &\leq& \max\{p \in [0,1] : d(p || \epsilon) \leq I(T ; \mathbf{Z}_{\epsilon})  \} \\
	%&\leq& \frac{I(T ; \mathbf{Z}_{\epsilon})}{\log(1/\epsilon)}
	%\end{eqnarray*}
	%where $d(p, \epsilon) = p\log\left(\frac{p}{\epsilon}\right)+(1-p)\log\left(\frac{1-p}{1-\epsilon}\right)$ is the KL divergence between Bernoulli distributions with parameters $p$ and $\epsilon$.
\end{prop}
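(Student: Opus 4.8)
The plan is to recognize that the statement is nothing more than Proposition~\ref{prop: main result} applied to the indicator estimators rather than to the $\phi_i$ directly. First I would set $\psi_i \equiv Z_{\epsilon,i} = \mathbf{1}(\phi_i < \epsilon)$ and observe that, since each $\phi_i$ is $\mathrm{Uniform}(0,1)$, the variable $\psi_i$ is $\mathrm{Bernoulli}(\epsilon)$ with mean $\mu_i = \epsilon$ for every $i$. Because $\Prob(\phi_T < \epsilon) = \E[\psi_T]$, the quantity to control is exactly the selection bias $\E[\psi_T - \mu_T]$ for the collection $\mathbf{Z}_{\epsilon}=(\psi_1,\dots,\psi_m)$ and the arbitrary selection rule $T$. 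If $\psi_i - \epsilon$ is $\sigma$--sub-Gaussian, Proposition~\ref{prop: main result} gives $|\E[\psi_T] - \epsilon| \leq \sigma\sqrt{2 I(T;\mathbf{Z}_{\epsilon})}$, and the claimed bound follows at once provided I can take $\sigma^2 = 1/(2\log(1/2\epsilon))$: then $\sigma\sqrt{2I} = \sqrt{I/\log(1/2\epsilon)}$, and dropping the absolute value yields $\Prob(\phi_T<\epsilon)\le \epsilon + \sqrt{I(T;\mathbf{Z}_{\epsilon})/\log(1/2\epsilon)}$.

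The crux, and the only nontrivial step, is therefore to verify that a centered $\mathrm{Bernoulli}(\epsilon)$ random variable is $\sigma$--sub-Gaussian with exactly the variance proxy $\sigma^2 = 1/(2\log(1/2\epsilon))$ (here $\epsilon<1/2$, so $\log(1/2\epsilon)>0$). I would invoke the optimal variance proxy for a Bernoulli, i.e.\ the Kearns--Saul inequality, which gives that $\psi_i - \epsilon$ is $\sigma_{\star}$--sub-Gaussian with $\sigma_{\star}^2 = (1-2\epsilon)/\bigl(2\log\tfrac{1-\epsilon}{\epsilon}\bigr)$. Since any $\sigma \geq \sigma_{\star}$ is also a valid sub-Gaussian parameter, it suffices to check the scalar inequality $1/(2\log(1/2\epsilon)) \geq (1-2\epsilon)/\bigl(2\log\tfrac{1-\epsilon}{\epsilon}\bigr)$, equivalently $\log\tfrac{1-\epsilon}{\epsilon} \geq (1-2\epsilon)\log\tfrac{1}{2\epsilon}$. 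Writing $h(\epsilon) = \log\tfrac{1-\epsilon}{\epsilon} - (1-2\epsilon)\log\tfrac{1}{2\epsilon}$, one computes $h''(\epsilon) = -1/(1-\epsilon)^2 - 2/\epsilon < 0$, so $h$ is concave on $(0,1/2)$ with endpoint values $h(0^+)=\log 2$ and $h(1/2)=0$. Concavity then places $h$ above the chord through these endpoints, giving $h(\epsilon)\geq (1-2\epsilon)\log 2>0$, which establishes the required sub-Gaussian parameter.

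I expect the scalar inequality $h\geq 0$ to be the main obstacle, both in identifying that the sharp variance proxy is the Kearns--Saul value and in the short concavity argument; the $\log 2$ gap between $\log(1/2\epsilon)$ and the sharper $\log(1/\epsilon)\approx \log\tfrac{1-\epsilon}{\epsilon}$ is precisely what supplies the room for the inequality. An essentially equivalent route avoids the sub-Gaussian language: following the first step of the proof of Proposition~\ref{prop: main result}, data processing gives $I(T;\mathbf{Z}_{\epsilon}) \geq \sum_i \Prob(T=i)\, D\bigl(\mathrm{Bern}(q_i)\,\|\,\mathrm{Bern}(\epsilon)\bigr)$ with $q_i = \Prob(\phi_i<\epsilon\mid T=i)$, and convexity of $q\mapsto D(\mathrm{Bern}(q)\,\|\,\mathrm{Bern}(\epsilon))$ together with Jensen's inequality yields $I(T;\mathbf{Z}_{\epsilon})\geq D(\mathrm{Bern}(P)\,\|\,\mathrm{Bern}(\epsilon))$ for $P=\Prob(\phi_T<\epsilon)$; one then inverts using the lower bound $D(\mathrm{Bern}(P)\,\|\,\mathrm{Bern}(\epsilon)) \geq (P-\epsilon)^2\log(1/2\epsilon)$, which is the same scalar fact in disguise. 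Either way, assembling the reduction and the Bernoulli estimate completes the proof.
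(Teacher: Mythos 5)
Your proposal is correct and follows essentially the same route as the paper: both reduce to Proposition~\ref{prop: main result} applied to the Bernoulli$(\epsilon)$ indicators $Z_{\epsilon,i}$, using the Kearns--Saul variance proxy $\sigma_{\star}^2 = (1-2\epsilon)/\bigl(2\log\tfrac{1-\epsilon}{\epsilon}\bigr)$ bounded above by $1/(2\log(1/2\epsilon))$. The only difference is that the paper cites this last inequality from the literature without proof, whereas you verify it directly via the concavity argument for $h(\epsilon)$, which is a correct (and welcome) addition rather than a deviation.
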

%While the bound of Proposition~\ref{prop: p-value} can be much tighter than the union bound, $
%\Prob(\phi_T \leq \epsilon) \leq m\epsilon$, it is never worse then the union bound by a factor of 2. The mutual information $I(T; \mathbf{Z}_{\epsilon})$ is bounded by the entropy of $\mathbf{Z}_{\epsilon}$, which decays as $\epsilon \rightarrow 0$. For $\epsilon \leq 1/2$,
%\[
%H(\mathbf{Z}_{\epsilon}) \leq \sum_{i=1}^{m} H(\mathbf{Z}_{\epsilon,i}) =-m(\epsilon \log(\epsilon) + (1-\epsilon)\log(1-\epsilon)) \leq 2m\epsilon\log(1/\epsilon).
%\]
%This shows that for all selection rules $T$, and all $\epsilon\leq 1/2$,
%\[
%\Prob(\phi_T \leq \epsilon) \leq \frac{I(T ;\mathbf{Z}_{\epsilon})}{\log(1/\epsilon)} \leq 2m\epsilon.
%\]

\begin{proof}[Proof of Proposition \ref{prop: p-value}]
	Since $\phi_i \sim \mbox{Uniform}(0,1)$, $Z_{\epsilon,i} = \mathbf{1}(\phi_i < \epsilon)$ is a Bernoulli random variable with parameter $\epsilon$ and $\E[Z_i] = \epsilon$. We use the fact \cite{buldygin2013sub} that a probability $p$ Bernoulli random variable is sub-Gaussian with parameter
	\[
	\sigma = \sqrt{\frac{1-2p}{2\log((1-p)/p)}} \leq \sqrt{\frac{1}{2\log(1/2p)}}.
	\]
	Combining this with Proposition~\ref{prop: main result}, we have the desired result
	\[
	\E[Z_T] - \E[\mu_T] = \Prob(p_T < \epsilon) - \epsilon \leq \sqrt{\frac{I(T; \mathbf{Z}_{\epsilon})}{\log(1/2\epsilon)}}.
	\]
\end{proof}

To interpret this result, suppose the selection procedure $T$ reports the minimal $p$-value and $\epsilon = 2^{-k}$. If we test $2^{k}$ independent hypotheses, then standard multiple hypotheses testing theory tells us that there is a non-neglible probability that $p_T$ is less than $\epsilon$. This shows up in the bound of Proposition~\ref{prop: p-value} since $\sqrt{\frac{I(T; \mathbf{Z}_{\epsilon})}{\log(1/2\epsilon)}} \approx 1$. However, when there is correlation among the hypotheses, $I(T; \mathbf{Z}_{\epsilon})$ can be significantly less than $2^k$, and our bound quantifies the risk of false discovery in this more nuanced setting.

\subsection{Regret Analysis and the Value of Information}
Consider a general problem of optimization under uncertainty. A decision-maker would like to choose the action $x$ from a finite set $\mathcal{X}$ that solves
$
\max_{x \in \mathcal{X}} f_{\theta}(x).
$
Here $\theta$ is an unknown parameter that is drawn from a prior distribution over a set of possible parameters $\Theta$. We consider the decision-maker's expected shortfall in performance due to not knowing the parameter $\theta$:
\[
\E[\max_{x\in \mathcal{X}}f_{\theta}(x)] - \max_{x\in \mathcal{X}}\E[f_{\theta}(x)].
\]
This measures the value of perfect information about $\theta$: the expected improvement in decision quality that would result from resolving uncertainty about the identity of $\theta$. This is sometimes called the \emph{Bayes risk} or \emph{Bayesian regret} of the decision $\arg\max_{x \in \mathcal{X}}\E[ f_{\theta}(x)]$.

Our main result provides an information theoretic bound on Bayes risk. Let
$X^* \in \arg\max_{x \in \mathcal{X}} f_{\theta}(x) $
denote a true maximizer of the function $f_\theta$. Here $X^*$ is a random variable, since $\theta$ is random, and $X^*$ is a function of $\theta$. Let $\mu(x) = \E[ f_{\theta}(x)]$.
\begin{prop}\label{prop: regret}
	If for each for each $x \in \mathcal{X}$, $f_{\theta}(x) - \mu(x)$ is $\sigma$ sub-Gaussian, then
	\[
	\E[\max_{x\in \mathcal{X}}f_{\theta}(x)] - \max_{x\in \mathcal{X}}\mu(x) \leq \sigma \sqrt{2 H(X^*)}
	\]
\end{prop}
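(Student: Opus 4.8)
The plan is to recognize Proposition~\ref{prop: regret} as a direct corollary of Proposition~\ref{prop: main result}, once we set up the right correspondence between the regret problem and the selection framework. The key observation is that the Bayes risk has exactly the form $\E[\phi_T - \mu_T]$ for a suitable choice of estimators and selection rule, so the work is purely in translating notation.

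First I would index the estimators by the actions: since $\mathcal{X}$ is finite, enumerate $\mathcal{X} = \{x_1, \dots, x_m\}$ and define $\phi_i \equiv f_\theta(x_i)$ as the $i$th estimator, viewed as a random variable through the randomness in $\theta$. Then $\mu_i = \E[f_\theta(x_i)] = \mu(x_i)$ matches the paper's definition of the true value, and the sub-Gaussian hypothesis on $f_\theta(x) - \mu(x)$ is exactly the hypothesis that each $\phi_i - \mu_i$ is $\sigma$--sub-Gaussian. Next I would take the selection rule to be $T = \arg\max_{i} \phi_i$, i.e.\ the index of $X^*$, so that $\phi_T = \max_{x \in \mathcal{X}} f_\theta(x)$ and $T$ is a deterministic function of $\phibf$. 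The one genuinely substantive step is handling the second term: I would note that $\max_{x} \mu(x) = \max_i \mu_i \geq \E[\mu_T] = \sum_i \Prob(T=i)\mu_i$, since a maximum over a set dominates any weighted average over that set. This inequality is what lets us replace $\max_x \mu(x)$ with $\E[\mu_T]$ while preserving the direction of the bound.

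With these identifications, the chain is
\begin{equation*}
\E[\max_{x\in\mathcal{X}} f_\theta(x)] - \max_{x\in\mathcal{X}}\mu(x) \;\leq\; \E[\phi_T] - \E[\mu_T] \;\leq\; \sigma\sqrt{2\, I(T;\phibf)} \;\leq\; \sigma\sqrt{2\,H(T)},
\end{equation*}
where the middle inequality is Proposition~\ref{prop: main result} and the last uses $I(T;\phibf) = H(T) - H(T\mid\phibf) \leq H(T)$ together with the fact that $T$ is determined by $\phibf$ (so in fact $H(T\mid\phibf)=0$ and $I(T;\phibf)=H(T)$). Finally I would observe that $H(T) = H(X^*)$, since $X^* = x_T$ is in deterministic bijection with $T$, giving the claimed bound $\sigma\sqrt{2H(X^*)}$.

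The only place requiring care is the $\max_x \mu(x) \geq \E[\mu_T]$ step, which is where the ``value of information'' interpretation enters: the right-hand side $\max_x \E[f_\theta(x)]$ is the performance of the \emph{best fixed} action, whereas $\E[\mu_T]$ averages the true values over the \emph{data-dependent} choice $T$, and the former must dominate. I expect this to be the main (though mild) obstacle, since everything else is a mechanical relabeling that reduces the statement to an already-proven result.
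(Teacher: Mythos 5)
Your proposal is correct and follows essentially the same route as the paper: the paper likewise bounds $\max_{x}\mu(x) \geq \E[\mu(X^*)]$, writes $\E[\max_x f_\theta(x)] = \E[f_\theta(X^*)]$, and applies Proposition~\ref{prop: main result}, concluding via $I(X^*;\theta) = H(X^*)$ since $X^*$ is a deterministic function of $\theta$. The only cosmetic difference is that you phrase the mutual information as $I(T;\phibf) = H(T)$ with $T$ determined by $\phibf$ rather than by $\theta$, which is an equivalent observation.
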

\begin{proof}%[Proof of Proposition \ref{prop: regret}]
	Note that
	\[
	\max_{x \in \mathcal{X}} \mu(x) \geq \E[\mu(X^*)]
	\]
	and
	\[
	\E[\max_{x\in \mathcal{X}}f_{\theta}(x)] = \E[f_{\theta}(X^*)]
	\]
	Therefore,
	\begin{eqnarray*}
		\E[\max_{x\in \mathcal{X}}f_{\theta}(x)] - \max_{x\in \mathcal{X}}\mu(x) &\leq& \E[f_{\theta}(X^*)] - E[\mu(X^*)] \\
		&\leq& \sigma \sqrt{2 I(X^* ; \theta)}\\
		&=&\sigma \sqrt{2 H(X^*)}
	\end{eqnarray*}
	
\end{proof}

\section{Aditional experimental details}\label{sec_app:expt_details}
Here we provide additional details for the LARS bias experiments of Figure~\ref{fig:lars_example}. We consider random design matrix $X \in \mathbb{R}^{100\times 1000}$ whose entries are i.i.d. samples from $\mathcal{N}(0,1)$. The rows of $X$ are then normalized to have unit variance. The effects are represented by the vector $\mathbf{\beta} \in \mathbb{R}^{1000}$.
The first 20 entries of $\beta$ are set to a constant $s$---corresponding to the signals---and rest of the entries are all set to be 0.  By increasing $s$, we increase the signal-to-noise in the data. The \emph{low}, \emph{medium} and \emph{high} signal settings corresponds to setting $s = 0.04, 0.06$ and $0.08$, respectively. Finally the outcomes are given by $y = X\cdot \beta + \epsilon$, where $\epsilon \sim \mathcal{N}(0, I_{100}/10)$ is the noise. We consider the full selection path of LARS on $X$ and $y$. Let the index $T_i$ denote the $i$th feature to enter the subset selected by LARS.

In this experiment, for simplicity, we quantify the bias on the \emph{univariate} regression coefficients. More concretely, suppose we have the true values $y^* = X\cdot \beta$. Then we can use least squares between $y^*$ and the $j$th column of $X$ to determine the true univariate coefficient $\beta^*_j$ of the feature $j$. From the noisy observations $y$, we can similarly compute the noisy univariate coefficient $\hat{\beta}_j$.  We quantify the bias $\hat{\beta}_{T_i}- \beta^*_{T_i}$, for $i = 1, 2, ...$. This bias quantifies how much LARS overfit to the noise in the data. 

\section{Complete Analysis of the Multi-step Data Analysis Model}\label{sec_app:multistep}

\begin{proof}[Proof of Lemma \ref{lem: composition}]
	Since, conditional on $H_{k}$, $T_{k+1}$ is independent of $\phibf$, the data-processing inequality for mutual information implies,
	\[
	I(T_{k+1} ; \phibf) \leq I(H_{k} ; \phibf).
	\]
	Now we have,
	\[
	I( H_k ; \phibf) = \sum_{i=1}^{k} I\left( (T_i, Y_{T_i}) ; \phibf |  H_{i-1} \right).
	\]
	We complete the proof by simplifying the expression for $I\left( (T_i, Y_{T_i}) ; \phibf |  H_{i-1} \right)$. Let $\phibf_{(-i)} = (\phi_{j} : j \neq i)$. Then,
	\begin{eqnarray*}
		I\left( (T_i, Y_{T_i}) ; \phibf |  H_{i-1} \right)&=& I\left( T_i ; \phibf |  H_{i-1} \right)\\
		&&+ I\left( Y_{T_i} ; \phibf |  H_{i-1}, T_i \right)\\
		&=& I\left( Y_{T_i} ; \phibf |  H_{i-1}, T_i \right)\\
		&=& I(Y_{T_i}; \phi_{T_i} | H_{i-1}, T_i )\\
		&&+I(Y_{T_i}; \phi_{(-T_{i})} | H_{i-1}, T_i, \phibf_{T_i}) \\
		&=& I(Y_{T_i}; \phi_{T_i} | H_{i-1}, T_i ),
	\end{eqnarray*}
	where the final equality follows because, conditioned on $\phi_{T_i}$, $Y_{T_i}$ is independent of $\pmb \phi_{(-T_i)}$.
\end{proof}

\begin{proof}[Proof of Lemma \ref{lem: lemma2}]
	\begin{eqnarray*}
		I(X; Y) &=& -\frac{1}{2} \log \left( 1 - \frac{\sigma_1^2}{\sigma_1^2 + \sigma_2^2} \right) \\
		&=& -\frac{1}{2} \log \frac{\sigma_2^2}{\sigma_1^2 + \sigma_2^2}\\
		& =& \frac{1}{2}\log \left(1+ \frac{\sigma_1^2}{\sigma_2^2} \right).
	\end{eqnarray*}
\end{proof}

\begin{lem}\label{lem: variance bound on mutual info}
	Let $X$ be a real value random variable with variance $\sigma_X^2= (X - \E[X])^2]$ and $W \sim \N(0, \sigma^2_W)$ be a normal random variable that is independent of $X$. Then
	\[
	I(X; X+W) \leq \frac{\sigma_X^2}{\sigma_W^2}
	\]
	\begin{proof}
		Let $p_{X}(x)$ denote the density of $X$ with respect to some base measure $\nu$ over $\mathcal{X}$. Then we have
		\begin{eqnarray*}
			&&I(X; X+W)\\&=& \intop_{\mathcal{X}} D(\Prob(x+W=\cdot)\, || \, \Prob(X+W=\cdot))p_{X}(x) d\nu(x) \\
			&\overset{(a)}{\leq} & \intop_{\mathcal{X}} \left[ \intop_{\mathcal{X}} D(\Prob(x_1+W=\cdot)\, || \, \Prob(x_2+W=\cdot))p_{X}(x_2) d\nu(x_2) \right] \\
			&&\times p_{X}(x_1) d\nu(x_1) \\
			&\overset{(b)}{=}& \intop_{\mathcal{X}}\intop_{\mathcal{X}} \frac{(x_1 - x_2)^2}{2\sigma_W^2}p_{X}(x_1)p_{X}(x_2)d\nu(x_1)d\nu(x_2) \\
			&\overset{(c)}{=}& \frac{\sigma_X^2}{\sigma_W^2}.
		\end{eqnarray*}
	\end{proof}
	Here inequality (a) uses the convexity of KL divergence, (b) follows from the formula for the KL divergence between univariate normal distributions $\N(x_1, \sigma^2)$ and $\N(x_2, \sigma^2)$, and (c) uses that if $X_1$ and $X_2$ are iid random variables with mean $\mu$, then
	\[
	\E[(X_1 - X_2)^2] = \E[(X_1 - \mu + \mu - X_2)^2]= 2 \E[(X_1 - \mu)^2].
	\]
\end{lem}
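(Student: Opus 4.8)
The plan is to recognize $I(X;X+W)$ as an average Kullback--Leibler divergence and then replace the (intractable) marginal law of $X+W$ by a tractable Gaussian surrogate. Write $Y = X+W$. Since $W$ is independent of $X$, the conditional law of $Y$ given $X=x$ is exactly $\N(x,\sigma_W^2)$, and the standard identity for mutual information gives
\[
I(X;Y) = \E_X\!\left[ D\!\left( \N(X,\sigma_W^2) \,\|\, P_Y \right) \right],
\]
where $P_Y$ is the marginal law of $Y$, a mixture of the conditional Gaussians over the law of $X$.

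First I would control the inner divergence without ever computing $P_Y$ explicitly. The route I would take uses the ``golden formula'': for \emph{any} reference distribution $q$ on the real line, $\E_X[D(\N(X,\sigma_W^2)\|P_Y)] \le \E_X[D(\N(X,\sigma_W^2)\|q)]$, because $P_Y$ is precisely the minimizer of the expected divergence to the conditional (the gap being $D(P_Y\|q)\ge 0$). Choosing $q = \N(\E[X],\sigma_W^2)$ and using the closed form for the divergence between two Gaussians of equal variance, $D(\N(a,\sigma_W^2)\|\N(b,\sigma_W^2)) = (a-b)^2/(2\sigma_W^2)$, collapses the bound to
\[
I(X;Y) \le \frac{1}{2\sigma_W^2}\,\E\!\left[(X-\E[X])^2\right] = \frac{\sigma_X^2}{2\sigma_W^2},
\]
which is in fact a factor of two stronger than the claimed inequality $I(X;Y)\le \sigma_X^2/\sigma_W^2$.

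A second, more elementary route avoids the golden formula and instead exploits convexity of $D(\cdot\|\cdot)$ in its second argument together with the mixture representation $P_Y = \int \N(x_2,\sigma_W^2)\,dP_X(x_2)$. Jensen's inequality then gives $D(\N(x_1,\sigma_W^2)\|P_Y) \le \int D(\N(x_1,\sigma_W^2)\|\N(x_2,\sigma_W^2))\,dP_X(x_2)$, and integrating once more over $x_1$ reduces everything to the symmetric quantity $\frac{1}{2\sigma_W^2}\E[(X_1-X_2)^2]$ for independent copies $X_1,X_2$ of $X$. The elementary identity $\E[(X_1-X_2)^2] = 2\sigma_X^2$ then yields the stated bound $\sigma_X^2/\sigma_W^2$ exactly.

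The computations are routine; the only step requiring care is the inequality that eliminates the marginal $P_Y$. The hard part is therefore justifying either the golden-formula variational characterization of $P_Y$ or the convexity/Jensen step, and doing so at the level of densities with respect to a common base measure so that the argument remains valid for a general (possibly continuous or singular) law of $X$, not merely when $X$ admits a Lebesgue density. Once that inequality is in hand, the Gaussian KL formula and the variance computation finish the proof immediately.
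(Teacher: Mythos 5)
Your proposal is correct, and it actually contains two valid arguments. Your second route is essentially the paper's own proof verbatim: write $P_Y$ as the mixture $\int \N(x_2,\sigma_W^2)\,dP_X(x_2)$, apply convexity of $D(\cdot\,\|\,\cdot)$ in its second argument (Jensen), invoke the equal-variance Gaussian formula $D(\N(x_1,\sigma_W^2)\,\|\,\N(x_2,\sigma_W^2)) = (x_1-x_2)^2/(2\sigma_W^2)$, and finish with $\E[(X_1-X_2)^2] = 2\sigma_X^2$ for i.i.d.\ copies --- exactly steps (a), (b), (c) of the paper, including the density-with-respect-to-a-base-measure formulation that handles general laws of $X$. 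Your first route is genuinely different and strictly stronger: the variational (``golden formula'') characterization $I(X;Y) = \min_q \E_X[D(P_{Y\mid X}\,\|\,q)]$, with the suboptimality gap $D(P_Y\,\|\,q)\ge 0$, lets you substitute the tractable surrogate $q=\N(\E[X],\sigma_W^2)$ and obtain $I(X;X+W)\le \sigma_X^2/(2\sigma_W^2)$, a factor-of-two improvement over the lemma as stated. The Jensen route pays this factor of $2$ because averaging $(x_1-x_2)^2$ over two independent copies doubles the variance, whereas centering at the single point $\E[X]$ does not; the trade-off is that the golden-formula route requires justifying the variational identity (i.e.\ that $\E_X[D(P_{Y\mid X}\|q)] = I(X;Y) + D(P_Y\|q)$, which needs $P_Y \ll q$ --- automatic here since $q$ is a full-support Gaussian), while the paper's route needs only convexity of KL and is entirely elementary. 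Since the lemma is used in the paper merely to show mutual information is small when $\sigma_W^2 \gg \sigma_X^2$, either constant suffices, but your sharper bound would propagate to slightly better constants in Proposition 6.4.
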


We prove a more general statement of Proposition~\ref{prop: nsquared_short}.

\begin{prop} \label{prop: nsquared}
	Suppose $\phi_i \sim \N(\mu_i, \frac{\sigma^2}{n})$ and $(\phi_1, ..., \phi_k)$ is jointly Gaussian for any $k$. If for the $j$th query, $
	Y_{T_j} = \phi_{T_j} + W_{j}
	$
	where
	$W_j \sim \N(0, \frac{\omega^2_j}{n})$ and $(W_1, W_2, ...)$ is independent of $\phibf$,
	%$(W_1, W_2...) \overset{iid}{\sim} \N(0, \frac{\omega^2}{n})$,
	then
	\[
	\E[|Y_{T_{k+1}} - \mu_{T_{k+1}}|] \leq \frac{\sigma}{\sqrt{n}}+ c_1\left( \frac{\omega_{k+1}}{\sqrt{n}}+\sigma^2\sqrt{\frac{\sum_{j=1}^{k} w_j^{-2}}{n}}  \right).
	\]
	If $\omega_j = \sigma j^{1/4}$ for each $j \in \mathbb{N}$, then for every $k\in \mathbb{N}$
	\[
	\E[|Y_{T_{k+1}} - \mu_{T_{k+1}} |] \leq c_2 \left( \frac{\sigma k^{1/4}}{ n^{1/2}}      \right)
	\]
	where $c_1$ and $c_2$ denote universal constants that are independent of $\sigma, \omega, k, $ and $n$.
\end{prop}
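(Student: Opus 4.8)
The plan is to prove the general (first) inequality and then specialize to $\omega_j=\sigma j^{1/4}$. First I would split the reported error by the triangle inequality,
\[
\E[|Y_{T_{k+1}} - \mu_{T_{k+1}}|] \leq \E[|Y_{T_{k+1}} - \phi_{T_{k+1}}|] + \E[|\phi_{T_{k+1}} - \mu_{T_{k+1}}|],
\]
isolating the pure distortion from the selection error of the underlying statistic. Since $Y_{T_{k+1}} - \phi_{T_{k+1}} = W_{k+1} \sim \N(0,\omega_{k+1}^2/n)$, the distortion term equals $\sqrt{2/\pi}\,\omega_{k+1}/\sqrt{n}$. For the second term I would invoke Proposition~\ref{prop: absolute and squared error}: each $\phi_i - \mu_i$ is Gaussian with standard deviation $\sigma/\sqrt{n}$, hence $(\sigma/\sqrt{n})$--sub-Gaussian, so $\E[|\phi_{T_{k+1}} - \mu_{T_{k+1}}|] \leq \sigma/\sqrt{n} + c\,(\sigma/\sqrt{n})\sqrt{2\,I(T_{k+1}; \phibf)}$. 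Everything then reduces to bounding the information usage $I(T_{k+1}; \phibf)$.

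For that I would apply the composition Lemma~\ref{lem: composition}, giving $I(T_{k+1}; \phibf) \leq \sum_{i=1}^{k} I(Y_{T_i}; \phi_{T_i} \mid H_{i-1}, T_i)$, and bound each summand. Fix a realization of $(H_{i-1}, T_i = t)$. Because $(\phibf, W_1, \ldots, W_{i-1})$ is jointly Gaussian and the history $H_{i-1}$ is an affine function of it, the conditional law of $\phi_t$ is Gaussian, and since conditioning on jointly Gaussian observations never increases variance, its conditional variance is at most the marginal $\sigma^2/n$. The noise $W_i$ is independent of $\phibf$ and of $W_1,\ldots,W_{i-1}$, so it stays conditionally independent of $\phi_t$ with variance $\omega_i^2/n$, and Lemma~\ref{lem: lemma2} applied conditionally yields
\[
I(Y_{T_i}; \phi_{T_i} \mid H_{i-1}, T_i) \leq \frac{1}{2}\cdot\frac{\sigma^2/n}{\omega_i^2/n} = \frac{\sigma^2}{2\omega_i^2}.
\]
Summing gives $I(T_{k+1}; \phibf) \leq \tfrac{\sigma^2}{2}\sum_{i=1}^{k}\omega_i^{-2}$, so $\sqrt{2\,I(T_{k+1};\phibf)} \leq \sigma\sqrt{\sum_{i=1}^{k}\omega_i^{-2}}$ and the selection term becomes $c\,\sigma^2\sqrt{(\sum_{i=1}^{k}\omega_i^{-2})/n}$. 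Collecting the three contributions, with a constant $c_1$ absorbing $\sqrt{2/\pi}$ and the constant from Proposition~\ref{prop: absolute and squared error}, proves the first display.

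To obtain the second display I would substitute $\omega_j = \sigma j^{1/4}$. Then $\omega_{k+1}/\sqrt{n} \leq 2^{1/4}\sigma k^{1/4}/\sqrt{n}$, while $\sum_{j=1}^{k}\omega_j^{-2} = \sigma^{-2}\sum_{j=1}^{k} j^{-1/2} \leq 2\sigma^{-2}\sqrt{k}$ by bounding the sum with $\int_0^k x^{-1/2}\,dx$, so the selection term is at most $\sqrt{2}\,\sigma k^{1/4}/\sqrt{n}$. Since $\sigma/\sqrt{n} \leq \sigma k^{1/4}/\sqrt{n}$ for $k\geq 1$, all three pieces are $O(\sigma k^{1/4}/\sqrt{n})$, giving the bound with a universal $c_2$.

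The step I expect to require the most care is the per-step conditional variance control. One must verify that conditioning on the \emph{adaptively} chosen history $H_{i-1}$ and on the random index $T_i$ keeps $\phi_{T_i}$ conditionally Gaussian with variance no larger than $\sigma^2/n$, while $W_i$ remains conditionally independent with the correct variance. The subtlety is that $T_i$ is itself a function of the noisy history (possibly with extra internal randomization), so one should argue that conditioning on $\{T_i = t\}$ reveals nothing about $\phi_t$ beyond what $H_{i-1}$ already carries; once this is established, the Gaussian identity of Lemma~\ref{lem: lemma2} (or the variance bound of Lemma~\ref{lem: variance bound on mutual info} in the non-Gaussian case) applies termwise.
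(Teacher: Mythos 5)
Your proposal is correct and follows essentially the same route as the paper's proof: the same triangle-inequality decomposition into distortion plus selection bias, Proposition~\ref{prop: absolute and squared error} for the selection term, Lemma~\ref{lem: composition} to decompose $I(T_{k+1};\phibf)$, the conditional-Gaussian variance bound together with the conditional independence of $T_i$ from $\phibf$ given $H_{i-1}$, and Lemma~\ref{lem: lemma2} applied termwise to get $\sigma^2/(2\omega_i^2)$ per query. Your explicit treatment of the specialization $\omega_j=\sigma j^{1/4}$ and of the subtlety in conditioning on the adaptively chosen index is a welcome elaboration of steps the paper handles tersely, but it is the same argument.
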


\begin{proof}[Proof of Proposition \ref{prop: nsquared}]
	\begin{eqnarray*}
		\E[|Y_{T_{k+1}} - \mu_{T_{k+1}} |]  & \leq & \E[|Y_{T_{k+1}} - \phi_{T_{k+1}} |] +\E[|\phi_{T_{k+1}} - \mu_{T_{k+1}} |] \\
		&\leq &
		\sqrt{\frac{2\omega_{k+1}}{\pi n}}
		+\E[|\phi_{T_{k+1}} - \mu_{T_{k+1}} |] \\
		&\leq & \sqrt{\frac{2\omega_{k+1}}{\pi n}}+ \frac{\sigma}{\sqrt{n}} + c\cdot \sigma \sqrt{\frac{ 2I(T_{k+1}; \phibf)}{n}}
	\end{eqnarray*}
	where $c$ is a universal numerical constant. The second inequality  uses the expected value of the half-normal distribution, and the third inequality follows from Proposition \ref{prop: absolute and squared error}.
	
	The desired result follows by bounding the mutual information term. Applying Lemma \ref{lem: composition},  we have
	\[
	I(T_{k+1} ; \phibf) \leq \sum_{i=1}^{k} I(Y_{T_i} ; \phi_{T_i} | H_{i-1}, T_i )
	\]
	where $
	H_{k} = \left(T_1, Y_{T_1}, T_2, Y_{T_2},..., T_k, Y_{T_k}\right)
	$ denotes the history of interaction up to time $k$. Because the $\phi_i$'s are jointly Gaussian, and observation noise is Gaussian, the posterior $\Prob(\phi_j =\cdot | H_{i-1})$ is Gaussian with conditional variance less than $\sigma^2/n$.  Moreover, conditional on $H_{i=1}$,  $T_i$ is independent of $(\phi_1, \phi_2...)$ and $(Y_1, Y_2, ....)$, so $\phi_{T_i}| H_{i-1}, T_i$ is normally distributed with variance less than $\sigma^2/n$.
	
	Lemma \ref{lem: lemma2} implies
	\[
	I(Y_{T_i} ;  \phi_{T_i} |H_{i-1}, T_i) \leq \frac{\sigma^2/n}{2\omega_{i}^{2}/n} = \frac{\sigma^2}{2\omega^{2}_{i}}
	\]
	and therefore
	\[
	I(T_{k+1}; \phibf) \leq \left(\frac{\sigma^2}{2}\right)\sum_{i=1}^{k} \omega_i^{-2}.
	\]
	Plugging this into the earlier bound implies
	\[
	\E[|Y_{T_{k+1}} - \mu_{T_{k+1}} |]  \leq  \sqrt{\frac{2\omega_{k+1}}{\pi n}}+ \frac{\sigma}{\sqrt{n}} + c\sigma^2 \sqrt{\frac{\sum_{i=1}^{k} \omega_i^{-2} }{n}},
	\]
	which is the desired result.
\end{proof}

\begin{example}[Adaptively fitting a linear model \cite{dwork2014preserving}]\label{eg:overfittinglinear}
	A data-analyst collects $n$ samples of $\theta_1,...\theta_n \overset{iid}{\sim} \mathcal{D} $ of  $k$ dimensional vectors drawn from an unknown distribution and $\hat{\theta} $ is the average of the $\theta_i$'s. She would like to find a unit vector $x$ that is highly correlated with this distribution, in the sense that $\E_{\theta \sim \mathcal{D}}[ x^T \hat{\theta}]$ is large. To do this, she looks to maximize $x^T \hat{\theta}$.
	
	Suppose $\mathcal{D}= \N(0,\sigma^2 I)$, so $\E_{\theta \sim \mathcal{D}} [x^T \theta] = 0$ for all $x$. Nevertheless, the analyst can still find a vector with a large inner product with $\hat{\theta}$. Imagine she collects $k$ measurements of $\hat{\theta}$, allowing her to completely uncover the vector, and then chooses $X =\hat{\theta}/ \|\hat{\theta}\|$ Then, since $\hat{\theta} \sim \N(0,\frac{\sigma^2}{n} I)$,
	\[
	\E[ X^T \hat{\theta}] = \E[\| \hat{\theta}\| = \Theta\left( \sigma\sqrt{\frac{k}{n}}\right).
	\]
\end{example}

\section{Mutual-information vs max-information}\label{sec_app:maxinfo}

Recent work has proposed max-information \cite{dwork2015reusable}, and its generalization, approximate max-information, as a metric to control the error of a worst-case, adversarial, data analyst. This notion was motivated by techniques from differential privacy, which shows that a differentially private mechanism have low approximate max-information, and hence has low error even when the analyst is adversarial.

%Because it's defined for the worst case-setting, we show that the approximate max-information can exhibit a counter-intuitive behavior, namely that when the signal in the data increases, both selection bias and mutual information decreases to zero but the approximate max-information can be bounded away from zero.

To understand the relationship between mutual--information and max--information, we revisit the rank selection example from Section \ref{selective}. While max--information provides a powerful tool for analyzing the behavior of a worst-case adaptive protocol, this example shows it can exhibit counter-intuitive behavior when analyzing specific selection procedures.

We assume
\[
\phi_i \sim \begin{cases}
\N(\mu, \sigma^2) & \text{If } i=I^*\\
\N(0, \sigma^2) & \text{If } i\neq I^*
\end{cases}
\]
where $ \mu \geq 0$. The analyst selects $T=\arg\max_{i} \phi_i.$ As discussed in Section \ref{selective}, bias decreases as the signal strength  $\mu$ increases, and this follows transparently from our information theoretic bound. Indeed, as $\mu$ grows $T$ concentrates on $I^*$, and
\[
I(T; \phibf) = H(T) = \sum_{i=1}^{m} \Prob(T=i)\log\left(\frac{1}{\Prob(T=i)}\right)
\]
decreases. This scaling is intuitive. As $T$ concentrates on $I^*$ the selection protocol becomes less and less adaptive, and hence we expect both the selection bias as well as the bias bound which depends on $I(T; \phibf)$ to decrease.

In contrast max-information has the opposite scaling in this setting: it increases as the signal $\mu$ increases and bias decreases. In fact,
\begin{eqnarray*}
	I_{\infty}(T ; \phibf) &=& \max_{i, {\bm y}} \log\left( \frac{\Prob(\phibf = {\bm y}, T=i) )}{\Prob(\phibf = {\bm y})\Prob(T=i)} \right) \\
	&=& \max_{i, {\bm y}} \log\left( \frac{\Prob(T=i | \phibf = {\bm y}) )}{\Prob(T=i)} \right)\\
	& =& \max_{i} \log\left( \frac{1}{\Prob(T=i)}\right),
\end{eqnarray*}

where the maximum is over ${\bm y} \in \mathbb{R}^{m}$ and is attained  for any ${\bm y}$ with $i=\arg\max_{j} y_j$. By symmetry, $I_{\infty}(T ; \phibf) = \log\left( \frac{m-1}{ \Prob(T \neq I^*) } \right)$, which \emph{increases} as the the probability of selecting $I^*$ grows. Therefore, max--information is minimized when the data analyst inappropriately uses rank-selection even though there is no signal in the data ($\mu = 0$). As $\mu$ increases, so the data-analyst detects $I^*$ with probability tending to 1, max-information increases toward infinity.

The related notion of approximate max-information can exhibit similar counter-intuitive behavior. Following \cite{dwork2015generalization}, the approximate max-information at level $\beta$ is defined to be 
\[ 
I^\beta_{\infty}\left( T ; \phibf  \right) := \underset{\substack{\mathcal{O}  \subset [m]\times \mathbb{R}^{m} \\ \Prob((T, \phibf) \in \mathcal{O})\geq \beta } }{\max} \log\left( \frac{\Prob( (T; \phibf) \in \mathcal{O}  )-\beta}{\Prob( (T; \tilde{\phibf}) \in \mathcal{O}) }  \right).
\]

\begin{lem}
	If $T= f(\phibf)$ is a deterministic function of $\phibf$, then
	\[ 
	I^\beta_\infty(T ; \phibf) \geq \underset{\substack{i \leq m \\ \Prob(T=i) \geq 2\beta}  }{\max} \log\left( \frac{1}{\Prob(T=i)} \right) - \log(2)
	\]
	for any $i\in \{1,...m\}$ with $\Prob(T=i) \geq 2\beta$. 
\end{lem}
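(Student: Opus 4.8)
The plan is to lower-bound the maximum defining $I^\beta_\infty(T;\phibf)$ by evaluating its ratio at one cleverly chosen event $\mathcal{O}$ for each index $i$, exploiting the fact that $T=f(\phibf)$ is deterministic. Each index $i$ determines a preimage set $A_i \equiv f^{-1}(i) = \{\bm y \in \mathbb{R}^m : f(\bm y)=i\}$, and the events $\{T=i\}$ and $\{\phibf \in A_i\}$ coincide exactly. For a fixed $i$ with $\Prob(T=i)\geq 2\beta$, I would take the product-type event $\mathcal{O}_i \equiv \{i\}\times A_i \subset [m]\times\mathbb{R}^m$, verify that it is feasible in the maximization, and show it already yields the claimed bound.

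First I would compute the two probabilities in the definition. Under the joint law, $\Prob((T,\phibf)\in\mathcal{O}_i)=\Prob(T=i,\phibf\in A_i)=\Prob(T=i)$, since the determinism of $T$ means $T=i$ already forces $\phibf\in A_i$; in particular the feasibility constraint $\Prob((T,\phibf)\in\mathcal{O}_i)\geq\beta$ holds because $\Prob(T=i)\geq 2\beta\geq\beta$. Under the product law, independence of $T$ and $\tilde\phibf$ together with $\tilde\phibf$ sharing the marginal of $\phibf$ gives $\Prob((T,\tilde\phibf)\in\mathcal{O}_i)=\Prob(T=i)\,\Prob(\phibf\in A_i)=\Prob(T=i)^2$.

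Substituting into the ratio yields $(\Prob(T=i)-\beta)/\Prob(T=i)^2$, and here the hypothesis $\Prob(T=i)\geq 2\beta$ does the real work: it implies $\Prob(T=i)-\beta \geq \tfrac12\Prob(T=i)$, so the ratio is at least $1/(2\Prob(T=i))$. Taking logarithms gives $\log(1/\Prob(T=i))-\log 2$, and since $\mathcal{O}_i$ is a single feasible choice in the maximum defining $I^\beta_\infty$, this is already a lower bound on $I^\beta_\infty(T;\phibf)$; maximizing over all $i$ with $\Prob(T=i)\geq 2\beta$ finishes the proof. The one step worth stating carefully, rather than the routine algebra, is the use of determinism: it is precisely because $T$ is a function of $\phibf$ that the joint mass on $\mathcal{O}_i$ stays as large as $\Prob(T=i)$ while the product mass shrinks to $\Prob(T=i)^2$, producing exactly the gap that max-information detects.
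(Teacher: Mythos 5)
Your proof is correct and follows essentially the same route as the paper's: both fix an index $i$ with $\Prob(T=i)\geq 2\beta$, test the definition of $I^\beta_\infty$ on the event $\mathcal{O}_i=\{i\}\times f^{-1}(i)$ (the paper writes it as $\{(i,x): x\in \Phi_i\}$), compute joint mass $\Prob(T=i)$ versus product mass $\Prob(T=i)^2$, and use $\Prob(T=i)-\beta\geq \tfrac{1}{2}\Prob(T=i)$ to extract the $\log 2$ loss. No gaps; your explicit remark on where determinism enters matches the paper's implicit use of $\Prob(\phibf\in\Phi_i\,|\,T=i)=1$.
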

\begin{proof}
	Let $\tilde{\phibf}$ denote a random variable drawn from the marginal distribution of $\phibf$, but drawn independently of $T$. Define $\Phi_{i} = \{x \in \mathbb{R}^{m} : f(x)=i   \}$ to be the decision region corresponding to element $i.$ Then
	\[
	\Prob(T=i, \phibf \in \Phi_i) = \Prob(T=i) \Prob(\phibf \in \Phi_i | T=i)= \Prob(T=i)
	\]
	whereas
	\[
	\Prob(T=i, \tilde{\phibf} \in \Phi_i) = \Prob(T=i) \Prob(\tilde{\phibf} \in \Phi_i )= \Prob(T=i)^2.
	\]
	
	If $\Prob(T=i) \geq 2\beta$, then $\mathcal{O}= \{(i , x) : x \in \Phi_{i} \}$ is feasible, and therefore
	\begin{eqnarray*}
		I^\beta_{\infty}\left( T ; \phibf  \right) &\geq& \log\left( \frac{\Prob(T=i, \phibf \in \Phi_i)-\beta}{\Prob(T=i, \tilde{\phibf} \in \Phi_i)} \right) \\
		&= & \log\left( \frac{\Prob(T=i)-\beta}{\Prob(T=i)^2} \right) \\
		&\geq & \log\left(\frac{\frac{1}{2}\Prob(T=i)}{\Prob(T=i)^2} \right)\\
		&=&  \log\left( \frac{1}{\Prob(T=i)} \right) - \log(2).
	\end{eqnarray*}
	
\end{proof}

When there is signal in the data, $\Prob(T=i)$ is small for those $\phi_i$ that do not have signal (i.e. a true null). When $\beta$ is sufficiently small so that $\Prob(T=i) \geq 2\beta$, the above lemma shows that $I^\beta_\infty(T ; \phibf)$ can be large, and can increase as $\Prob(T=i)$ deviates farther from the uniform distribution.

% use section* for acknowledgment
\section*{Acknowledgment}

The authors would like to thank John Duchi, Cynthia Dwork, Vitaly Feldman, Aaron Roth, Adam Smith, Thomas Steinke, David Tse and Tsachy Weissman for feedback. 
J.Z. is supported by NSF AF 1763191 and grants from the Chan-Zuckerberg Initiative.

% Can use something like this to put references on a page
% by themselves when using endfloat and the captionsoff option.
\ifCLASSOPTIONcaptionsoff
  \newpage
\fi

% trigger a \newpage just before the given reference
% number - used to balance the columns on the last page
% adjust value as needed - may need to be readjusted if
% the document is modified later
%\IEEEtriggeratref{8}
% The "triggered" command can be changed if desired:
%\IEEEtriggercmd{\enlargethispage{-5in}}

% references section

% can use a bibliography generated by BibTeX as a .bbl file
% BibTeX documentation can be easily obtained at:
% http://mirror.ctan.org/biblio/bibtex/contrib/doc/
% The IEEEtran BibTeX style support page is at:
% http://www.michaelshell.org/tex/ieeetran/bibtex/
%\bibliographystyle{IEEEtran}
% argument is your BibTeX string definitions and bibliography database(s)
%\bibliography{IEEEabrv,../bibIEEEtran/paper}
%
% <OR> manually copy in the resultant .bbl file
% set second argument of \begin to the number of references
% (used to reserve space for the reference number labels box)

\bibliographystyle{IEEEtran}
\bibliography{references}
%IEEEabrv

% biography section
% 
% If you have an EPS/PDF photo (graphicx package needed) extra braces are
% needed around the contents of the optional argument to biography to prevent
% the LaTeX parser from getting confused when it sees the complicated
% \includegraphics command within an optional argument. (You could create
% your own custom macro containing the \includegraphics command to make things
% simpler here.)
%\begin{IEEEbiography}[{\includegraphics[width=1in,height=1.25in,clip,keepaspectratio]{mshell}}]{Michael Shell}
% or if you just want to reserve a space for a photo:

% if you will not have a photo at all:
\begin{IEEEbiographynophoto}{Daniel Russo}
Daniel Russo received his PhD from Stanford University in 2015. He was a postdoc at Microsoft Research from 2015 to 2016 and an assistant professor at Northwestern's Kellogg School of Management from 2016 to 2017. He is currently an assistant professor in the Decision, Risk, and Operations Division at Columbia Business School. 
\end{IEEEbiographynophoto}

% insert where needed to balance the two columns on the last page with
% biographies
%\newpage

\begin{IEEEbiographynophoto}{James Zou}
James Zou received his Ph.D. in applied mathematics from Harvard University in 2014. He was a postdoc at Microsoft Research from 2014 to 2016 and is currently an assistant professor at Stanford University. 
\end{IEEEbiographynophoto}

% You can push biographies down or up by placing
% a \vfill before or after them. The appropriate
% use of \vfill depends on what kind of text is
% on the last page and whether or not the columns
% are being equalized.

%\vfill

% Can be used to pull up biographies so that the bottom of the last one
% is flush with the other column.
%\enlargethispage{-5in}

% that's all folks
\end{document}